\crefname{equation}{Eq.}{Eqs.}
\Crefname{equation}{Eq.}{Eqs.}
\crefname{figure}{Fig.}{Figs.}
\Crefname{figure}{Fig.}{Figs.}
\crefname{theorem}{Thm.}{Thms.}
\Crefname{theorem}{Theorem}{Theorems}
\Crefname{table}{Tab.}{Tabs.}
\newtheorem{definition}{Definition}[section]
\newtheorem{assumption}{Assumption}[section]
\newtheorem{lemma}{Lemma}[section]
\newtheorem{theorem}{Theorem}[section]
\newtheorem*{theorem*}{Theorem}
\definecolor{eqcolor1}{rgb}{0.745, 0.314, 0.275}
\definecolor{eqcolor2}{rgb}{0.863, 0.529, 0.451}
\definecolor{eqcolor3}{rgb}{0.706, 0.373, 0.529}
\definecolor{eqcolor4}{rgb}{0.235, 0.490, 0.569}
\definecolor{eqcolor5}{rgb}{0.529, 0.569, 0.667}
\definecolor{eqcolor6}{rgb}{0.804, 0.725, 0.588}
\newcommand{\N}{\mathbb{N}}
\newcommand{\R}{\mathbb{R}}
\newcommand{\spacefont}[1]{\mathcal{#1}}
\newcommand{\X}{\spacefont{X}}
\newcommand{\Y}{\spacefont{Y}}
\newcommand{\Z}{\spacefont{Z}}
\newcommand{\spX}{\mathcal{X}}
\newcommand{\norm}[1]{\lVert #1 \rVert}
\newcommand{\inner}[1]{\langle #1 \rangle}
\DeclareMathOperator{\tracespaced}{tr}
\newcommand{\trace}{\tracespaced}
\newcommand{\linearoperatorfont}[1]{\mathsf{#1}}
\DeclareMathOperator{\range}{range}
\DeclareMathOperator{\linspan}{span}
\DeclareMathOperator{\kernel}{kernel}
\newcommand{\LiiX}{L^2(X)}
\newcommand{\LiiYdot}{L^2(\ddot{Y})}
\newcommand{\LiiZ}{L^2(Z)}
\newcommand{\bgbox}[2]{#2}
\newcommand{\mX}{P_{X}} 
\newcommand{\mY}{P_{Y}} 
\newcommand{\mZ}{P_{Z}} 
\newcommand{\mXY}{P_{X,Y}} 
\newcommand{\mXYZ}{P_{X,Y,Z}}
\newcommand{\E}{\mathbb{E}}
\newcommand\independent{\protect\mathpalette{\protect\independenT}{\perp}}
\def\independenT#1#2{\mathrel{\rlap{$#1#2$}\mkern2mu{#1#2}}}
\DeclareMathOperator*{\argmin}{arg\,min}
\providecommand{\SVDd}[1]{[\![#1]\!]_d}
\newcommand{\covs}[2]{\linearoperatorfont{\Sigma}_{#1#2\vphantom{\ddot{#1}}}}
\newcommand{\covsPrivate}[2]{\linearoperatorfont{\Sigma}_{#1#2}}
\newcommand{\covsrddot}[2]{\covsPrivate{#1}{\ddot{#2}}}
\newcommand{\pcovs}[2]{\linearoperatorfont{\Sigma}_{#1 #2 \vphantom{\ddot{#1}} \boldsymbol{\cdot}\, Z}}
\newcommand{\pcovsPrivate}[2]{\linearoperatorfont{\Sigma}_{#1 #2 \boldsymbol{\cdot}\, Z}}
\newcommand{\pcovsrddot}[2]{\pcovsPrivate{#1}{\ddot{#2}}}
\newcommand{\DCE}{\pcovsrddot{X}{Y}}
\newcommand{\U}{\linearoperatorfont{U}}
\newcommand{\V}{\linearoperatorfont{V}}
\newcommand{\W}{\linearoperatorfont{W}}
\newcommand{\M}{\linearoperatorfont{M}}
\newcommand{\Nop}{\linearoperatorfont{N}}
\newcommand{\F}{\linearoperatorfont{F}}
\newcommand{\That}{\widehat{T}_n}
\newcommand{\ddy}{\ddot{y}}
\newcommand{\Unnest}{\widehat{\U}_\theta}
\newcommand{\unnest}[1]{\widehat{u}_{\theta,#1}}
\newcommand{\vnnest}[1]{\widehat{v}_{\theta,#1}}
\newcommand{\wnnest}[1]{\widehat{w}_{\theta,#1}}
\newcommand{\Unnwhite}{\widetilde{\U}_\theta}
\newcommand{\sqrtinvcovestUnnest}{\widehat{C}^{-1/2}_{\widetilde{U}_\theta\widetilde{U}_\theta}}
\newcommand{\covestUVnnest}{\widehat{C}_{\widehat{U}_\theta\widehat{V}_\theta}}
\newcommand{\covestUWnnest}{\widehat{C}_{\widehat{U}_\theta\widehat{W}_\theta}}
\newcommand{\covestWVnnest}{\widehat{C}_{\widehat{W}_\theta\widehat{V}_\theta}}
\newcommand{\unnestv}{\widehat{u}_\theta}
\newcommand{\vnnestv}{\widehat{v}_\theta}
\newcommand{\wnnestv}{\widehat{w}_\theta}
\newcommand{\traindata}{\mathcal{D}^{(\mathrm{train})}_m}
\newcommand{\testdata}{\mathcal{D}^{(\mathrm{test})}_n}
\newcommand{\Lossout}{\widehat{\mathcal{L}}_{\mathrm{out}}}
\newcommand{\Lossin}{ \widehat{\mathcal{L}}_{\mathrm{in}}}
\newcommand{\Regout}{\widehat{\Omega}_{\mathrm{out}}}
\newcommand{\Regin}{\widehat{\Omega}_{\mathrm{in}}}
\icmltitlerunning{Toward Scalable and Valid Conditional Independence Testing with Spectral Representations}
\begin{document}

\twocolumn[
    \icmltitle{Toward Scalable and Valid Conditional Independence Testing \texorpdfstring{\\}{} with Spectral Representations}


  \icmlsetsymbol{equal}{*}

  \begin{icmlauthorlist}
    \icmlauthor{Alek Fr\"{o}hlich}{csml,unige}
    \icmlauthor{Vladimir Kosti\'{c}}{csml,novisad}
    \icmlauthor{Karim Lounici}{ecole}
    \icmlauthor{Daniel Perazzo}{csml,unige}
    \icmlauthor{Daniel Tiezzi}{usp}
    \icmlauthor{Massimiliano Pontil}{csml,ucl}
  \end{icmlauthorlist}

  \icmlaffiliation{csml}{CSML, Istituto Italiano di Tecnologia, Genoa, Italy}
  \icmlaffiliation{unige}{DIBRIS, University of Genoa, Genoa, Italy}
  \icmlaffiliation{novisad}{Faculty of Science, University of Novi Sad, Novi Sad, Serbia}
  \icmlaffiliation{ecole}{CMAP, \'{E}cole Polytechnique, Palaiseau, France}
  \icmlaffiliation{usp}{Breast Disease Division, University of São Paulo, Ribeirão Preto, Brazil}
  \icmlaffiliation{ucl}{Department of Computer Science, University College London, London, UK}

  \icmlcorrespondingauthor{Alek Fr\"{o}hlich}{alek.frohlich@iit.it}

  \icmlkeywords{Conditional Independence Testing, Representation Learning, Partial Cross-Covariance Operator}

  \vskip 0.3in
]



\printAffiliationsAndNotice{}  

\begin{abstract}
Conditional independence (CI) is central to causal inference, feature selection, and graphical modeling, yet it is untestable in many settings without additional assumptions. Existing CI tests often rely on restrictive structural conditions, limiting their validity. Kernel methods using partial covariance operators offer a more principled approach but suffer from limited adaptivity and scalability. In this work, we explore whether representation learning can help address these limitations. Specifically, we focus on representations derived from the singular value decomposition of partial covariance operators and use them to construct a simple test statistic. 
We also introduce a bi-level contrastive algorithm to learn these representations. Our theory links representation learning error to test performance and establishes asymptotic validity and power guarantees.
Experiments on real and synthetic data suggest that this approach offers a principled and statistically grounded path toward scalable CI testing, bridging kernel-based theory with modern representation learning.\footnote{Code repository: \href{https://github.com/alekfrohlich/SCIT}{https://github.com/alekfrohlich/SCIT}.}
\end{abstract}

\section{Introduction}

Given random variables $X$, $Y$, and $Z$, the goal of conditional independence (CI) testing is to decide between
\begin{equation}
\label{eq:H0vsH1contiguous}
\mathcal{H}_0: X \independent Y \mid Z \quad \text{vs.} \quad \mathcal{H}_1: X \not\independent Y \mid Z,
\end{equation}
using an i.i.d. sample from their joint distribution $P_{X,Y,Z}$. CI is a fundamental concept in statistics \cite{dawid79conditionalindependence} and machine learning \cite{scholkopf_toward_2021}, underlying causal inference \cite{Pearl2009,Spirtes2001,Imbens2015}, graphical models \cite{Lauritzen1996-ts,Koller2009-oq}, and variable selection \cite{Candes2018,huang_kernel_2022}.
A concrete example arises in computational pathology when integrating multi-modal data $(X,Z)$ to predict patient outcomes $Y$.
Let $X$ represent a tumor's molecular profile and $Z$ its visual (histological) features.
Tumor phenotype is known to reflect the underlying molecular state \cite{PMID:35465400}.
As a result, while $X$ may correlate with $Y$, this association can be redundant if the histological patterns encoded in $Z$ already capture the same biological signal.
A CI test determines whether $X$ offers incremental predictive power for $Y$ beyond the information already contained in the image features $Z$.
Despite its far-ranging applications, CI testing in nonparametric settings is known to be fundamentally challenging.
\citet{shah_hardness_2020} established that any test controlling type I error uniformly over all conditionally independent distributions lacks power against \textit{any} alternative, helping explain why existing CI tests can fail to control type I error in practice.
The difficulty stems from the counterintuitive fact that any conditionally dependent sample can be approximated arbitrarily well by one that is conditionally independent.\footnote{Cf. \citep[Lemmas 3.2 \& A.1]{neykov_minimax_2021}.}
In other words, CI distributions are so rich that even strongly dependent data could have arisen from a CI model, unless structural assumptions are imposed on $\mXYZ$, for example that $P_{X,Y|Z=z}$ varies continuously with $z$.
This stands in stark contrast to the unconditional case, for which there exist uniformly valid tests, even in finite samples \citep{hoeffding1948anonparametric,Berrett2019nonparametric}.

\begin{figure*}[t]
    \centering
    \includegraphics[width=0.85\textwidth]{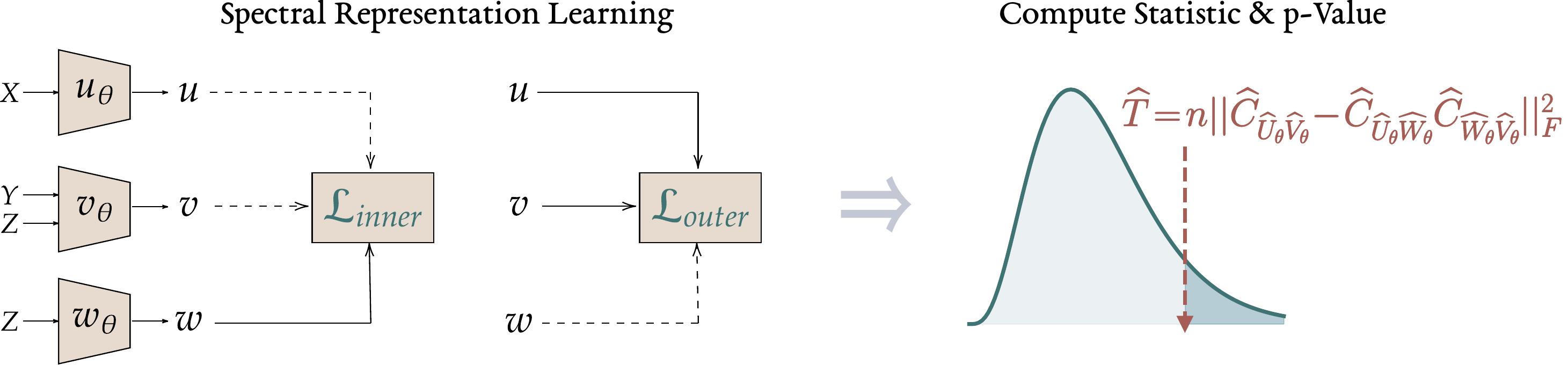}
    \caption{\textbf{SpectralCIT}'s testing pipeline. First the features $\unnestv,\vnnestv,\wnnestv$ are learned using \Cref{alg:learning_pcov} with whitening over the training set. Then, the test statistic $\That$ is computing using \Cref{eq:test_statistic} over the test set. Finally, a decision is made using the $1-\alpha$ quantile of the chi-squared distribution with $d^2$ degrees of freedom, where $d$ is the output dimension of the networks.}
    \label{fig:SCIT_diagram}
\end{figure*}

This realization has motivated a shift from designing universally valid tests to methods tailored for particular settings.
Kernel-based tests such as \textbf{KCIT} \citep{zhang_kernel-based_2012} and \textbf{RCIT} \cite{Strobl2018} rely on regressions from $Z$ to $X$ and $Y$, remaining valid as long as these regressions can be learned sufficiently well \cite{shah_hardness_2020,he2025on,pogodin2025practicalkerneltestsconditional}.
Model-x tests such as \textbf{GCIT} \cite{bellot2019gcit} or \textbf{DGCIT} \citep{shi2021doublegenerative} assume access to $P(X|Z)$, remaining valid when this distribution or a reliable approximation is available \cite{katsevich_power_2022,zhang_doubly_2024}.
Local permutation tests such as \textbf{NNLSCIT} \citep{li2023k} generate empirical p-values by permuting samples with clustered/binned $Z$ values, but incur substantial computational cost and depend on smoothness assumptions \cite{kim_local_2022}.
We defer a broader discussion on existing methods to \Cref{sec:related_work}.

Among these approaches, classical kernel-based tests stand out for modeling conditional dependence through the partial covariance operator, whose properties implicitly encode a wide range of structural assumptions on the joint distribution of $(X, Y, Z)$, including smoothness, sparsity, low-rank, and latent variable models.
Despite the generality of this operator-based framework, its practical impact has remained limited due to the lack of adaptivity and scalability of kernel methods \cite{pogodin2025practicalkerneltestsconditional,Ramdas2015}.


{\bf From kernels to representation learning.}
Recently, methods based on learning leading spectral features of statistical operators have shown promise across nonparametric inference tasks, including causal effect estimation \cite{sun25speciv,wang2022spectralrepresentationlearningconditional,meunier2025outcomeawarespectralfeaturelearning,meunier2025demystifyingspectralfeaturelearning}, reinforcement learning \cite{hu2024primaldualspectralrepresentationoffpolicy}, and learning dynamical systems \cite{turri2025self}, while remaining simple and scalable due to their connection to contrastive learning.
Motivated by the above examples and classical kernel CI tests, in this paper we explore whether spectral representation learning can address the limitations of kernel methods for scalable and valid CI testing.
In particular, we provide a contrastive learning algorithm to learn the leading spectral features of the partial covariance operator, using them to construct a simple test statistic reminiscent of the Hilbert-Schmidt Independence Criterion \cite{gretton2005hsic}.
We then analyze the behavior of the test statistic under both null and alternative, proving that it converges asymptotically to a chi-squared distribution under the null, and achieves power under the alternative.
Our approach is illustrated in \Cref{fig:SCIT_diagram}.

{\bf Contributions.} In summary, our main contributions are:\\
    {\bf i.} We introduce a simple, scalable algorithm that learns the leading spectral features of the partial covariance operator, overcoming the adaptivity and scalability bottlenecks of classical kernel CI tests.\\
     {\bf ii.} We provide a comprehensive theoretical analysis of CI testing using the learned representations in combination with a simple test statistic. This includes type I error and power guarantees along with the characterization of the null distribution as asymptotically chi-squared.\\
     {\bf iii.} We validate our theory on challenging real and synthetic data, including a novel nonsmooth, high-dimensional variant of the post-nonlinear model and multi-modal breast cancer data \citep{TCGA}, showing that our approach offers a principled and statistically grounded path toward scalable CI testing, bridging kernel-based theory with modern representation learning.

{\bf Paper organization.}
In \cref{sec:background}, we introduce notation and review statistical hypothesis testing, covariance operators, and spectral representation learning.
\cref{sec:representation_learning} presents an algorithm for learning partial covariance operators.
In \Cref{sec:ci_testing}, we present our operator framework for CI testing alongside our main theoretical results.
\cref{sec:experiments} reports numerical experiments.
\cref{sec:related_work} reviews related work.
All proofs are deferred to the appendix.

\section{Background}\label{sec:background}

{\bf Data spaces.} $X$, $Y$, $Z$ take values in measurable spaces $(\X, \mathcal{F}_\X)$, $(\Y, \mathcal{F}_\Y)$, $(\Z, \mathcal{F}_\Z)$ with marginals $\mX, \mY, \mZ$ and joint distributions $\mXY, P_{X,Z}, P_{Y,Z}, \mXYZ$.
In particular, $X,Y,Z$ might be continuous, discrete, or mixed random vectors of arbitrary dimensions $d_X, d_Y, d_Z$.
Throughout the paper, we take $\Ddot{Y} = (Y,Z)$ and assume $P_{XYZ}\ll P_{X}\times P_{YZ}$, $P_{XZ}\ll P_{X}\times P_{Z}$ and $P_{YZ}\ll  P_{Y}\times P_{Z}$, where $P \ll Q$ denotes $P$ is absolutely continuous w.r.t. $Q$.


{\bf Statistical hypothesis testing.} We are given a sample $\mathcal{D}_n = \{(X_i, Y_i, Z_i)\}_{i=1}^n$ and seek to decide between a null hypothesis $\mathcal{H}_0$ and an alternative hypothesis $\mathcal{H}_1$. This decision is based on a \emph{test statistic} $\That = T(\mathcal{D}_n)$, with large values of $\That$ typically providing evidence against $\mathcal{H}_0$. The test rejects $\mathcal{H}_0$ whenever $\That \geq c_\alpha$ where the \emph{critical value} $c_\alpha$ is chosen so that the \emph{Type I error}—the probability of incorrectly rejecting $\mathcal{H}_0$ when it is true—is controlled at a prescribed significance level $\alpha \in (0,1)$:
$
\mathbb{P}_{\mathcal{H}_0}(\That \geq c_\alpha) = \alpha.
$

Under the alternative, the \emph{power} of the test is the probability of correctly rejecting $\mathcal{H}_0$:
$
\text{Power} = \mathbb{P}_{\mathcal{H}_1}(\That \geq c_\alpha).
$
The goal is to design tests that achieve high power while controlling the Type I error at level $\alpha$. In nonparametric settings, this trade-off is particularly challenging due to the broad class of possible alternatives. Our objective is thus to develop conditional independence tests that rigorously control Type I error and retain competitive power in a wide range of structural scenarios. We refer to \Cref{app:backgroundtest} for more details on statistical hypothesis testing.

{\bf Function spaces.} For a random variable $A$, $L^2(A)$ denotes the space of square-integrable functions ($\E[f(A)^2] < \infty$).

{\bf Operators on Hilbert spaces.} Let $\mathcal{F,G}$ be Hilbert spaces. For a bounded operator $\mathsf{T}: \mathcal{G}\to\mathcal{F}$, we denote $\norm{\mathsf{T}}$ its operator norm, $\norm{\mathsf{T}}_{\mathrm{HS}}$ its Hilbert-Schmidt norm, and $\mathsf{T}^*$ its adjoint. Given a second operator $\mathsf{S}: \mathcal{G}\to\mathcal{F}$, we denote $\inner{\mathsf{T}, \mathsf{S}}_{\mathrm{HS}} = \trace(\mathsf{TS}^*)$ the Hilbert-Schmidt inner product. For matrices, the Hilbert-Schmidt norm coincides with Frobenious norm, which we denote $\norm{M}_{\mathrm{F}}$ for a matrix $M$. For $f \in L^2(A), g,h \in L^2(B)$, the rank-one operator $f\otimes g$ is defined as $[f\otimes g](h) = \inner{g,h}f$, generalizing $fg^\top$ for vectors $f,g$.

{\bf Covariance operators.} For random variables $A,B$ such that $P_{AB}$ is absolutely continuous with respect to $P_{A}\times P_{B}$, we define the cross-covariance operator as the unique bounded linear operator satisfying\footnote{This is a direct consequence of the Riesz representation theorem \citep[Theorem II.4]{Reed1981}.}
\begin{equation}\label{eq:cov_op}
    \inner{f, \covs{A}{B} g} = \E\big[f(A)g(B)\big] - \E\big[f(A)\big]\E\big[g(B)\big],
\end{equation}
for every $f \in L^2(A), g \in L^2(B)$. Covariance operators generalize (centered) covariance matrices $C_{AB}=\E (A-\E A)(B-\E B)^\top$ and characterize independence ($A\independent B$ iff $\norm{\mathsf{\Sigma}_{AB}}_{\mathrm{HS}} = 0$). Given a third random variable $C$, and assuming $\mathsf{\Sigma}_{AB},\mathsf{\Sigma}_{AC},\mathsf{\Sigma}_{CB}$ are well-defined, we define the partial cross-covariance operator as\footnote{This definition differs from RKHS-based formulations involving $\mathsf{\Sigma}_{CC}^{-1}$ (e.g. \cite{Strobl2018}). Working directly on $L^2$ spaces, $\mathsf{\Sigma}_{CC}$ acts as the identity on the subspace of centered functions, eliminating the need for an explicit inverse.}
\begin{equation}\label{eq:pcov_op}
    \mathsf{\Sigma}_{AB\cdot C} = \covs{A}{B} - \covs{A}{C}\covs{C}{B}.
\end{equation}
Partial covariance operators generalize partial covariances and characterize conditional independence ($A \independent B | C$ iff $\norm{\mathsf{\Sigma}_{A(B,C)\cdot C}}_{\mathrm{HS}} = 0$; \cite{Strobl2018}).

{\bf Singular value decomposition (SVD).} A compact operator between Hilbert spaces $\mathsf{T}\!:\! \mathcal{G} \to \mathcal{F}$ can be written as $\mathsf{T} = \sum_{i=1}^\infty \sigma_i \phi_i \otimes \psi_i$, where $\sigma_1 \geq \sigma_2 \geq \dots \geq 0, \sigma_i \to 0$, are scalars (singular values) and $\phi_i, \psi_j$ are orthonormal functions (singular functions) in $\mathcal{F,G}$. Furthermore, by  Eckart-Young-Mirsky’s theorem \cite{Eckart1936}, the best rank-$d$ approximation of $\mathsf{T}$ with respect to any unitarily invariant norm 
is given by $[\![\mathsf{T}]\!]_d = \sum_{i=1}^d \sigma_i \phi_i \otimes \psi_i = \mathsf{\Phi \Sigma \Psi}^\star$, where $\mathsf{\Phi} \!=\! \left[ \phi_1 \vert \cdots \vert \phi_d \right]:\R^d \to \mathcal{F}$ and $\mathsf{\Psi} \!=\! \left[ \psi_1 \vert \cdots \vert \psi_d \right]:\R^d \to \mathcal{G}$ act via $\mathsf{\Phi}\alpha = \sum_{i=1}^d \alpha_i \phi_i$ and $\mathsf{\Psi}\beta = \sum_{i=1}^d \beta_i \psi_i$, and $\mathrm{\Sigma} = \text{diag}(\sigma_1, \dots, \sigma_d)$.


{\bf Spectral representation learning.} To learn the rank-$d$ truncated SVD of a compact covariance operator $\covs{A}{B}$, \citet{kostic2024neuralconditional} have shown that it suffices to minimize the following regularized loss for $\gamma > 0$
\begin{equation*}
\begin{aligned}
\widehat{\mathcal{L}}_{\gamma}(\theta) &= -\frac{2}{m}\sum_{i=1}^m\inner{\overline{u}_\theta(A_i), M_\theta \overline{v}_\theta(B_i)}^2 \\
&\quad + \frac{1}{m(m-1)} \sum_{i\neq j}^m\inner{\overline{u}_\theta(A_i), M_\theta \overline{v}_\theta(B_j)}  + \gamma\, \widehat{\Omega}(\theta),
\end{aligned}
\end{equation*}
where $\{(A_i,B_i)\}_{i=1}^m$ is a batch sampled i.i.d. from $P_{AB}$, $\overline{u}_\theta,\overline{v}_\theta$ are neural nets whose outputs' have been empirically centered ($\overline{u}_\theta(A_i) = u_\theta(A_i) - \frac{1}{m}\sum_{j=1}^m u_\theta(A_j)$), $M_\theta$ is a matrix, and $\widehat{\Omega}(\cdot)$ is orthonormality regularization 
\begin{equation*}
\widehat{\Omega}(\theta) = \norm{\widehat{C}_{U_\theta U_\theta} - I_d}_F^2 + \norm{\widehat{C}_{V_\theta V_\theta} - I_d}_F^2.
\end{equation*}
The connection with the SVD of $\mathsf{\Sigma}_{AB}$ follows from the Eckart–Young–Mirsky theorem (cf.~\Cref{thm:eckart_young_mirsky_loss}).
In particular, minimizing
\begin{equation*}
\mathcal{L}_\gamma(\theta) :=\norm{\mathsf{\Sigma}_{AB} - \mathsf{U}_\theta\mathsf{M}_\theta\mathsf{V}_\theta^*}_\mathrm{HS}^2 - \norm{\mathsf{\Sigma}_{AB}}_\mathrm{HS}^2 + \gamma\,\Omega(\theta)
\end{equation*}
yields the rank-$d$ truncated SVD of $\mathsf{\Sigma}_{AB}$: $\widehat{u}_{\theta,i}$ and $\widehat{v}_{\theta,i}$, $i \in [d]$, correspond to the left and right singular functions of $\mathsf{\Sigma}_{AB}$, while $\mathsf{M}_\theta$ is the diagonal matrix containing the associated singular values.
The empirical loss $\widehat{\mathcal{L}}_{0}(\theta)$ is simply a U-statistic estimator of $\mathcal{L}_0(\theta)$ \cite{kostic2024neuralconditional,sun25speciv}.


Prior work on spectral representation learning focuses on estimation and uncertainty quantification, rather than on independence or conditional independence testing.
Extending this framework to CI testing is nontrivial: the partial covariance operator $\mathsf{\Sigma}_{A(B,C)\cdot C}$ involves residualization with respect to $C$, which is not directly observable from data and must be handled implicitly.
This fundamentally changes the representation learning problem and leads to the bilevel formulation introduced in \Cref{sec:representation_learning}.
As a result, both the representation learning stage and the subsequent statistical analysis in \Cref{sec:ci_testing} differ substantially from existing literature.
\section{Learning Partial Covariance Operators}
\label{sec:representation_learning}

In this section, we describe a algorithm for learning the leading spectral features of the partial covariance operator $\DCE$ associated with a random triple $(X,Y,Z)$, using an i.i.d. sample $\traindata = \{(X_i, Y_i, Z_i)\}_{i=1}^m$ from their joint distribution $\mXYZ$. Following \citet{kostic2024neuralconditional}, we make the following mild regularity assumption for the SVD of $\DCE$ to be well-defined.
%
\begin{assumption}\label{ass:dce_compact}
$\DCE: L^2(\ddot{Y}) \to L^2(X)$ is a compact operator.
\end{assumption}
This assumption holds for a large class of discrete and continuous distributions, including those without densities with respect to the Lebesgue measure.
A sufficient condition is given by the Radon–Nikodym derivatives $\kappa_{X,\ddot{Y}} \!=\! \frac{dP_{X,\ddot{Y}}}{d(P_{X}\times P_{\ddot{Y}})}$ and     $\kappa_{X,Z} \!=\! \frac{dP_{X,Z}}{d(P_{X}\times P_{Z})}$ being square-integrable: $\E_{P_{X,\ddot{Y}}}[\kappa_{X,\ddot{Y}}(X,\ddot{Y})^2], \E_{P_{X,Z}}[\kappa_{X,Z}(X,Z)^2] < \infty$.

{\bf Variational formulation of SVD.}
Given $d\!\in\!\mathbb{N}$, our aim is to learn the best rank-$d$ approximation of $\DCE$, that is,
\begin{equation}\label{eq:pco_svd}
[\![\DCE]\!]_d = \sum_{i=1}^d \sigma_i u_i\otimes v_i,    
\end{equation}
where $\sigma_i\!\geq\!0$, $u_i\!\in\!\LiiX$, and $v_i\!\in\!\LiiYdot$ are the leading singular values and the corresponding left and right singular functions of $\DCE$, respectively.

As in \Cref{sec:background}, we characterize \eqref{eq:pco_svd} via a variational formulation.
Let $U \!=\! \left[u_1(X),\dots,u_d(X)\right]^\top$, $V \!=\! [v_1(\ddot{Y}),\dots,v_d(\ddot{Y})]^\top$, and $\mathsf{M} = \mathrm{diag}(\sigma_1,\dots,\sigma_d)$, and denote by $\mathsf{U} \!=\! \left[u_1\,\vert\cdots\vert u_{d}\right]\!:\! \R^d \!\to\! L^2(X)$ and $\mathsf{V} \!=\! \left[v_1\,\vert\cdots\vert v_{d}\right]\!:\!\R^d \!\to\! L^2(\ddot{Y})$ the associated feature operators. The truncated SVD of $\DCE$ can then be expressed as
%
\begin{equation}
\label{eq:pco_svd_variational}
\begin{aligned}
    (\mathsf{U}, \mathsf{M}, \mathsf{V}) \in \argmin_{\tilde{\mathsf{U}}, \tilde{\mathsf{M}}, \tilde{\mathsf{V}}} &\bgbox{eqcolor1}{\trace\!\left[C_{\tilde{U}\tilde{U}} \tilde{\mathsf{M}}C_{\tilde{V}\tilde{V}} \tilde{\mathsf{M}}^\top \right]} \\
    \bgbox{eqcolor2}{-2 &\trace\!\left[C_{\tilde{U}\tilde{V}}\tilde{\mathsf{M}}^\top \right]} \\
    \bgbox{eqcolor3}{+2 &\trace\!\big[ \tilde{\mathsf{M}}^\top\tilde{\mathsf{U}}^*\mathsf{\Sigma}_{XZ}\mathsf{\Sigma}_{Z\ddot{Y}}\tilde{\mathsf{V}}\big],}
\end{aligned}
\end{equation}
where the covariance matrices are centered.
In contrast to the unconditional setting, the final term in \eqref{eq:pco_svd_variational} involves a composition of covariance operators, $\mathsf{\Sigma}_{XZ}\mathsf{\Sigma}_{Z\ddot{Y}}$, and cannot be directly estimated from data.
\emph{This constitutes the main technical challenge in learning $\DCE$.}

{\bf Low-rank auxiliary problem.}
To overcome this, we use the cyclic property of the trace and observe that the operator $\mathsf{\Sigma}_{Z\ddot{Y}}\tilde{\mathsf{V}}\tilde{\mathsf{M}}^\top\tilde{\mathsf{U}}^*\mathsf{\Sigma}_{XZ}\colon\LiiZ\!\to\!\LiiZ$ is of rank at most $d$. Therefore, its symmetrization is of rank at most $2d$, and hence admits a singular value decomposition of the form $\mathsf{W} \mathsf{N} \mathsf{W}^*$, where $\mathsf{W} \!=\! \left[w_1\,\vert\cdots\vert w_{2d}\right]\colon \mathbb{R}^{2d}\!\to\!\LiiZ$ for an orthonormal system $w_i\in\LiiZ$ and $\mathsf{N}$ is a diagonal matrix with nonnegative entries. 
As the trace is linear and invariant under transposition, applying the same variational principle as in \Cref{sec:background} gives
\begin{equation}
\label{eq:trace_outer_via_inner_opt}
\bgbox{eqcolor3}{2 \trace\!\left[\tilde{\mathsf{M}}^\top\tilde{\mathsf{U}}^*\mathsf{\Sigma}_{XZ}\mathsf{\Sigma}_{Z\ddot{Y}}\tilde{\mathsf{V}}\right] = \trace\left[\mathsf{W}\mathsf{N}\mathsf{W}^*\right],}
\end{equation}
%
%
where $(\mathsf{W},\mathsf{N})$ solves the inner optimization problem
%
\begin{equation}
\label{eq:pco_svd_variational_inner}
\begin{aligned}
    (\mathsf{W},\mathsf{N}) \in \argmin_{\tilde{\mathsf{W}},\tilde{\mathsf{N}}} &\bgbox{eqcolor4}{ \trace\!\left[ \tilde{\mathsf{N}} C_{\tilde{W}\tilde{W}}\tilde{\mathsf{N}}^\top C_{\tilde{W}\tilde{W}}\right]} \\
    \bgbox{eqcolor5}{- &\trace\!\left[(\tilde{\mathsf{N}}^\top\!\!+\! \tilde{\mathsf{N}})C_{\tilde{W}\tilde{U}}\tilde{\mathsf{M}}C_{\tilde{V}\tilde{W}}\right],}
\end{aligned}
\end{equation}
enabling us to compute the last term in \eqref{eq:pco_svd_variational} via \eqref{eq:trace_outer_via_inner_opt}.
Solving the bi-level optimization\footnote{For a refresher on bi-level optimization, see \citep{franceschi2025} and references therein.} problem obtained by combining Eqs. \eqref{eq:pco_svd_variational} and \eqref{eq:pco_svd_variational_inner} yields the representation
\begin{equation}\label{eq:trunc_svd}
    [\![\DCE]\!]_d= \mathsf{U} [C_{UV} - C_{UW}C_{WV}] \mathsf{V}^*.
\end{equation}
%
This expression highlights that the conditional dependence structure between $X$ and $Y$ given $Z$ is captured by the matrix $C_{UV} - C_{UW}C_{WV}$, provided that appropriate representations $(U,V,W) \!=\! (u(X), v(\ddot{Y}), w(Z))$ are used.
A derivation of the above bi-level formulation is given in \Cref{sec:app_learning_pcov}.

{\bf Spectral representation learning.}
Using the bi-level variational formulation of the truncated SVD of the partial cross-covariance operator in \eqref{eq:trunc_svd}, we extend the spectral contrastive algorithm of \citet{kostic2024neuralconditional} to learn the spectral features $u, v, w$.
We parametrize these features with neural networks $u_\theta\!:\! \spX \to \R^d, v_\theta\!:\! \Y\times\Z\to\R^d, w_\theta\!:\! \Z\to\R^{2d}$, and minimize empirical losses derived from Eqs. \eqref{eq:pco_svd_variational} and \eqref{eq:pco_svd_variational_inner} using U-statistics \citep{hoeffding1948b}.
The full-batch losses are shown below.

\begin{equation*}\label{eq:empirical_losses1}
\begin{aligned}
\Lossout &= \bgbox{eqcolor1}{\frac{1}{m(m-1)}\sum_{i\neq j}^m \inner{\overline{u}_i, M \overline{v}_j}^2} \bgbox{eqcolor2}{- \frac{2}{m}\sum_{i=1}^m \inner{\overline{u}_i, M \overline{v}_i}} \\
&\;\; +\bgbox{eqcolor3}{ \frac{2}{m(m-1)}\sum_{i\neq j}^m \inner{\overline{u}_i, M\overline{v}_j}\inner{\overline{w}_i, \overline{w}_j}.}
\end{aligned}
\end{equation*}
\begin{equation*}\label{eq:empirical_losses2}
\begin{aligned}
\Lossin &= \bgbox{eqcolor4}{\frac{1}{m(m-1)}\sum_{i\neq j}^m \inner{\overline{w}_i, N \overline{w}_j}^2} \\
&\;\; \bgbox{eqcolor5}{- \frac{2}{m(m-1)}\sum_{i\neq j}^m \inner{\overline{u}_i, M\overline{v}_j}\inner{\overline{w}_i,N \overline{w}_j},}
\end{aligned}
\end{equation*}
where we've written $\overline{u}_i = \overline{u}_\theta(X_i)$, $\overline{v}_j = \overline{v}_\theta(\ddot{Y}_j)$, and $\overline{w}_k = \overline{w}_\theta(Z_k)$, $M = M_\theta$, $N = (N_\theta + N_\theta^\top)/2$, where $\ddot{Y} = (Y,Z)$ and $\overline{\cdot}$ denotes centering.
The corresponding representation learning algorithm is shown in \Cref{alg:learning_pcov}.
%


{\bf Orthonormality regularization.} We also employ orthonormality (whitening) regularization with strength $\gamma > 0$, e.g., 
\begin{equation*}
\begin{aligned}
\Regout(\theta) &= \norm{\widehat{C}_{U_\theta U_\theta} - I_d}_{\mathrm{F}}^2 + \norm{\widehat{C}_{V_\theta V_\theta} - I_d}_{\mathrm{F}}^2, \\ 
\Regin(\theta) &= \norm{\widehat{C}_{W_\theta W_\theta} - I_{2d}}_{\mathrm{F}}^2.
\end{aligned}
\end{equation*}
We use orthonormality regularization to ensure that empirical covariance matrices remain well-conditioned and can be safely inverted during whitening.
%

\newcommand{\ALGCOMMENT}[1]{\STATE \textcolor{eqcolor4}{\textbf{\#} \textit{\textbf{#1}}}}
\begin{algorithm}[t]
\caption{Bi-level spectral representation learning}
\label{alg:learning_pcov}
\begin{algorithmic}
\STATE
\STATE \textbf{Input:} train set $\traindata$; regularization strength $\gamma > 0$

\ALGCOMMENT{Learn representations}
\FOR{for $t=1, \dots, n_{\texttt{steps}}$}
    \ALGCOMMENT{Step inner model}
    \FOR{$s=1, \dots, n_{\texttt{steps\_inner}}$}
        \STATE Sample mini-batch $\mathbf{B}$ from $\traindata$
        \STATE $g_{\mathrm{in}} \gets \nabla_\theta [\widehat{\mathcal{L}}_{\mathrm{in}}(\theta, \mathbf{B}) + \gamma\,\Regin(\theta, \mathbf{B})]$
        \STATE Update $w_\theta$ with $g_{\mathrm{in}}$
    \ENDFOR
    \STATE \ALGCOMMENT{Step outer model}
    \STATE Sample mini-batch $\mathbf{B}'$ from $\traindata$ 
    \STATE $g_{\mathrm{out}} \gets \nabla_\theta [\widehat{\mathcal{L}}_{\mathrm{out}}(\theta, \mathbf{B}') + \gamma\,\Regout(\theta, \mathbf{B}')]$
    \STATE Update $u_\theta, v_\theta$ with $g_{\mathrm{out}}$
\ENDFOR
\STATE \ALGCOMMENT{Whiten representations}
\STATE $\unnestv, \vnnestv, \wnnestv  \gets \widehat{C}_{\widetilde{U}_\theta\widetilde{U}_\theta}^{-1/2} \widetilde{u}_\theta, \widehat{C}_{\widetilde{V}_\theta\widetilde{V}_\theta}^{-1/2} \widetilde{v}_\theta, \widehat{C}_{\widetilde{W}_\theta\widetilde{W}_\theta}^{-1/2} \widetilde{w}_\theta$
\STATE \textbf{Output:} estimated leading spectral features $\unnestv,\vnnestv,\wnnestv$
\end{algorithmic}
\end{algorithm}



{\bf Whitening post-processing.}
Let $\widetilde{u}_\theta,\widetilde{v}_\theta,\widetilde{w}_\theta$ denote the features learned by \Cref{alg:learning_pcov}.
Since orthonormality is enforced only at the batch level and competes with other objectives, we apply an additional post-processing step to ensure that $\{\unnest{i}\}_{i=1}^d$, $\{\vnnest{j}\}_{j=1}^d$, and $\{\wnnest{k}\}_{k=1}^{2d}$ are empirically orthonormal.
Specifically, we whiten $\widetilde{u}_\theta(X)$, $\widetilde{v}_\theta(\ddot{Y})$, and $\widetilde{w}_\theta(Z)$ via $\widehat{u}_\theta(X) = \sqrtinvcovestUnnest\widetilde{u}_\theta(X)$ (and analogously for $\widehat{v}_\theta$ and $\widehat{w}_\theta$), where covariance matrices are estimated over the entire training set.
This transformation preserves the learned subspaces (e.g., $\range(\Unnest) \!=\! \range(\Unnwhite)$) while improving the geometry of the basis functions, yielding empirically orthonormal representations.

\section{CI Testing with Spectral Representations}\label{sec:ci_testing}

We begin this section by recasting nonparametric CI testing in terms of the partial covariance operator:
\begin{equation}
\label{eq:H0vsH1separation}
\mathcal{H}_0\!: \norm{\pcovs{X}{\ddot{Y}}}_{\mathrm{HS}}^2 \!=\! 0 \; \text{vs.} \; \mathcal{H}_{1,n,d}\!: \norm{\pcovs{X}{\ddot{Y}}}^2_{\mathrm{HS}} \!\geq\! \epsilon_n.
\end{equation}
Under $\mathcal{H}_{0}$, we consider all distributions $\mXYZ$ for which $\DCE$ is well-defined and vanishes.
Under $\mathcal{H}_{1,n,d}$, we consider local alternatives consisting of distributions $\mXYZ$ whose partial covariance operator is well-defined and has Hilbert-Schmidt norm at least $\epsilon_n$, where the separation threshold $\epsilon_n$ may decay with the test sample size $n$ at a rate that will be described in \Cref{thm:power}.
The dependence on $d$ comes from the requirement that $\sigma_{d}>\sigma_{d+1}\geq 0$ and $u_i(X)$, $v_i(\ddot{Y}), w_j(Z)$ be sub-Gaussian for $i \in [d], j \in [2d]$.

In light of the no-free-lunch theorem of \citet{shah_hardness_2020}, no test can distinguish $\mathcal{H}_0$ from $\mathcal{H}_{1,n,d}$ uniformly fast.
Rather than imposing restrictive structural conditions--such as Lipschitzness of $P_{X\mid Z=z}$ and $P_{Y\mid Z=z}$ w.r.t. $z$ as in \citep{neykov_minimax_2021,kim_local_2022}--we adopt a more practical perspective, framing test validity and power in terms of the quality of the learned representations $\widehat{u}_\theta,\widehat{v}_\theta,\widehat{w}_\theta$ obtained via \Cref{alg:learning_pcov}.
Formally, we measure this via:
\begin{equation}
\label{eq:gap_optim}
\begin{aligned}
&\vspace{-0.2cm}\mathcal{E}_{m}^{\mathrm{val}} = \max \Big\lbrace \norm{C_{\widehat{U}_\theta\widehat{U}_\theta} - I_d}, \norm{C_{\widehat{V}_\theta\widehat{V}_\theta} - I_d},\notag\\
&\hspace{5cm} \norm{C_{\widehat{W}_\theta\widehat{W}_\theta} - I_{2d}} \Big\rbrace,\\
& \mathcal{E}_{m}^{\mathrm{pow}} = \norm{\,[\![\mathsf{\Sigma}_{X\ddot{Y}\cdot Z}]\!]_d - \mathsf{U}_\theta \mathsf{M}_\theta \mathsf{V}_\theta^*}.
\end{aligned}
\end{equation}
%
%
%
%
%
As shown in \Cref{sec:app_stat_proofs}, under the null the validity of our test is governed by the central limit theorem applied to the empirical partial covariance $\widehat{C}_{\widehat{U}_\theta\widehat{V}_\theta} - \widehat{C}_{\widehat{U}_\theta\widehat{W}_\theta}\widehat{C}_{\widehat{W}_\theta\widehat{V}_\theta}$ computed from the learned features. In particular, validity holds as long as $\mathcal{E}_{m}^{\mathrm{val}}$ is sufficiently small. Compared to regression-based tests \citep{shah_hardness_2020,he2025on}, our requirement is less restrictive, as it does not rely on estimating conditional expectations at a prescribed rate. This is consistent with the classical distinction between testing and estimation \citep{ingster1993asymptotically}: more precisely, rather than solving a full regression problem, our approach only requires capturing a low-dimensional spectral subspace of $\DCE$. Under the alternative, power is governed by $\mathcal{E}_{m}^{\mathrm{pow}}$, which measures how well the learned representations capture $[\![\DCE]\!]_d$. Thus, $\mathcal{E}_{m}^{\mathrm{val}}$ controls calibration under the null, while $\mathcal{E}_{m}^{\mathrm{pow}}$ controls signal retention under the alternative.


We now introduce our proposed CI test, \textbf{SpectralCIT}.


{\bf Spectral conditional independence test.} Let $N=m+n$ and let $\{(X_i, Y_i, Z_i)\}_{i=1}^{N}$ be an i.i.d. sample from $\mXYZ$, split into train and test sets,
\begin{equation*}
     \traindata \!=\! \{(X_i, Y_i, Z_i)\}_{i\!=\!1}^m, \testdata \!=\! \{(X_i, Y_i, Z_i)\}_{i\!=\!m\!+\!1}^{N}.
\end{equation*}
\textbf{SpectralCIT} works as follows.
First, \Cref{alg:learning_pcov} is applied to $\traindata$ to learn the feature maps $\unnestv$, $\vnnestv$, and $\wnnestv$.
Using these representations and the test set $\testdata$, we compute the test statistic 
\begin{equation}
\label{eq:test_statistic}
\begin{aligned}
\That &=n\, 
\norm{\covestUVnnest - \covestUWnnest\covestWVnnest}_F^2.\\
\end{aligned}
\end{equation}

{\bf Validity of testing with spectral features.}
The statistical guarantees are derived under the following mild regularity assumptions, which can be ensured by choosing bounded activation functions (e.g., \texttt{Tanh}).

\begin{assumption}
\label{ass:subgaussian}
   Let $\norm{\widehat{u}_{\theta}(X)}$, $\norm{\widehat{v}_{\theta}(\ddot{Y})}$ and $\norm{\widehat{w}_{\theta}(Z)}$ be $K$--sub-Gaussian random variables.  
\end{assumption}


\begin{theorem}[Validity]
\label{thm:typeIerror}
    Let Assumption \ref{ass:subgaussian} be satisfied. Assume in addition that $\mathcal{E}_m^{\mathrm{val}} \rightarrow 0$ as $m\rightarrow \infty$. Then under the null hypothesis,
    $
    \That= n \, \norm{\covestUVnnest - \covestUWnnest \covestWVnnest}_{F}^2
    $
    converges in distribution to a chi-square distribution with $d^2$ degrees of freedom as $m,n\rightarrow \infty$.
\end{theorem}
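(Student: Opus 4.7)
The plan is to exploit the independence of training and test data: conditioning on $\traindata$ makes the learned maps $\widehat{u}_\theta, \widehat{v}_\theta, \widehat{w}_\theta$ deterministic, so all randomness in $\That$ comes from the i.i.d.\ test sample. The proof then divides into (i) a deterministic control of the population bias via $\mathcal{E}_m$, (ii) a central limit theorem plus delta-method analysis on the test sample, and (iii) identification of the limiting quadratic form as $\chi^2(d^2)$.

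Let $B = C_{\widehat{U}_\theta\widehat{W}_\theta}$, $D = C_{\widehat{W}_\theta\widehat{V}_\theta}$, and $T_\star = C_{\widehat{U}_\theta\widehat{V}_\theta} - BD$ be the population counterpart of $\That/n$. For (i), under $\mathcal{H}_0$, $[\![\pcovsrddot{X}{Y}]\!]_d = 0$, so the first summand in $\mathcal{E}_m$ forces $\|\mathsf{U}_\theta M_\theta \mathsf{V}_\theta^*\| \leq \mathcal{E}_m$; combining this with the three orthonormality entries in $\mathcal{E}_m$ (which keep $\mathsf{U}_\theta^* \mathsf{U}_\theta$ and $\mathsf{V}_\theta^* \mathsf{V}_\theta$ within $\mathcal{E}_m$ of the identity) and contracting on both sides yields $\|T_\star\|_F \lesssim \mathcal{E}_m$. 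Under the joint limit $m,n \to \infty$ this ensures $\sqrt{n}\,T_\star \to 0$.

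For (ii), expanding the product,
\begin{equation*}
\covestUVnnest - \covestUWnnest\covestWVnnest = T_\star + (\covestUVnnest - C_{UV}) - (\covestUWnnest - B)D - B(\covestWVnnest - D) + R_n,
\end{equation*}
where the bilinear remainder $R_n = -(\covestUWnnest - B)(\covestWVnnest - D)$ is $O_P(1/n)$ thanks to Assumption~\ref{ass:subgaussian}. The multivariate CLT applied jointly to the three centred empirical covariances then yields that $\sqrt{n}\,(\covestUVnnest - \covestUWnnest\covestWVnnest)$ converges in distribution to a matrix-valued Gaussian $G$ whose per-sample influence function regroups, via the identity $\bar{U}\bar{V}^\top - \bar{U}\bar{W}^\top D - B\bar{W}\bar{V}^\top + B\bar{W}\bar{W}^\top D = (\bar{U} - B\bar{W})(\bar{V} - D^\top\bar{W})^\top$, into $\psi = (\bar{U} - B\bar{W})(\bar{V} - D^\top\bar{W})^\top - B(\bar{W}\bar{W}^\top - I_{2d})D$. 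For (iii), under $\mathcal{H}_0$ the residuals $\widetilde{U} = \bar{U} - B\bar{W}$ and $\widetilde{V} = \bar{V} - D^\top\bar{W}$ are conditionally centred and conditionally independent given $Z$, so $\widetilde{U}\widetilde{V}^\top$ is uncorrelated with every $Z$-measurable random variable, including the nuisance correction $B(\bar{W}\bar{W}^\top - I)D$. Exploiting the tower property together with the asymptotic orthonormality of $\bar{U}, \bar{V}, \bar{W}$ (again from $\mathcal{E}_m$), the two summands of $\psi$ contribute orthogonally to $\mathrm{Var}(\mathrm{vec}\,G)$ and collapse to $I_{d^2}$; continuous mapping then delivers $\That \to \|G\|_F^2 \sim \chi^2(d^2)$.

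The main obstacle is stage (iii). The exact collapse of $\mathrm{Var}(\mathrm{vec}\,G)$ to $I_{d^2}$ hinges on a precise algebraic cancellation between the conditional-variance contribution of $\widetilde{U}\widetilde{V}^\top$ and the nuisance-variability contribution of $B(\bar{W}\bar{W}^\top - I)D$; this is where the rank-$2d$ choice for $\widehat{w}_\theta$—capable of simultaneously spanning the $d$-dimensional conditional means of $\widehat{u}_\theta(X)$ and $\widehat{v}_\theta(\ddot{Y})$ given $Z$—enters in an essential way. A secondary technical point is upgrading the qualitative bound $\mathcal{E}_m \to 0$ into the quantitative requirement $\sqrt{n}\,\mathcal{E}_m \to 0$ used in stage (i), which is implicit in the joint $m,n \to \infty$ limit.
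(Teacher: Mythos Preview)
Your delta-method route in feature space is genuinely different from the paper's argument, which never linearizes around a population target. The paper instead rewrites $\That = n\|\widehat{U}_\theta^\top \widehat{P}_\theta \widehat{V}_\theta\|_F^2$ with $\widehat{P}_\theta = I_n - \widehat{W}_\theta\widehat{W}_\theta^\top$ acting in the $n$-dimensional \emph{sample} space, splits $\widehat{P}_\theta = Q + \Gamma$ into a rank-$(n-2d)$ orthogonal projection $Q$ and a spectral remainder $\Gamma$ with eigenvalues of size $\mathcal{E}_m + O_P(\sqrt{d/n})$, and then expands $\sqrt{n}\,\widehat{U}_\theta^\top Q \widehat{V}_\theta$ over all index pairs $(i_1,i_2)$. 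The off-diagonal pairs involve $u_\theta(x_{i_1})$ and $v_\theta(\ddot{y}_{i_2})$ from \emph{distinct} samples and are therefore unconditionally independent, so their total variance is bounded by $(1-\min_i[Q]_{i,i})(n-2d)$, which vanishes because Lemma~\ref{lem:epsilon_bound} forces $[Q]_{i,i}\to 1$ uniformly. Only the diagonal sum survives, and its Lindeberg--Feller limit has covariance $\mathrm{Cov}(u_\theta)\otimes\mathrm{Cov}(v_\theta)\to I_{d^2}$ by the orthonormality clauses of $\mathcal{E}_m$.

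Your proposal has two genuine gaps relative to this. First, stage~(i) needs $\sqrt{n}\,T_\star\to 0$, hence $\sqrt{n}\,\mathcal{E}_m\to 0$, which is strictly stronger than the hypothesis $\mathcal{E}_m\to 0$; the paper never faces this rate requirement because the empirical projector $\widehat{P}_\theta$ removes the $\widehat{W}_\theta$-direction at the sample level, so no population bias $T_\star$ enters the decomposition. Second, and more seriously, the stage~(iii) collapse to $I_{d^2}$ does not follow from the assumptions. The conditional centering $\mathbb{E}[\widetilde{U}\mid Z]=0$ requires $\mathbb{E}[\bar U\mid Z]\in\mathrm{span}(\bar W)$, but $\mathcal{E}_m$ as defined in \eqref{eq:gap_optim} controls only the truncated-SVD approximation and marginal orthonormality; under $\mathcal{H}_0$ the operator is zero, so nothing forces $\widehat{w}_\theta$ to capture the conditional means of the arbitrary learned $\widehat{u}_\theta,\widehat{v}_\theta$. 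Even granting conditional centering, the variance of $\widetilde{U}\widetilde{V}^\top$ under $\mathcal{H}_0$ is $\mathbb{E}\big[\mathrm{Var}(\bar U\mid Z)\otimes\mathrm{Var}(\bar V\mid Z)\big]$, which equals $I_{d^2}$ only if the conditional covariances are themselves constant in $Z$---a homoskedasticity condition nowhere assumed. The ``algebraic cancellation'' you invoke between this term and the nuisance $B(\bar W\bar W^\top - I)D$ does not occur: the latter adds a nonnegative contribution rather than compensating. The paper's sample-space route sidesteps this entirely by using true cross-sample independence for the off-diagonal and marginal orthonormality for the diagonal, never needing conditional second-moment structure.
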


\Cref{thm:typeIerror} shows that $\That \stackrel{d}{\to} \chi^2(d^2)$ as $m,n\to\infty$ as long as $\mathcal{E}_m^{\mathrm{val}} \to 0$ as $m\to\infty$. Intuitively, after whitening the learned features are approximately orthonormal, so $\widehat{C}_{\widehat{U}_\theta\widehat{V}_\theta} - \widehat{C}_{\widehat{U}_\theta\widehat{W}_\theta}\widehat{C}_{\widehat{W}_\theta\widehat{V}_\theta}$ has approximately identity covariance structure under the null, making $\That$ approximately a sum of $d^2$ squared independent standard Gaussians (see \Cref{app:prooftypeIerror}).
Therefore, for a significance level $\alpha \in (0,1)$, the method rejects the null hypothesis whenever $\That \geq c_\alpha$, where $c_\alpha = q_{1-\alpha}(\chi^2(d^2))$ denotes the $1-\alpha$ quantile of the $\chi^2(d^2)$ distribution.


{\bf Power of testing with spectral features.} 
\Cref{thm:power} characterizes the minimum signal strength $\epsilon_n$ required for our method to reliably detect conditional dependence under the alternative hypothesis.

\begin{theorem}[Power]
    \label{thm:power}
    Let Assumption \ref{ass:subgaussian} be satisfied.
    Let $\delta \in (0, 1)$ and take $d$ large enough such that $\sum_{j=1}^{d} \sigma_{j}^2 \geq  \norm{ \pcovs{X}{\ddot{Y}} }^{2}_{\mathrm{HS}} /2 $.
    Assume that $\epsilon_n^2 \geq c\left(d\,(\mathcal{E}_m^{\mathrm{pow}})^2 + \frac{d^2 + d\log(\delta^{-1})}{n}\right)$ for some large enough numerical constant $c>0$. Then for $\mathcal{H}_{1,n,d}$ defined in \Cref{eq:H0vsH1separation}, we have $
\mathbb{P}_{\mathcal{H}_{1,n,d}} \left( \That  > c_\alpha \right) \geq 1-\delta.
$
\end{theorem}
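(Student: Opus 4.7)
\textbf{Proof plan for \cref{thm:power}.} Write $\That = n\,\|A_n\|_F^2$ with $A_n := \covestUVnnest - \covestUWnnest\covestWVnnest$, and introduce its population counterpart $A := \covUVnnest - \covUWnnest\covWVnnest$. The elementary inequality $\|A_n\|_F^2 \geq \tfrac{1}{2}\|A\|_F^2 - \|A_n - A\|_F^2$ (a consequence of the reverse triangle inequality together with $2xy \leq x^2/2 + 2y^2$) splits the analysis into a \emph{deterministic} lower bound on $\|A\|_F^2$ controlled by the representation quality $\mathcal{E}_m$ from \cref{eq:gap_optim}, and a \emph{stochastic} upper bound on $\|A_n - A\|_F^2$ controlled by the test-set size $n$.

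\textbf{Deterministic lower bound.} Observe that $A = \Unnest^\star\,[\covsrddot{X}{Y} - \covs{X}{Z}\Wnnest\Wnnest^\star\covsrddot{Z}{Y}]\,\Vnnest$. Under the whitening post-processing and the defining bound $\|C_{\widehat{W}_\theta\widehat{W}_\theta} - I_{2d}\| \leq \mathcal{E}_m$, the operator $\Wnnest\Wnnest^\star$ is within $O(\mathcal{E}_m)$ of a rank-$2d$ orthogonal projection; combining this with $\|[\![\DCE]\!]_d - \Unn\Mnn\Vnn^\star\| \leq \mathcal{E}_m$ lets one compare $A$ to $\mathrm{diag}(\sigma_1,\dots,\sigma_d)$ up to Frobenius-norm error $O(\sqrt{d}\,\mathcal{E}_m)$, so that
\[
\|A\|_F^2 \;\geq\; \sum_{j=1}^d \sigma_j^2 - c_1\,d\,\mathcal{E}_m^2 \;\geq\; \tfrac{1}{2}\,\|\pcovs{X}{\ddot{Y}}\|_{\text{HS}}^2 - c_1\,d\,\mathcal{E}_m^2,
\]
by the assumption on $d$ and the hypothesis $\mathcal{H}_{1,n,d}$.

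\textbf{Stochastic upper bound and conclusion.} Decomposing $A_n - A$ into the three empirical--population covariance deviations $\covestUVnnest - \covUVnnest$, $\covestUWnnest - \covUWnnest$, and $\covestWVnnest - \covWVnnest$, each term is a sample mean of centered rank-one tensors of $K$--sub-Gaussian vectors of dimension at most $2d$ (\cref{ass:subgaussian}). A standard matrix Bernstein / Hanson--Wright argument, combined with the operator-norm bound $\|\covUWnnest\|, \|\covWVnnest\| = O(1)$ inherited from the whitening, yields
\[
\|A_n - A\|_F^2 \;\lesssim\; \frac{d^2 + \log\!\delta^{-1}}{n}
\]
with probability at least $1-\delta$. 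Plugging both estimates into $\That \geq n\,(\tfrac{1}{2}\|A\|_F^2 - \|A_n - A\|_F^2)$ and using the chi-square tail $c_\alpha = q_{1-\alpha}(\chi^2(d^2)) \lesssim d^2 + \log\!\alpha^{-1}$, the separation condition $\epsilon_n^2 \geq 2d\,\mathcal{E}_m^2 + c\,(d^2+\log\!\delta^{-1})/n$ (with $c$ chosen large enough to absorb the constants from both steps and from $c_\alpha$) is exactly the threshold at which $\That > c_\alpha$ holds with probability at least $1-\delta$.

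\textbf{Main obstacle.} The delicate step is the deterministic lower bound: $\mathcal{E}_m$ packages operator-norm closeness of $\Unn\Mnn\Vnn^\star$ to $[\![\DCE]\!]_d$ together with orthogonality defects of three feature maps, whereas the quantity of interest is a Frobenius-norm lower bound on a three-factor composition through the learned subspaces. Tracking how an imperfectly whitened $\Wnnest\Wnnest^\star$ fails to project exactly onto a $2d$-subspace containing $\mathrm{range}(\covsrddot{Z}{Y}\Vnn\Mnn^\star)$, and converting the resulting operator-norm errors into Frobenius-norm contributions that scale as $d\,\mathcal{E}_m^2$ (rather than, say, $d^2\,\mathcal{E}_m^2$), is the heart of the argument; the covariance concentration and chi-square tail steps are otherwise standard.
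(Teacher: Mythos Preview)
Your plan is correct and matches the paper's approach: both split the test statistic into a deterministic signal term governed by the representation gap $\mathcal{E}_m$ and a stochastic fluctuation term governed by sub-Gaussian cross-covariance concentration on the $n$ test samples, then compare to $c_\alpha$ using the chi-square quantile $c_\alpha \lesssim d^2$.

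There is one organizational difference worth noting. You route the analysis through the quadratic inequality $\|A_n\|_F^2 \ge \tfrac12\|A\|_F^2 - \|A_n - A\|_F^2$ and then propose to lower-bound $\|A\|_F$ by unpacking the near-projection structure of $\Wnnest\Wnnest^\star$. The paper instead works linearly: it applies the triangle inequality directly to $\|A_n\|_F$, writing
\[
\|A_n\|_F \;\ge\; \|[\![\DCE]\!]_d\|_F \;-\; \sqrt{d}\,\mathcal{E}_m \;-\; \text{(three covariance deviations)},
\]
where the $\sqrt{d}\,\mathcal{E}_m$ term comes from the operator-to-Frobenius conversion $\|\cdot\|_F \le \sqrt{d}\,\|\cdot\|$ applied to the first component of \cref{eq:gap_optim}. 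Crucially, the paper reads the gap condition \cref{eq:gap_optim} as \emph{directly} supplying $\|[\![\DCE]\!]_d - (\covUVnnest - \covUWnnest\covWVnnest)\| \le \mathcal{E}_m$, so the projection analysis you flag as the ``main obstacle'' is simply not carried out---it is absorbed into the definition of $\mathcal{E}_m$. Your more structural route (tracking how an imperfectly whitened $\Wnnest\Wnnest^\star$ interacts with $\covs{X}{Z}\covsrddot{Z}{Y}$) would make the dependence on the individual orthogonality defects explicit, but at the cost of extra work the paper avoids. Either way the final separation condition $\epsilon_n^2 \gtrsim d\,\mathcal{E}_m^2 + (d^2+\log\delta^{-1})/n$ is the same.
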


\section{Experiments}
\label{sec:experiments}

\begin{figure*}[t]
    \centering
    \begin{minipage}[b]{0.48\textwidth}
        \centering
        \includegraphics[width=\textwidth]{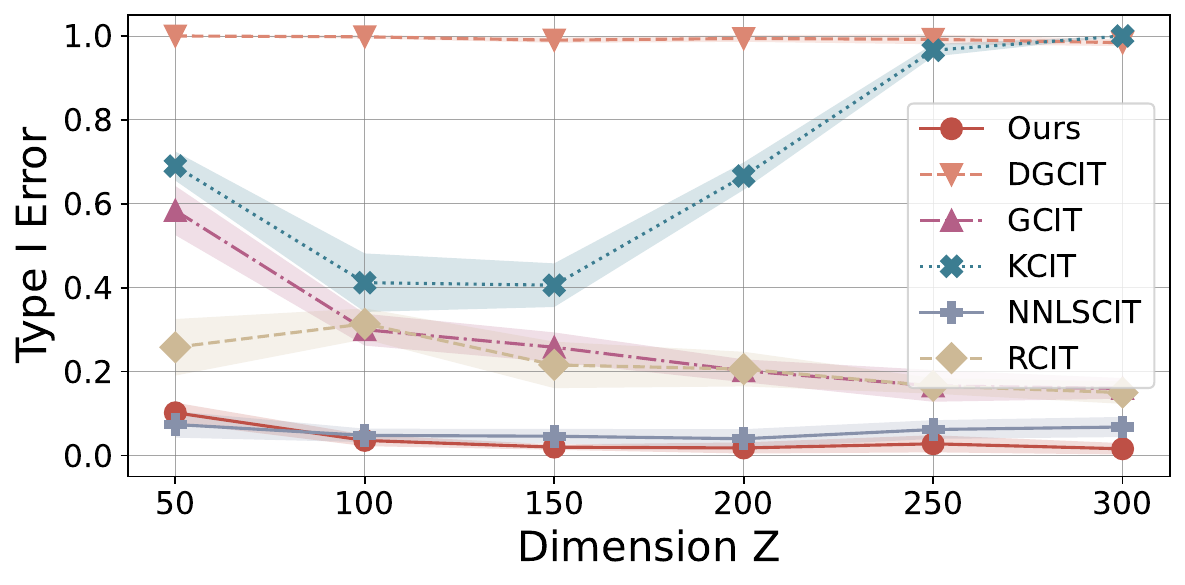}
    \end{minipage}
    \hfill
    \begin{minipage}[b]{0.48\textwidth}
        \centering
        \includegraphics[width=\textwidth]{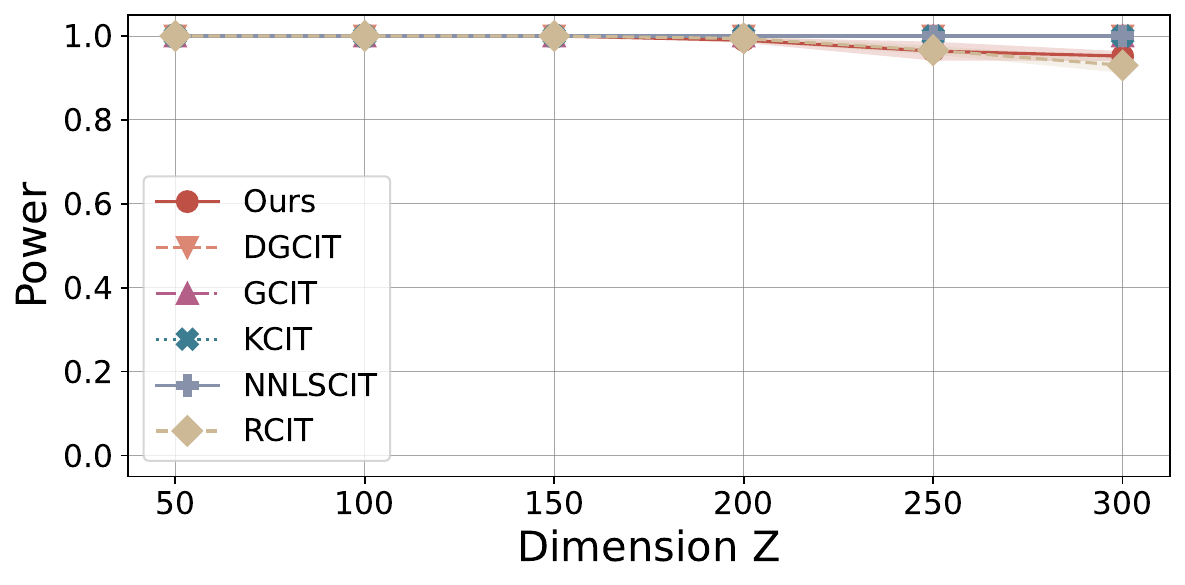}
    \end{minipage}
    \caption{Type I error and power of our method (\textbf{SpectralCIT}) compared to state-of-the-art conditional independence tests across varying dimensionality of the conditioning set $Z$.}
    \label{fig:yang_benchmark}
\end{figure*}

In this section, we report results from numerical experiments on challenging synthetic and real-world datasets. In \Cref{sec:synthetic_data}, we benchmark a range of existing CI tests against our proposed method under the post-nonlinear model \citep{yang2025,scetbon2022anasymptotic,zhang_kernel-based_2012,li2023k}. In \Cref{sec:bca_data}, we apply our approach to further investigate the role of visual (histological) features in breast tumor progression \citep{Li2024}.


\subsection{Synthetic Data}
\label{sec:synthetic_data}
%
%

{\bf Benchmark on post-nonlinear model.} 
We first evaluate the type I error and power of \textbf{SpectralCIT} against established baselines: \textbf{KCIT} \cite{zhang_kernel-based_2012}, \textbf{RCIT} \citep{Strobl2018}, \textbf{GCIT} \citep{bellot2019gcit}, \textbf{DGCIT} \citep{shi2021doublegenerative}, and \textbf{NNLSCIT} \citep{li2023k} using the post-nonlinear model of \citet{yang2025}:
\begin{align*}
    \mathcal{H}_0: X &= f(\bar{Z} +    \varepsilon_X/4), \,
          Y = g(\bar{Z} +  \varepsilon_Y/4), \\
\mathcal{H}_1:  X &= f(\bar{Z} + \varepsilon_X/4) + \varepsilon/2, \,
         Y = g(\bar{Z} +  \varepsilon_Y/4) +  \varepsilon/2,
\end{align*}
where $\bar{Z} = \frac{1}{d_Z}\sum_{i=1}^{d_Z} Z_i$, and $Z_i, \varepsilon_X, \varepsilon_Y, \varepsilon$ are i.i.d. standard Gaussian for $i \in [d_Z]$.
The nonlinearities are $f(w) = w^3$ and $g(w) = \tanh(w)$.
We fix the sample size $N = 1000$ and vary the conditioning dimension $d_Z \in [50, 300]$.
Each setting is repeated $100$ times.
Hyperparameter selection is described in \Cref{sec:experimental_details}.

Results in \Cref{fig:yang_benchmark} at significance level $\alpha \!=\! 0.05$ show that \textbf{KCIT}, \textbf{RCIT}, and \textbf{GCIT} achieved high power, but failed to control for type I errors.
\textbf{DGCIT} performed substantially worse, completely losing type I error control.
In contrast, \textbf{NNLSCIT} and \textbf{SpectralCIT} consistently maintained robust type I error control while achieving high power across all dimensions, consistent with the findings in \citet{li2023k}.

{\bf On the role of structural assumptions.}
A key advantage of our approach is its ability to adapt to diverse structural scenarios.
To illustrate this, we evaluate the type I error of our method (\textbf{SpectralCIT}) against \textbf{NNLSCIT} using a nonsmooth, high-dimensional data model defined as:
%
\begin{equation*}
    X = f(Z/2 + \varepsilon_X), \quad Y = g(Z/2 + \varepsilon_Y),
\end{equation*}
where $Z$, $\varepsilon_X$, and $\varepsilon_Y$ are sampled independently from $N(0, I_{100})$.
We define the nonlinearities $f(w) = h_2(w)$ and $g(w)=h_3(w)$ to be highly oscillatory near the origin:
\begin{align*}
h_k(w) &= 
\begin{cases} 
w^k & \text{for } |w| \ge 1, \\
\cos\left(\frac{2\pi}{w}\right) & \text{for } 0 < |w| < 1, \\
1 & \text{for } w = 0.
\end{cases} 
\end{align*}
%
We set the dimensionality of $Z$ to $100$ and vary the sample size from $500$ to $1500$.
Each setting is repeated $100$ times.
Hyperparameters are the same as in the previous experiment.

\begin{figure}[t]
    \centering
    \includegraphics[width=\columnwidth]{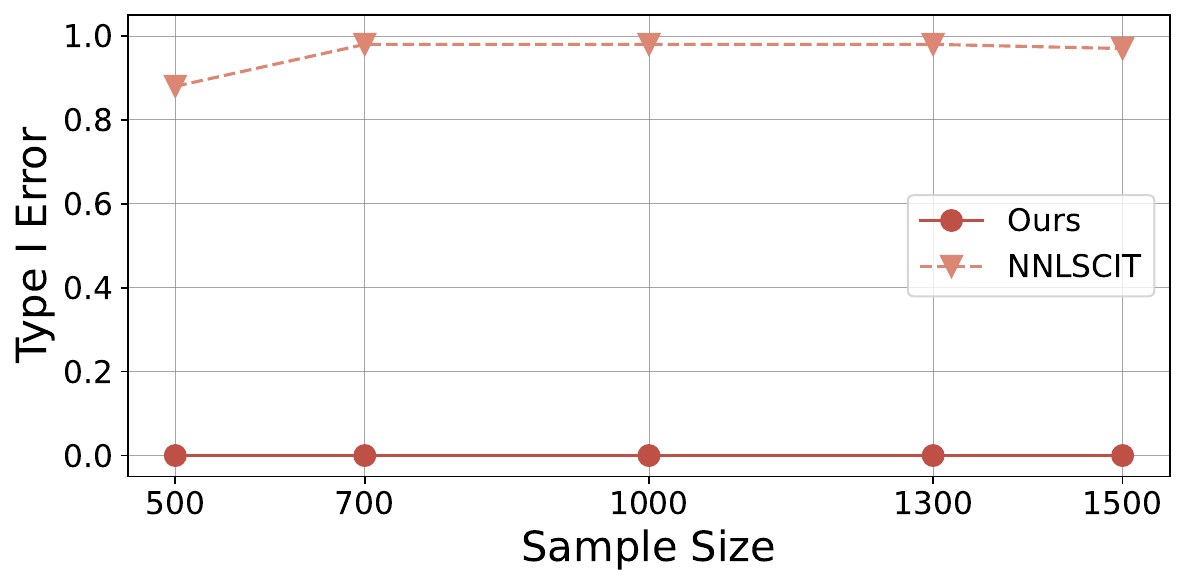}
    \caption{Type 1 error of our method (\textbf{SpectralCIT}) compared to \textbf{NNLSCIT} across varying sample sizes.}
    \label{fig:type_1_nonsmooth_nnlscit}
\end{figure}

Results are reported in \Cref{fig:type_1_nonsmooth_nnlscit} with a significance level of $\alpha = 0.05$. Notably, the highly oscillatory behavior of $f$ and $g$ near $w=0$ creates a lack of smoothness on the conditional distributions that violates the core assumptions of \textbf{NNLSCIT} \citep{kim_local_2022}, leading to a complete breakdown of its type I error control. In contrast, our operator-based approach remains robust.
%

{\bf Overcoming limitations of kernel methods.}
Finally, we evaluate the scalability and adaptivity of \textbf{SpectralCIT} against established kernel-based CI tests, including \textbf{KCIT} \cite{zhang_kernel-based_2012}, \textbf{RCIT} \citep{Strobl2018}, \textbf{LPCIT} \citep{scetbon2022anasymptotic}, and \textbf{GCM} \citep{shah_hardness_2020}.
Data is generated according to \citep{shi2021doublegenerative}:
\begin{align*}
X &= \sin(a_X^\top Z + \varepsilon_X/2),\,
Y = \cos(a_Y^\top Z + bX + \varepsilon_Y/2).
\end{align*}
Here, $Z$, $\varepsilon_X$, and $\varepsilon_Y$ are i.i.d. standard Gaussian and the entries of $a_{(\cdot)}$ are sampled uniformly from $[0, 1)$ and then normalized to unit $\ell_1$ norm.
The parameter $b$ controls the degree of conditional dependence: $b=0$ corresponds to the null hypothesis $\mathcal{H}_0$, while $b \neq 0$ implies $\mathcal{H}_1$.
We set the sample size to $N = 1000$ and vary the dimensionality of $Z$, $d_Z$, from $50$ to $300$.
Each setting is repeated $100$ times. Hyperparameters are selected following \Cref{sec:experimental_details}.

Results at significance level $\alpha = 0.05$ and $b \in \{0, 2\}$ are shown in \Cref{fig:dgcit_kernels}. 
Consistent with \citet{Ramdas2015}, we observe a general loss of power for kernel-based CI tests. 
While \textbf{KCIT} achieved higher power than \textbf{SpectralCIT} for $d_Z \in \{250, 300\}$, it failed to maintain type I error control, in line with \citep{shi2021doublegenerative}.
In contrast, \textbf{GCM} preserved type I error validity but exhibited very low power, again consistent with \citep{shi2021doublegenerative}.
The method of \citet{scetbon2022anasymptotic} was excluded due to excessive runtime.\footnote{Cf. \citep[Fig. 7]{yang2025}.}

\begin{figure}[t]
    \centering
    \includegraphics[width=\columnwidth]{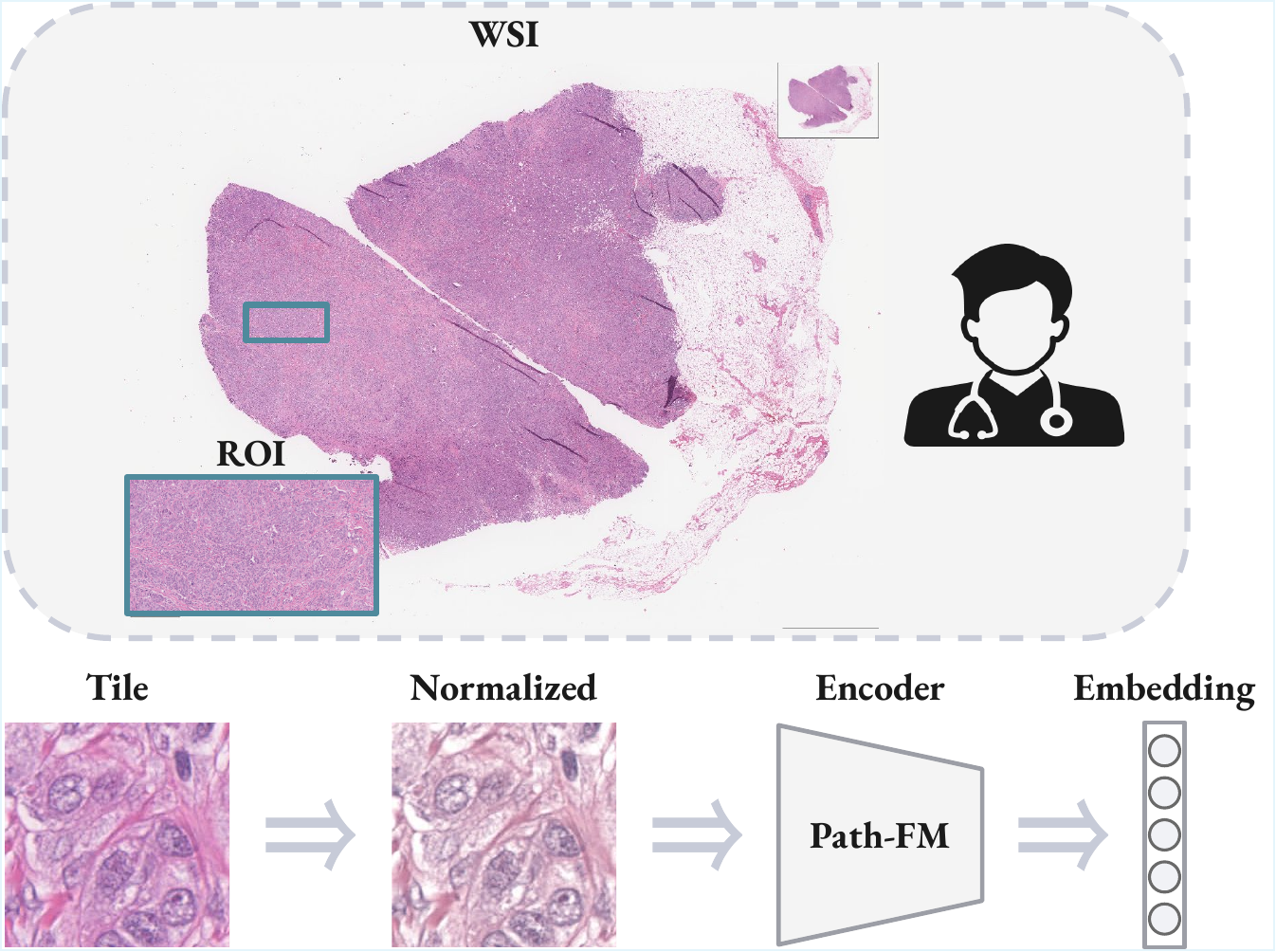}
    \caption{Data processing pipeline for breast cancer experiment.}
    \label{fig:wsi_tcga}
    \vspace{-2mm}
\end{figure}

\begin{figure*}[t]
    \centering
    \begin{minipage}[b]{0.48\textwidth}
        \centering
        \includegraphics[width=\textwidth]{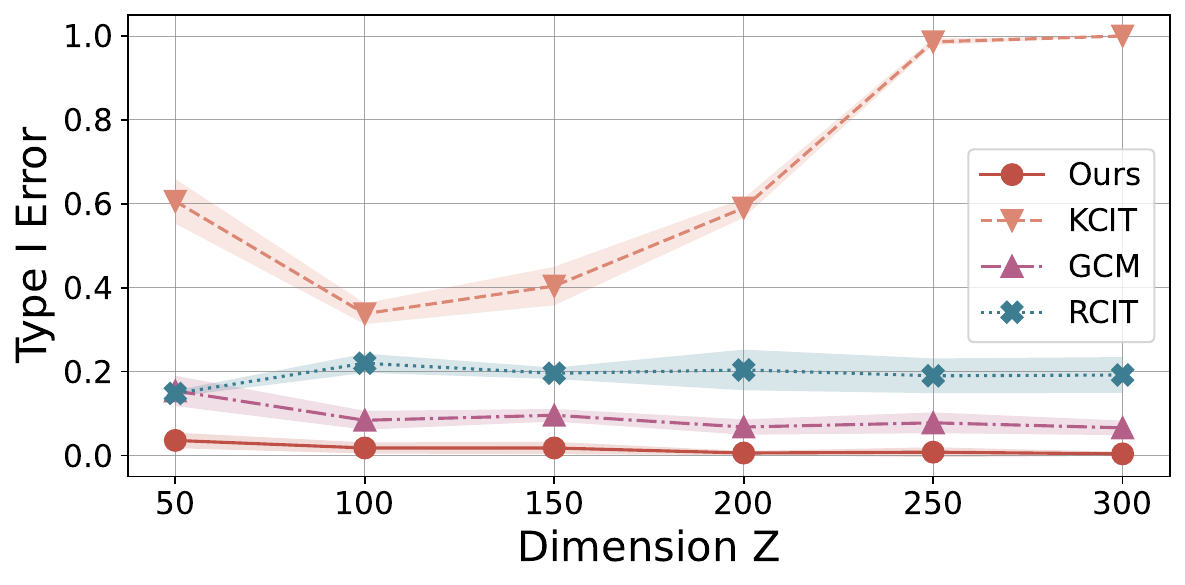}
    \end{minipage}
    \hfill
    \begin{minipage}[b]{0.48\textwidth}
        \centering
        \includegraphics[width=\textwidth]{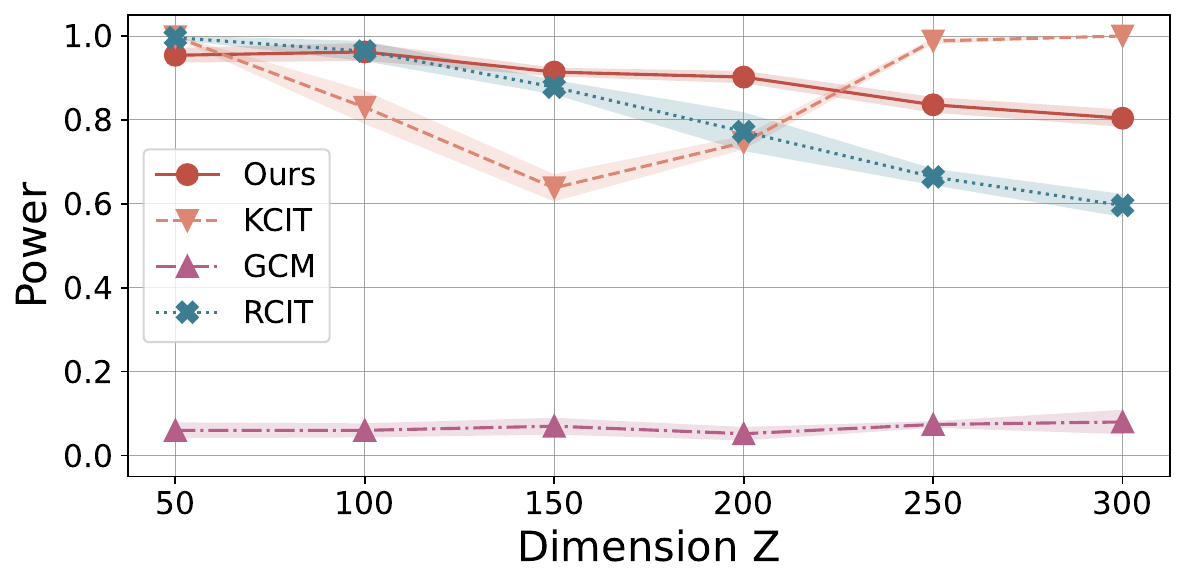}
    \end{minipage}
    \caption{Type I error and power of our method (\textbf{SpectralCIT}) compared to state-of-the-art kernel-based conditional independence tests across varying dimensionality of the conditioning set $Z$.} 
    \label{fig:dgcit_kernels}
\end{figure*}

\subsection{Breast Cancer Data}
\label{sec:bca_data}
Predicting outcomes in cancer patients is crucial for clinical decision-making and precision medicine.
Gene expression profiling can identify high-risk patients \citep{10.1093/jnci/djj329, doi:10.1056/NEJMoa1804710}, but it remains costly and time-consuming, especially in underserved communities \citep{10.1158/1538-7445.SABCS18-P5-15-04}.
Histological images, routinely examined by pathologists to inform prognosis and treatment, inherently reflect underlying molecular phenotypes \cite{PMID:35465400}.
Recent advances in computational pathology suggest that machine learning can extract prognostic signals beyond human perception \citep{Farahmand2022,WANG2023112953}, raising the possibility that sufficiently rich image representations may capture much of the prognostic information encoded in molecular profiles. This leads to a key question: how much additional value do molecular profiles provide beyond image-based representations? Clinically, genomic assays such as MammaPrint \citep{10.1093/jnci/djj329} and  OncotypeDx \citep{doi:10.1056/NEJMoa1804710} provide prognostic information beyond standard pathological assessment, but it remains unclear whether modern foundation-model representations of histology already capture most of this signal. Motivated by this question, we combine pathology foundation models with our conditional independence testing framework to formally assess whether molecular profiles provide prognostic information beyond that captured by histological images.

{\bf Dataset.}
We selected all invasive carcinoma whole slide images (WSIs) from The Cancer Genome Atlas Breast Invasive Carcinoma (TCGA-BRCA) dataset \citep{TCGA,Ciriello2015} with available survival data indicating either death within three years of diagnosis ($Y=1$) or survival beyond five years of follow-up ($Y=0$), resulting in 135 WSIs.
For each WSI, a representative malignant tumor region (target area) was annotated by a medical doctor using QuPath \citep{qupath}.
Target areas were then divided into $244\times244$ tiles, H\&E stain-normalized following \citep{macenko}, and processed using the Path Foundation model \citep{lai2023domainspecificoptimizationdiverseevaluation} to extract latent representations encoding local nuclear morphology and micro-environmental context.
Up to 10 target-area patches per WSI were selected based on Euclidean distance in the latent space, yielding a dataset of $N=1341$ triplets $(X,Y,Z)$, where $X \in \R^3$ represents Her2, Luminal, and Basal metagene scores from \citep{Tiezzi2025}, $Y \in \{0, 1\}$ is the survival outcome, and $Z \in \R^{384}$ represents the high-dimensional image features.
The data processing pipeline is illustrated in \Cref{fig:wsi_tcga}.

{\bf Conditional independence testing.}
We first examined linear associations via partial correlations between each coordinate $X$ and $Y$ given $Z$, with results indicating negligible linear dependence ($-0.006$, $-0.088$, $0.07$).
In line with this observation, a logistic regression model incorporating both $X$ and $Z$ achieved slightly lower predictive accuracy than a model using $Z$ alone ($0.86$ vs. $0.87$).
We then applied \textbf{KCIT}, \textbf{NNLSCIT}, and \textbf{SpectralCIT} to assess conditional independence beyond linear relationships.
\textbf{KCIT} and \textbf{NNLSCIT} failed to reject the null hypothesis at significance level $\alpha\!=\!0.05$, whereas \textbf{SpectralCIT} strongly rejected it (see \Cref{tab:pvals_cit}).
Training an XGBoost model \citep{Chen2016} confirmed residual predictive information in $X$: accuracy rose from $0.91$ with $Z$ alone to $0.95$ with $X$ and $Z$.
This indicates non-linear associations between metagene scores and survival not captured by latent representations, consistent with recent prognostic models that incorporate multi-modal data \citep{Tolaney2024}, and highlights SpectralCIT’s ability to detect complex dependencies in high-dimensional biological data.

\begin{table}[ht]
\centering
\caption{P-values returned by the conditional independence tests.}
\label{tab:pvals_cit}
\begin{tabular}{@{}ccc@{}}
\toprule
 \textbf{SpectralCIT} & \textbf{KCIT} & \textbf{NNLSCIT} \\
\midrule
$<10^{-3}$ & $6.8\times10^{-2}$ & $3.8 \times 10^{-1}$ \\
\bottomrule
\end{tabular}
\end{table}



\section{Related Work}
\label{sec:related_work}

We provide a brief overview of existing CI tests and spectral representation learning, and refer the reader to \citep{li_nonparametric_2020} for a comprehensive discussion.
We note that the CI testing literature is highly method-centric, with a lack of standardized benchmarks and well-established hyperparameter tuning protocols \citep{positioncausalml}.

{\bf Spectral representation learning.}
Expansions in orthogonal bases such as Fourier, wavelets, and splines have a long history in nonparametric statistics \cite{Efromovich1999}, but perform poorly in high dimensions and lack adaptivity to the data distribution.
This limitation motivated the use of bases derived from the spectral decomposition of linear integral operators defined by symmetric positive semi-definite kernels \citep{izbickithesis}.
More recently, research has shifted toward learned spectral bases, or features, which have been applied across diverse settings including dynamical systems \cite{turri2025self}, reinforcement learning \cite{hu2024primaldualspectralrepresentationoffpolicy}, causal inference \cite{wang2022spectralrepresentationlearningconditional,sun25speciv,meunier2025outcomeawarespectralfeaturelearning,meunier2025demystifyingspectralfeaturelearning}, and geometric deep learning \citep{ordonezapraez2025equivariantrepresentationlearningsymmetryaware}.

{\bf Kernel tests of CI.}
Kernel-based measures of conditional dependence were first introduced in \citep{fukumizu2007kernelmeasures}, building on the partial cross-covariance operator from \citep{fukumizu2004dimensionalityreduction}.
The first kernel-based CI test, \textbf{KCIT} \citep{zhang_kernel-based_2012}, uses the Hilbert–Schmidt norm of this operator and provides a characterization of the asymptotic null distribution, but without power guarantees.
To improve scalability, \citet{Strobl2018} introduced \textbf{RCIT} and \textbf{RCoT}, which approximate \textbf{KCIT} using random Fourier features.
More recently, kernel partial correlation \citep{huang_kernel_2022}, generalized covariance measure (\textbf{GCM}) \citep{shah_hardness_2020}, and tests based on analytical kernel mean embeddings (\textbf{LPCIT}) \citep{scetbon2022anasymptotic} have been developed.
A causal representation learning method leveraging a kernel-based statistic has been proposed in \citep{pogodin2023efficient}.
Practical strategies for kernel selection have been discussed \citep{wang2025practical}, though at notable computational cost.
Despite their simplicity and effectiveness, kernel-based tests rely on kernel mean embeddings \citep{Muandet_2017}, resulting in slow convergence rates \citep{pogodin2025practicalkerneltestsconditional} and loss of power as the dimensionality increases \citep{Ramdas2015}.

{\bf Model-x, Local permutation, and other tests of CI.}
Model-X tests rely on knowledge of the conditional distribution $P(X | Z)$ and were first introduced in \cite{Candes2018}, showing that valid p-values are obtained when $P(X | Z)$ is known exactly by sampling from it.
\citet{berrett_conditional_2020} extended this framework by designing a permutation test that requires only approximate knowledge of the conditional distribution.
Motivated by advances in generative modeling with neural networks, \citet{bellot2019gcit} proposed learning $P(X | Z)$ and performing a permutation test (\textbf{GCIT}), with theoretical guarantees based on the GAN loss.
However, \citet{zhang_doubly_2024} showed that these bounds can be loose even in simple linear settings and proposed a doubly robust alternative, while \citet{shi2021doublegenerative} offered another double generative approach (\textbf{DGCIT}).
More recently, \citet{yang2025} introduced a diffusion-based variant.
Local permutation tests \cite{runge_conditional_2018,kim_local_2022,bellot2019gcit,fukumizu2007kernelmeasures,sen2017model,li2023k} exploit the factorization $P_{X|Y=y,Z=z} = P_{X|Z=z}$ by permuting $X$ within binned or clustered $Z$ values, producing approximately valid samples from $P_{XYZ}$ under the null under smoothness assumptions and sufficiently populated clusters \citep{kim_local_2022}.

\section{Conclusions}



In this paper, we revisited conditional independence testing with partial covariance operators through the lens of learned spectral features, addressing the scalability and adaptivity limitations of kernel-based tests.
We proposed a scalable bi-level contrastive algorithm to learn leading spectral features of partial covariance operators, enabling simple linear test statistics while maintaining validity and power.
Under the null, we showed that validity depends on central limit theorem rates for the test statistic's matrix, which can be easily controlled using bounded activations such as \texttt{Tanh}.
Under alternative, we established that power relies on the quality of the learned representations--an equivalent but more machine learning-aligned perspective.
Finally, we evaluated the proposed test on high-dimensional synthetic and real-world datasets, including breast cancer data from \citep{TCGA}, showing that our approach offers a practical and statistically grounded path toward scalable CI testing, bridging kernel-based theory with modern representation learning.

{\bf Limitations.} %
Our work shares the following limitations with the CI testing and spectral representation learning literatures. First, there is no standard protocol for hyperparameter selection in conditional independence testing. As described in \Cref{sec:experimental_details}, we therefore either use established values or apply a common tuning protocol with equal computational budgets across methods, varying only hyperparameters exposed in the original implementations.
Second, unlike the unconditional setting where permutation methods provide exact calibration \cite{Berrett2019nonparametric}, existing tests require additional assumptions: model-X methods depend on knowledge of $P_{X \mid Z}$, local permutation methods rely on smoothness, and regression-based approaches require sufficiently fast regression estimation. Our method is no exception. Although our assumptions for validity are somewhat weaker, requiring only moment control of learned features, we still observe conservative calibration in practice, which can reduce power.
Finally, our guarantees depend on the quality of the learned representations through the $\mathcal{E}_m$ terms in \Cref{sec:ci_testing}. We make this dependence explicit rather than tying the analysis to a particular architecture or optimization scheme. Obtaining explicit non-asymptotic rates for learned spectral representations remains an important open problem beyond our specific method \citep{meunier2025demystifyingspectralfeaturelearning,kostic2024neuralconditional}.


\section*{Acknowledgments}
This work was supported by the EU Project ELIAS (grant No. 101120237), and by the European Union – NextGenerationEU and the Italian National Recovery and Resilience Plan through the Ministry of University and Research (MUR), under Project PE0000013 CUP J53C22003010006.
KL acknowledges support from the French National Research Agency for the DECATTLON project (ANR-24-CE40-3341).

\section*{Impact Statement}
This paper presents work whose goal is to advance the field of machine learning. There are many potential societal consequences of our work, none of which we feel must be specifically highlighted here.


\bibliography{references}
\bibliographystyle{icml2026}

\newpage
\appendix
\onecolumn

\section{Derivation of Losses and Algorithm for Learning the Partial Covariance Operator}
\label{sec:app_learning_pcov}

In this section, we present the full derivation of the losses and the bi-level contrastive algorithm presented in \Cref{sec:representation_learning}.

{\bf From Eckart-Young-Mirsky to a loss.} Let $d \in \N$. Our goal is to learn $\SVDd{\DCE}$ from an i.i.d. sample from $\mXYZ$ using representation learning. To do so, we require a loss whose minimizer is $\SVDd{\DCE}$. We base our loss on the following reformulation of Eckart-Young-Mirsky theorem from \cite{kostic2024neuralconditional}.

\begin{theorem}[Eckart-Young-Mirsky: loss form]\label{thm:eckart_young_mirsky_loss}
Let $\mathsf{A}:\mathcal{G}\to\mathcal{F}$ be a compact operator and $\gamma > 0$, then the solution of 
\begin{align*}
    \min_{\U,\M, \V}\quad &\mathcal{L}(\mathsf{U}, \mathsf{M}, \mathsf{V})  + \gamma\,\Omega(\mathsf{U}, \mathsf{V}), \\
    \text{where} \quad &\mathcal{L}(\mathsf{U}, \mathsf{M}, \mathsf{V}) =\norm{\mathsf{A} - \mathsf{U}\mathsf{M}\mathsf{V}^*}_{\mathrm{HS}}^2  - \norm{\mathsf{A}}_{\mathrm{HS}}^2\\
    &\Omega(\mathsf{U}, \mathsf{V}) = \norm{\mathsf{U}^*\mathsf{U} - \mathsf{Id}}_{\mathrm{HS}}^2 + \norm{\mathsf{V}^*\mathsf{V} - \mathsf{Id}}_{\mathrm{HS}}^2,
\end{align*}
and $\mathsf{U}: \mathbb{R}^d \!\to\! \mathcal{F}, \mathsf{M}: \mathbb{R}^d \!\to\! \mathbb{R}^d, \mathsf{V}: \mathbb{R}^d \!\to\! \mathcal{G}$ are linear operators, is given by the rank-$d$ truncated SVD of $\mathsf{A}$.
\end{theorem}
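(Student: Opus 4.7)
The plan is to reduce the claim to the classical Eckart-Young-Mirsky theorem by showing that $(-\|\mathsf{A}\|_{\text{HS}}^2 - \|\SVDd{\mathsf{A}}\|_{\text{HS}}^2)$ is a lower bound on the regularized objective, and then exhibiting the truncated SVD as a feasible factorization that attains it.

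\textbf{Step 1: Lower bound on $\mathcal{L}$ via rank constraint.}
First I would observe that for any linear operators $\mathsf{U}\!:\!\R^d\!\to\!\mathcal{F}$, $\mathsf{M}\!:\!\R^d\!\to\!\R^d$, $\mathsf{V}\!:\!\R^d\!\to\!\mathcal{G}$, the product $\mathsf{U}\mathsf{M}\mathsf{V}^*$ has rank at most $d$. The classical Eckart-Young-Mirsky theorem (for Hilbert-Schmidt operators, e.g.\ via the SVD of $\mathsf{A}$) gives
\begin{equation*}
\inf_{\mathrm{rank}(\mathsf{B})\leq d}\, \norm{\mathsf{A}-\mathsf{B}}_{\text{HS}}^2 \;=\; \sum_{i>d}\sigma_i^2 \;=\; \norm{\mathsf{A}}_{\text{HS}}^2 - \sum_{i=1}^d \sigma_i^2,
\end{equation*}
so that $\mathcal{L}(\mathsf{U},\mathsf{M},\mathsf{V})\geq -\sum_{i=1}^d \sigma_i^2$. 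Since $\Omega(\mathsf{U},\mathsf{V})\geq 0$, we obtain the lower bound $\mathcal{L}+\gamma\Omega\geq -\sum_{i=1}^d\sigma_i^2$ for every feasible triple.

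\textbf{Step 2: The truncated SVD attains the lower bound.}
Write $\mathsf{A}=\sum_{i\geq 1}\sigma_i\,\phi_i\otimes\psi_i$ and set
\begin{equation*}
\mathsf{U}^\star\alpha \;=\; \sum_{i=1}^d \alpha_i\,\phi_i,\qquad
\mathsf{V}^\star\beta  \;=\; \sum_{i=1}^d \beta_i\,\psi_i,\qquad
\mathsf{M}^\star \;=\; \mathrm{diag}(\sigma_1,\dots,\sigma_d).
\end{equation*}
Orthonormality of $\{\phi_i\}$ and $\{\psi_i\}$ yields $(\mathsf{U}^\star)^*\mathsf{U}^\star=(\mathsf{V}^\star)^*\mathsf{V}^\star=\mathsf{Id}$, hence $\Omega(\mathsf{U}^\star,\mathsf{V}^\star)=0$. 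Moreover $\mathsf{U}^\star\mathsf{M}^\star(\mathsf{V}^\star)^*=\SVDd{\mathsf{A}}$, so $\mathcal{L}(\mathsf{U}^\star,\mathsf{M}^\star,\mathsf{V}^\star)=\norm{\mathsf{A}-\SVDd{\mathsf{A}}}_{\text{HS}}^2-\norm{\mathsf{A}}_{\text{HS}}^2=-\sum_{i=1}^d\sigma_i^2$. This matches the lower bound, so $(\mathsf{U}^\star,\mathsf{M}^\star,\mathsf{V}^\star)$ is a minimizer.

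\textbf{Step 3: Characterizing all minimizers.}
Any minimizer $(\mathsf{U},\mathsf{M},\mathsf{V})$ must saturate both inequalities simultaneously, so $\Omega(\mathsf{U},\mathsf{V})=0$ (i.e.\ $\mathsf{U}$ and $\mathsf{V}$ have orthonormal columns), and $\norm{\mathsf{A}-\mathsf{U}\mathsf{M}\mathsf{V}^*}_{\text{HS}}^2$ equals the best rank-$d$ approximation error. The uniqueness part of Eckart-Young-Mirsky (under $\sigma_d>\sigma_{d+1}$, which is assumed later in the paper when invoked) then forces $\mathsf{U}\mathsf{M}\mathsf{V}^*=\SVDd{\mathsf{A}}$; without a gap assumption one still gets that $\mathsf{U}\mathsf{M}\mathsf{V}^*$ is a best rank-$d$ approximation of $\mathsf{A}$. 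The individual factors are only determined up to a common orthogonal change of basis on $\R^d$, which is the standard SVD non-uniqueness.

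\textbf{Expected difficulty.}
The argument is essentially bookkeeping: the only non-routine ingredient is invoking Eckart-Young-Mirsky for Hilbert-Schmidt operators (not merely matrices), which requires $\mathsf{A}$ to be compact with $\sigma_i\to 0$, a consequence of $\norm{\mathsf{A}}_{\text{HS}}<\infty$. The role of $\gamma>0$ is merely to ensure that the minimization over $(\mathsf{U},\mathsf{V})$ is pushed to the orthonormal manifold; it does not distort the location of the minimum because the truncated SVD already realizes $\Omega=0$. So no delicate trade-off between the two terms arises, which is the conceptual point that makes the loss well-behaved for the representation-learning algorithm of \Cref{sec:representation_learning}.
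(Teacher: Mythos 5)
Your argument is correct and complete. Note that the paper itself does not prove \Cref{thm:eckart_young_mirsky_loss}; it imports it verbatim from \citet{kostic2024neuralconditional}, so there is no in-paper proof to compare against. Your route is the natural one: since $\mathsf{U}\mathsf{M}\mathsf{V}^*$ has rank at most $d$, the classical Eckart--Young--Mirsky theorem for compact (hence Hilbert--Schmidt) operators gives $\mathcal{L}\geq -\sum_{i\leq d}\sigma_i^2$, and with $\Omega\geq 0$ this bounds the regularized objective from below; the orthonormal SVD factors $(\mathsf{\Phi},\mathrm{diag}(\sigma_1,\dots,\sigma_d),\mathsf{\Psi})$ attain the bound with $\Omega=0$, and since the two terms are bounded below separately, every minimizer must have $\Omega=0$ and $\mathsf{U}\mathsf{M}\mathsf{V}^*$ equal to a best rank-$d$ approximation, which under $\sigma_d>\sigma_{d+1}$ is $\SVDd{\mathsf{A}}$. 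Your observation that $\gamma>0$ plays no quantitative role beyond forcing orthonormality of the minimizing factors is exactly the point that makes the loss usable in \Cref{sec:representation_learning}. Two small refinements: first, when $\sigma_d=\sigma_{d+1}$ the phrase ``the rank-$d$ truncated SVD'' is itself ambiguous, so your weaker conclusion (any best rank-$d$ approximation) is in fact the sharp statement of the theorem in that case; second, when $\sigma_d>0$ the residual non-uniqueness of the factors is $\mathsf{U}\mapsto\mathsf{U}O_1$, $\mathsf{V}\mapsto\mathsf{V}O_2$, $\mathsf{M}\mapsto O_1^\top \mathsf{M} O_2$ for two (not necessarily equal) orthogonal matrices $O_1,O_2$, which collapse to the usual block-rotation/sign ambiguity only if one additionally insists that $\mathsf{M}$ be diagonal with nonincreasing entries. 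Neither point affects the validity of your proof.
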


Applied to our problem, \Cref{thm:eckart_young_mirsky_loss} yields the following optimization problem:
\begin{equation}
\label{eq:opt_hs}
\begin{aligned}
    \min_{\U,\M, \V} \quad & \mathcal{L}(\U,\M,\V) + \gamma\,\Omega(\U,\V) \\
    \text{where} \quad & \mathcal{L}(\U,\M,\V) = \norm{\DCE - \U\M\V^*}_\mathrm{HS}^2 - \norm{\DCE}_\mathrm{HS}^2\\
    &\Omega(\U,\V) = \norm{\U^*\U - \mathsf{Id}}_\mathrm{HS}^2 + \norm{\V^*\V - \mathsf{Id}}_\mathrm{HS}^2.
\end{aligned}
\end{equation}
As before, we take $\U\!:\! \R^d\to \LiiX$, $\V\!:\! \R^d\to\LiiYdot$, and $\M\!:\!\R^d\to\R^d$ to be linear maps spanning (at most) $d$-dimensional subspaces of $\LiiX$, $\LiiYdot$, and $\R^d$. We further assume that the functions spanned by $\U$ and $\V$ are centered, i.e., $\range(\U), \range(\V) \subset \linspan(1)^\perp$. This reduces the search space for $\U$ and $\V$, effectively simplifying the problem, and is justified by $1 \in \kernel(\DCE) \cap \kernel(\DCE^*)$ and the left and right singular functions of $\DCE$ being contained in  $\kernel(\DCE^*)^\perp$ and  $\kernel(\DCE)^\perp$, respectively.

By subtracting the constant $\norm{\DCE}_\mathrm{HS}^2 < \infty$, we can expand the loss function as follows
\begin{equation}
\begin{aligned}
    \norm{\DCE - \U\M\V^*}_\mathrm{HS}^2 &- \norm{\DCE}_\mathrm{HS}^2\\
    &= \norm{\U\M\V^*}_\mathrm{HS}^2 - 2\inner{\covsrddot{X}{Y}, \U\M\V^*}_\mathrm{HS} + 2\inner{\covs{X}{Z}\covsrddot{Z}{Y}, \U\M\V^*}_\mathrm{HS} \\
    &= \trace(C_{UU}MC_{VV}M^T) - 2\trace(C_{UV}M^T) + 2\trace(\covs{X}{Z}\covsrddot{Z}{Y}\V\M^*\U^*),
\end{aligned}
\end{equation}
where $C_{UU}$ and $C_{VV}$ are the covariance matrices of $U$ and $V$, $C_{UV}$ is the cross-covariance matrix of $U = [u_1(X) \dotsc u_d(X)]^\top$ and $V = [v_1(\ddot{Y}) \dotsc v_d(\ddot{Y})]^\top$, and $M$ is the matrix of $\M$ w.r.t. the canonical basis of $\R^d$. To deal with the last term $\overline{\F} = \covsrddot{Z}{Y}\V\M^*\U^*\covs{X}{Z}$, we first symmetrize it $\F = (\overline{\F} + \overline{\F}^*)/2$, and then rely on the fact that $\trace(\F) = \trace(\W\W^*\F)$, for any linear map $\W\!:\!\R^{2d}\to\LiiZ$ satisfying $\W^*\W = I_{2d}$ and $\range(\F) \subseteq \range(\W)$, i.e., any orthogonal projector $\W\W^*$ onto $\range(\F)$. Again, since the constant function $\mathbf{1}\in\LiiZ$ satisfies $\mathbf{1} \in \kernel(\F)$, we search for centered functions by requiring $\range(\W) \subset \linspan(\mathbf{1})^\perp$. With this, the last term reduces to $2\trace\!\big(C_{UW}C_{WV}M^T\big)$, where $C_{UW}$ and $C_{WV}$ are the cross-covariance matrices of $U,W$ and $W,V$. The regularization terms can be expanded similarly
\begin{equation}
    \norm{\U^*\U - \mathsf{Id}}_\mathrm{HS}^2 = \norm{C_{UU} - I_d}_\mathrm{F}^2, \qquad \norm{\V^*\V - \mathsf{Id}}_\mathrm{HS}^2 = \norm{C_{VV} - I_d}_\mathrm{F}^2.
\end{equation}
{\bf Bi-level formulation.} Since evaluating $\trace(\F)$ requires learning an orthogonal projector $\W\W^*$ such that $\range(\W) \supset \range(\F)$, we adopt the following bi-level optimization problem moving forward
\begin{equation}\label{eq:bilevel_formulation}
\begin{aligned}
    \min_{U,M,V} \quad & \mathcal{L}_{\mathrm{out}}(U,M,V,W_\mathrm{in}) + \gamma\, \Omega_{\mathrm{out}}(U,V) \\
    \text{s.t.} \quad & (W_\mathrm{in}, N_\mathrm{in}) \in \argmin_{W\!,\,N} \mathcal{L}_{\mathrm{in}}(W,N) + \gamma\, \Omega_{\mathrm{in}}(W)\\
    \text{where} \quad & \mathcal{L}_{\mathrm{out}}(U,M,V,W_\mathrm{in}) = \trace(C_{UU}MC_{VV}M^T) - 2\trace(C_{UV}M^T) + 2\trace\!\big(C_{UW}C_{WV}M^T\big)\\
    & \mathcal{L}_{\mathrm{in}}(W,N) = \trace(C_{WW}NC_{WW}N^T) - \trace(C_{UW}(N+N^T)C_{WV}M^T) ,
\end{aligned}
\end{equation}
%
where $C_{WW}$ is the covariance matrix of $W = [w_1(Z) \dotsc w_{2d}(Z)]^\top$ and $\Nop\!:\!\R^{2d}\to\R^{2d}$ is a linear map. The inner loss is the same regularized Hilbert-Schmidt loss as in \eqref{eq:opt_hs}, but applied to learning the finite-rank operator $\F$ with $\W\Nop\W^*$.

{\bf Empirical resolution.} In practice, we can represent the finite-rank operators $\U$, $\V$, $\W$, $\M$, and $\Nop$ using neural networks $u_\theta\!:\!\X \to \R^d$, $v_\theta\!:\!\Y \times \Z \to \R^d$, and $w_\theta\!:\!\Z \to \R^{2d}$, along with learnable weight matrices $M_\theta \in \R^{d \times d}, N_\theta \in \R^{2d\times 2d}$. These parameterizations define subspaces of $\LiiX$, $\LiiYdot$, and $\LiiZ$, and model the action of the operators as linear maps between them. Moreover, the bi-level optimization problem \eqref{eq:bilevel_formulation} corresponds to simultaneously minimizing two regularized contrastive losses, which is amenable to traditional gradient-based optimization and features manifold connections to contrastive learning \citep{haochen_2021_provable}
%
\begin{equation*}
\begin{aligned}
\Lossout(\theta) &= \frac{1}{m(m-1)}\sum_{i\neq j}^m \inner{\overline{u}_\theta(X_i), M_\theta \overline{v}_\theta(\ddot{Y}_j)}^2 \\
&\;\; - \frac{2}{m}\sum_{i=1}^m \inner{\overline{u}_\theta(X_i), M_\theta \overline{v}_\theta(\ddot{Y}_i)} \\
&\;\; + \frac{2}{m(m-1)}\sum_{i\neq j}^m \inner{\overline{u}_\theta(X_i), M_\theta\overline{v}_\theta(\ddot{Y}_j)}\inner{\overline{w}_\theta(Z_i), \overline{w}_\theta(Z_j)}, \\
\Lossin(\theta) &= \frac{1}{m(m-1)}\sum_{i\neq j}^m \inner{\overline{w}_\theta(Z_i), N_\theta \overline{w}_\theta(Z_j)}^2 \\
&\;\; - \frac{2}{m(m-1)}\sum_{i\neq j}^m \inner{\overline{u}_\theta(X_i), M_\theta \overline{v}_\theta(\ddot{Y}_j)}\inner{\overline{w}_\theta(Z_i), N_\theta \overline{w}_\theta(Z_j)}, \\
\end{aligned}
\end{equation*}
%
where $\overline{\cdot}$ denotes centering and $N_\theta$ is taken to be symmetric. Furthermore, orthonormality regularizers such as the following are used with strength $\gamma > 0$
\begin{equation*}
\begin{aligned}
\Regout(\theta) &= \norm{\widehat{C}_{U_\theta U_\theta} - I_d}_{\mathrm{F}}^2 + \norm{\widehat{C}_{V_\theta V_\theta} - I_d}_{\mathrm{F}}^2, \\ 
\Regin(\theta) &= \norm{\widehat{C}_{W_\theta W_\theta} - I_{2d}}_{\mathrm{F}}^2.
\end{aligned}
\end{equation*}

\section{Proofs of Statistical Results}
\label{sec:app_stat_proofs}

\subsection{Background on statistical hypothesis testing}
\label{app:backgroundtest}
\paragraph{Statistical hypothesis testing.}

We briefly recall the fundamental concepts of statistical hypothesis testing, as they will be used throughout the paper.

Given a sample $\mathcal{D}_n = \{(X_i, Y_i, Z_i)\}_{i=1}^n$ drawn i.i.d. from a joint distribution $P_{XYZ}$, a \emph{statistical test} is a decision rule that determines whether to reject a null hypothesis $H_0$ in favor of an alternative hypothesis $H_1$. This decision is typically based on a \emph{test statistic} $T_n = T(\mathcal{D}_n)$, a real-valued function designed to capture evidence against $H_0$.

The test proceeds by defining a \emph{rejection region} $\mathcal{R}_n \subset \mathbb{R}$ such that:
\begin{equation}
\text{Reject } H_0 \quad \Longleftrightarrow \quad T_n \in \mathcal{R}_n.
\end{equation}
Commonly, $\mathcal{R}_n$ takes the form of a threshold rule $\mathcal{R}_n = \{t \in \mathbb{R} : t \geq t_\alpha\}$, where $t_\alpha$ is the \emph{critical value} chosen to control the \emph{Type I error rate} at a desired significance level $\alpha \in (0,1)$. Specifically, the Type I error corresponds to the probability of incorrectly rejecting $H_0$ when it is true:
\begin{equation}
\alpha = \mathbb{P}_{H_0}(T_n \in \mathcal{R}_n).
\end{equation}

Conversely, the \emph{Type II error} is the probability of failing to reject $H_0$ when the alternative hypothesis holds:
\begin{equation}
\beta(P_{XYZ}) = \mathbb{P}_{H_1}(T_n \notin \mathcal{R}_n),
\end{equation}
where we emphasize that $\beta$ depends on the specific alternative distribution $P_{XYZ} \in H_1$. The \emph{power} of the test is defined as the probability of correctly rejecting $H_0$ under $H_1$:
\begin{equation}
\text{Power} = 1 - \beta(P_{XYZ}).
\end{equation}

The objective in hypothesis testing is to design test statistics and rejection regions that simultaneously control the Type I error at level $\alpha$ and achieve high power against relevant alternatives. In nonparametric settings, where minimal assumptions are made on $P_{XYZ}$, this is particularly challenging.

A fundamental notion in the asymptotic analysis of hypothesis tests is that of \emph{contiguity} \citep{van2000asymptotic}. A sequence of distributions $\{Q_n\}$ is said to be contiguous with respect to the sequence of distributions $\{P_n\}$ if, for every sequence of events $\{A_n\}$, we have:
\begin{equation}
P_n(A_n) \to 0 \quad \Longrightarrow \quad Q_n(A_n) \to 0,
\end{equation}
as $n \to \infty$. In the context of statistical testing, contiguity implies that the alternative becomes asymptotically indistinguishable from the null: no test statistic can achieve both vanishing Type I and Type II error rates. 

\subsection{Background on sub-Gaussian random variables}
We recall the definition of a sub-Gaussian random variable and some of its useful properties. 

Let $\psi_2(x)= e^{x^2}-1$, $x\geq 0$. We define the $\psi_2$-Orlicz norm of a random variable $\eta$ as 
$$
\norm{\eta}_{\psi_2}:= \inf\left\lbrace C>0\,:\, \mathbb{E}\left[\psi_2\left(\frac{|\eta|}{C}\right)\right]\leq 1 \right\rbrace.
$$

\begin{definition}[Sub-Gaussian random vector]
\label{def:subgaussian}
A centered random vector $X\in \mathbb{R}^{\underline{d}}$ , with probability distribution denoted $\mu_X$, will be called sub-Gaussian iff, for all $u\in \mathbb{R}^{\underline{d}}$,
$$
\norm{\langle X,u \rangle}_{\psi_2} \lesssim \norm{\langle X,u \rangle}_{L_2(\mu_{X})}.
$$
\end{definition}


\begin{lemma}[(Sub-Gaussian random variable) Lemma 5.5. in \cite{vershynin2011introduction}]
\label{lem:sub-gaussian-def}
Let $Z$ be a random variable. Then, the following assertions are equivalent with parameters $K_i>0$ differing from each other by at most an absolute constant factor.
\begin{enumerate}
    \item Tails:  
      $\mathbb{P} \{ |Z| > t \} \le \exp(1-t^2/K_1^2)$ for all $t \ge 0$;
    \item Moments:
      $(\mathbb{E} |Z|^p)^{1/p} \le K_2 \sqrt{p}$ for all $p \ge 1$;
    \item Super-exponential moment:
      $\mathbb{E} \exp(Z^2/K_3^2) \le 2$.
  \end{enumerate}
A random variable $Z$ satisfying any of the above assertions is called a sub-Gaussian random variable. We will denote by $K_3$ the sub-Gaussian norm.
\end{lemma}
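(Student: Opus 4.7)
Let $\widehat{P} = \covestUVnnest - \covestUWnnest\covestWVnnest$ and $P = \covUVnnest - \covUWnnest\covWVnnest$ be its population analog obtained by replacing each test-set empirical cross-covariance by its expectation, conditional on the training set that produced $\widehat{u}_\theta,\widehat{v}_\theta,\widehat{w}_\theta$. Since $\That = n\norm{\widehat{P}}_F^2$, the reverse triangle inequality gives $\That \geq n\bigl(\norm{P}_F - \norm{\widehat{P} - P}_F\bigr)_+^2$. The plan is to (i) lower bound the signal $\norm{P}_F$ using $\mathcal{E}_m$ together with the condition $\sum_{j=1}^d \sigma_j^2 \geq \norm{\pcovs{X}{\ddot{Y}}}_{\text{HS}}^2/2$, (ii) bound the test-set deviation $\norm{\widehat{P}-P}_F$ using Assumption~\ref{ass:subgaussian}, and (iii) verify that the assumed separation $\epsilon_n^2 \geq 2d\,\mathcal{E}_m^2 + c(d^2 + \log\delta^{-1})/n$ forces the lower bound above $c_\alpha = q_{1-\alpha}(\chi^2(d^2)) \lesssim d^2 + \log(1/\alpha)$.

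\textbf{Signal lower bound.} In operator form, $P = \Unnest^\star\bigl(\covs{X}{\ddot{Y}} - \covs{X}{Z}\,\Wnnest\Wnnest^\star\,\covs{Z}{\ddot{Y}}\bigr)\Vnnest$. The three whitening terms in the definition of $\mathcal{E}_m$ force $\Unnest\Unnest^\star$, $\Vnnest\Vnnest^\star$, $\Wnnest\Wnnest^\star$ to be $O(\mathcal{E}_m)$-close in operator norm to orthogonal projectors onto the learned subspaces, while $\norm{\SVDd{\pcovs{X}{\ddot{Y}}} - \U_\theta\M_\theta\V_\theta^\star} \leq \mathcal{E}_m$, combined with the spectral gap $\sigma_d > \sigma_{d+1}$ implicit in the statement and a Davis--Kahan argument, makes those subspaces $O(\mathcal{E}_m)$-close to the top-$d$ singular subspaces of $\pcovs{X}{\ddot{Y}}$. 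Substituting these projectors into $P$ and converting the resulting operator-norm bounds to Frobenius on a $d\times d$ matrix (hence a $\sqrt{d}$ factor) yields $\norm{P}_F^2 \geq \norm{\SVDd{\pcovs{X}{\ddot{Y}}}}_F^2 - C d\,\mathcal{E}_m^2$ for a numerical constant $C$. Since $\norm{\SVDd{\pcovs{X}{\ddot{Y}}}}_F^2 = \sum_{j=1}^d \sigma_j^2 \geq \norm{\pcovs{X}{\ddot{Y}}}_{\text{HS}}^2/2 \geq \epsilon_n^2/2$ by the hypothesis on $d$ and the alternative $\mathcal{H}_{1,n,d}$, we obtain $\norm{P}_F^2 \geq \tfrac12\epsilon_n^2 - C d\,\mathcal{E}_m^2$.

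\textbf{Concentration on the test set.} Conditional on the training set, the test samples are i.i.d.\ from $\mXYZ$, and Assumption~\ref{ass:subgaussian} makes $\widehat{u}_\theta(X)$, $\widehat{v}_\theta(\ddot{Y})$, $\widehat{w}_\theta(Z)$ $K$-sub-Gaussian vectors of dimensions $d$, $d$, $2d$. Standard matrix concentration (for instance, matrix Bernstein applied to sums of rank-one sub-exponential terms, or vector Hanson--Wright entrywise) yields
\[
\max\Bigl(\norm{\covestUVnnest - \covUVnnest}_F,\ \norm{\covestUWnnest - \covUWnnest}_F,\ \norm{\covestWVnnest - \covWVnnest}_F\Bigr) \leq C' K^2 \sqrt{\tfrac{d^2 + \log\delta^{-1}}{n}}
\]
with probability at least $1 - \delta$. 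Expanding $\widehat{P} - P = (\covestUVnnest - \covUVnnest) - (\covestUWnnest - \covUWnnest)\covWVnnest - \covestUWnnest(\covestWVnnest - \covWVnnest)$ and using that the population blocks $\covUWnnest$, $\covWVnnest$ are bounded (by the same sub-Gaussianity), this gives $\norm{\widehat{P} - P}_F \leq c' \sqrt{(d^2 + \log\delta^{-1})/n}$ on that event.

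\textbf{Combining and main obstacle.} On the intersection event, $\sqrt{\That/n} \geq \sqrt{\tfrac12\epsilon_n^2 - C d\,\mathcal{E}_m^2} - c' \sqrt{(d^2 + \log\delta^{-1})/n}$. Choosing $c$ in the separation hypothesis sufficiently large relative to $C$, $c'$, and the $\chi^2(d^2)$ tail constant ensures the right-hand side exceeds $\sqrt{c_\alpha/n}$, hence $\That > c_\alpha$ and $\mathbb{P}_{\mathcal{H}_1}(\That > c_\alpha) \geq 1-\delta$. The main obstacle is the signal lower bound: unlike in the unconditional case, $P$ contains the compound term $\covs{X}{Z}\,\Wnnest\Wnnest^\star\,\covs{Z}{\ddot{Y}}$, and translating the operator-norm representation error $\mathcal{E}_m$ into a Frobenius-norm bound on its deviation from $\covs{X}{Z}\covs{Z}{\ddot{Y}}$---while simultaneously using $\mathcal{E}_m$ via Davis--Kahan to identify the learned singular subspaces---is precisely what produces the $2d\,\mathcal{E}_m^2$ term in the assumed separation, rather than the routine concentration or final algebra.
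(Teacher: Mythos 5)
Your proposal does not prove the statement it was meant to prove. The statement is the classical equivalence of three characterizations of a sub-Gaussian random variable: the Gaussian-type tail bound $\mathbb{P}\{|Z|>t\}\le \exp(1-t^2/K_1^2)$, the moment growth condition $(\mathbb{E}|Z|^p)^{1/p}\le K_2\sqrt{p}$, and the super-exponential moment bound $\mathbb{E}\exp(Z^2/K_3^2)\le 2$, with the constants $K_i$ comparable up to absolute factors. The paper does not prove this at all: it is quoted verbatim as Lemma~5.5 of \cite{vershynin2011introduction} and used as a black box. A self-contained proof would follow the standard cycle: (1)$\Rightarrow$(2) by writing $\mathbb{E}|Z|^p=\int_0^\infty p\,t^{p-1}\,\mathbb{P}\{|Z|>t\}\,dt$ and evaluating the resulting Gaussian-type integral via the Gamma function to get $(\mathbb{E}|Z|^p)^{1/p}\lesssim K_1\sqrt{p}$; (2)$\Rightarrow$(3) by expanding $\mathbb{E}\exp(cZ^2/K_2^2)=\sum_{p\ge 0}c^p\,\mathbb{E}[Z^{2p}]/(p!\,K_2^{2p})$, bounding $\mathbb{E}[Z^{2p}]\le (K_2\sqrt{2p})^{2p}$ and using $p!\ge (p/e)^p$ so the series is a convergent geometric series for $c$ a small enough absolute constant; and (3)$\Rightarrow$(1) by Markov's inequality applied to $\exp(Z^2/K_3^2)$, giving $\mathbb{P}\{|Z|>t\}\le 2e^{-t^2/K_3^2}\le e^{1-t^2/K_3^2}$.

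What you wrote instead is a proof sketch for \Cref{thm:power} — a signal lower bound through $\mathcal{E}_m$ and the truncated SVD, sub-Gaussian concentration of the empirical cross-covariances on the test set, and the separation condition pushing $\widehat{T}_n$ above $c_\alpha$. Whatever its merits as an argument for that theorem (it is broadly in the spirit of the paper's Appendix proof, though the paper's argument works directly with the triangle inequality on the decomposition of $\covestUVnnest - \covestUWnnest\covestWVnnest$ and does not need a Davis--Kahan subspace-perturbation step), it contains no reasoning about tail bounds, moment growth, or Orlicz-type integrability, and therefore does not establish any of the three implications required by the lemma. The gap is not a flawed step but a mismatch of target: you need to prove (or explicitly defer to the literature for) the equivalences themselves, not substitute an argument for a different result in the paper.
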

Consequently, a sub-Gaussian random variable satisfies the following equivalence of moments property. There exists an absolute constant $c>0$ such that for any $m\geq 2$, 
\begin{equation}
\label{eq:momentequiv-subGaussian}
    \big(\mathbb{E}|Z|^m \big)^{1/m} \leq c K_3\sqrt{m} \big(\mathbb{E}|Z|^2 \big)^{1/2}.
\end{equation}

Let $\eta_1,\dots,\eta_n, \eta$ be i.i.d $K$-sub-Gaussian random vectors in $\mathbf{R}^d$. Then there exists an absolute constant $C > 0$ such that for all $\delta \in (0,1)$,
\[
\mathbb{P}\!\left( \big\| \frac{1}{n}\sum_{I=1}^n \eta_i - \mathbb{E}[\eta] \big\| 
\leq C K \left( \sqrt{\frac{d}{n}} + \sqrt{\frac{\log(1/\delta)}{n}} \right) \right) 
\geq 1 - \delta.
\]
where $\|\cdot\|$ is the Euclidean norm.

\subsection{Background on Lindeberg-Feller Multivariate Central Limit Theorem}




We use the multivariate CLT for independent (not necessarily identically distributed) vector-valued random variables:

\begin{theorem}[Multivariate Central Limit Theorem]\label{thm:lindenberg_feller_clt}
Let \( \{W_i\}_{i=1}^n \subset \mathbb{R}^m \) be independent, mean-zero random vectors. Assume that
\[
\frac{1}{n} \sum_{i=1}^n \mathbb{E}[W_i W_i^\top] \to \Sigma \quad \text{and} \quad \text{for all } \varepsilon > 0, \quad \frac{1}{n} \sum_{i=1}^n \mathbb{E} \left[ \norm{W_i}^2 \cdot \mathbbm{1}_{\{ \norm{W_i} > \varepsilon \sqrt{n} \}} \right] \to 0.
\]
Then
\[
\frac{1}{\sqrt{n}} \sum_{i=1}^n W_i \xrightarrow{d} \mathcal{N}(0, \Sigma).
\]
\end{theorem}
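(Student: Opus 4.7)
The proof follows the classical route: reduce the multivariate CLT to the univariate Lindeberg--Feller CLT via the Cramér--Wold device, then verify that the one-dimensional Lindeberg condition for linear projections $\langle t, W_i\rangle$ is inherited from the vector-valued Lindeberg condition assumed in the hypothesis. There is no deep obstacle; the only real work is the straightforward inequality that transfers the truncation condition from $\|W_i\|$ to $|\langle t, W_i\rangle|$.

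\textbf{Step 1: Cramér--Wold reduction.} I fix an arbitrary $t \in \mathbb{R}^m$ and define the scalar triangular array $Y_{n,i} := \langle t, W_i\rangle$ for $i=1,\dots,n$. The $Y_{n,i}$ are independent and mean-zero, with
$$\mathbb{E}[Y_{n,i}^2] = t^\top \mathbb{E}[W_i W_i^\top]\, t.$$
Summing and normalizing, the assumed convergence $\frac{1}{n}\sum_i \mathbb{E}[W_i W_i^\top] \to \Sigma$ yields
$$s_n^2 := \frac{1}{n}\sum_{i=1}^n \mathbb{E}[Y_{n,i}^2] \;\longrightarrow\; t^\top \Sigma\, t.$$
If $t=0$, the result is trivial, so I assume $\|t\| > 0$.

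\textbf{Step 2: Transfer of the Lindeberg condition.} By Cauchy--Schwarz, $|Y_{n,i}| \le \|t\|\,\|W_i\|$, so for any $\eta>0$ the event $\{|Y_{n,i}| > \eta\sqrt{n}\}$ is contained in $\{\|W_i\| > (\eta/\|t\|)\sqrt{n}\}$, and moreover $Y_{n,i}^2 \le \|t\|^2 \|W_i\|^2$. Setting $\varepsilon := \eta/\|t\|$,
$$\frac{1}{n}\sum_{i=1}^n \mathbb{E}\!\left[Y_{n,i}^2\, \mathbbm{1}_{\{|Y_{n,i}|>\eta\sqrt{n}\}}\right] \;\le\; \|t\|^2 \cdot \frac{1}{n}\sum_{i=1}^n \mathbb{E}\!\left[\|W_i\|^2\, \mathbbm{1}_{\{\|W_i\| > \varepsilon\sqrt{n}\}}\right] \;\longrightarrow\; 0$$
by the vector-valued Lindeberg assumption. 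Hence the scalar Lindeberg condition holds for the array $(Y_{n,i})$.

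\textbf{Step 3: Univariate CLT and conclusion.} By the classical univariate Lindeberg--Feller CLT applied to the triangular array $\{Y_{n,i}/\sqrt{n}\}_{i=1}^n$ (equivalently, to $Y_{n,i}$ with normalization $\sqrt{n}$),
$$\frac{1}{\sqrt{n}}\sum_{i=1}^n Y_{n,i} \;\xrightarrow{d}\; \mathcal{N}\!\left(0,\, t^\top \Sigma\, t\right).$$
Since this limit holds for every $t \in \mathbb{R}^m$, and $t^\top \Sigma\, t$ is precisely the variance of $\langle t, \mathcal{N}(0,\Sigma)\rangle$, the Cramér--Wold device delivers
$$\frac{1}{\sqrt{n}}\sum_{i=1}^n W_i \;\xrightarrow{d}\; \mathcal{N}(0,\Sigma),$$
as desired. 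The main (minor) subtlety was Step 2, where one must be careful to turn the vector truncation threshold into the correct scalar threshold; beyond that, the argument is a standard textbook reduction and I would simply cite the univariate Lindeberg--Feller theorem in Step 3.
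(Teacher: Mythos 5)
Your proof is correct and is the standard argument: Cramér--Wold reduction to scalar projections, transfer of the Lindeberg condition via Cauchy--Schwarz, and the univariate Lindeberg--Feller theorem. Note that the paper does not prove this statement at all --- it is recalled in the appendix as a classical background theorem (used later in the proof of its CLT lemma) and simply stated without proof, so there is no paper argument to compare against; your write-up supplies the textbook justification the paper implicitly relies on. One small point worth tightening: in Step 3 you apply the univariate Lindeberg--Feller theorem for a fixed $t\neq 0$, but the standard statement of that theorem assumes the limiting variance is strictly positive, whereas here $t^\top \Sigma\, t$ may vanish for some nonzero $t$ when $\Sigma$ is singular. In that degenerate case the conclusion $\frac{1}{\sqrt{n}}\sum_i \langle t, W_i\rangle \xrightarrow{d} 0$ follows directly from Chebyshev's inequality, since the variance of the normalized sum equals $\frac{1}{n}\sum_i t^\top \mathbb{E}[W_i W_i^\top] t \to t^\top \Sigma\, t = 0$; adding that one line closes the only gap. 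With that caveat handled, the Cramér--Wold device applies exactly as you state, and the argument is complete.
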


\subsection{Technical results}
\label{app:technicalresults}

We prove now several technical results that will be used in several parts of the proof of our main results.
For the sake of brevity and to avoid cumbersome notation, we denote the learned features $\widehat{u}_\theta$, $\widehat{v}_\theta$, and $\widehat{w}_\theta$ obtained after the representation learning step by $u_\theta$, $v_\theta$, and $w_\theta$, respectively, and keep them fixed thereafter. In the testing phase, we compute the test statistic with new data $(\{x_i,y_i,z_i)\}_{i=1}^{n}$ not used during the representation learning phase.

\paragraph{Spectral structure of $\widehat{P}_{\theta}$}

We define of the $n\times d$ matrices $\widehat{U}_{\theta}, \widehat{V}_{\theta}$ and the $n\times (2d)$ matrix $\widehat{W}_{\theta}$ as follows:
$$
\widehat{U}_{\theta} = \frac{1}{\sqrt{n}}\left[ u_{\theta}(x_1)  \vert \cdots \vert u_{\theta}(x_n) \right]^\top,\quad \widehat{V}_{\theta} = \frac{1}{\sqrt{n}}\left[ v_{\theta}(\ddy_1)  \vert \cdots \vert v_{\theta}(\ddy_n) \right]^\top,\quad \widehat{W}_{\theta} = \frac{1}{\sqrt{n}}\left[ w_{\theta}(z_1)  \vert \cdots \vert w_{\theta}(z_n) \right]^\top.
$$
Set also 
$$
\widehat{P}_{\theta} = I_n - \widehat{W}_{\theta}\widehat{W}_{\theta}^\top.
$$

We assume from now on that $n> 2d$. Note first that the nonzero eigenvalues of $\widehat{W}_{\theta} \widehat{W}_{\theta}^\top$ are the same as those of $\widehat{W}_{\theta}^\top \widehat{W}_{\theta}$. We set $\mathrm{Cov}(w_\theta) = \mathbb{E}[\widehat{W}_{\theta}^\top \widehat{W}_{\theta} ] $. Next we have
\begin{align*}
    \norm{\widehat{W}_{\theta}^\top \widehat{W}_{\theta} - 
    I_d } &\leq \norm{ \widehat{W}_{\theta}^\top\widehat{W}_{\theta} - \mathbb{E}[\widehat{W}_{\theta}^\top \widehat{W}_{\theta} ]}+ \norm{ \mathbb{E}[\widehat{W}_{\theta}^\top \widehat{W}_{\theta} ] - I_d}\\
&= \underbrace{\norm{ \widehat{W}_{\theta}^\top\widehat{W}_{\theta} - \mathbb{E}[\widehat{W}_{\theta}^\top \widehat{W}_{\theta} ]}}_{(I)} + \underbrace{\norm{ \mathrm{Cov}(w_\theta) - I_d}}_{(II)}.
\end{align*}
Recall the optimization gap condition:
$$
(II) \leq \mathcal{E}_m.
$$
We study now $(I)$. Note that
$$
\widehat{W}_{\theta}^\top\widehat{W}_{\theta} - \mathbb{E}[\widehat{W}_{\theta}^\top \widehat{W}_{\theta} ] =  \frac{1}{n}\sum_{i\in [n]} w_{\theta}(z_i) \otimes w_{\theta}(z_i) - \mathrm{Cov}(w_{\theta}(z)).
$$

The effective rank of a symmetric positive semi-definite matrix $A$ is
$$
\mathbf{r}(A) =\frac{\mathrm{tr}(A)}{\norm{A}}.
$$
Define the rate
$$
\psi_n(\delta): = c_K \norm{\mathrm{Cov}(w_{\theta}(z))} \left(\sqrt{ \frac{\mathbf{r}(\mathrm{Cov}(w_{\theta}(z)))}{n} }
\bigvee \frac{\mathbf{r}(\mathrm{Cov}(w_{\theta}(z)))}{n} 
\bigvee \sqrt{\frac{\log \delta^{-1}}{n}} 
\bigvee \frac{\log \delta^{-1}}{n}
\right),
$$
where $c_K>0$ is a numerical constant that can depend only $K$.

Using well-known concentration results for covariance operators like \citet[Corollary 2]{koltloucovariancebernoulli}, we get for any $\delta \in (0,1)$, w.p.a.l. $1-\delta$
\begin{align}
\label{eq:CovW_op_norm}
    \norm{\widehat{W}_{\theta}^{\top}\widehat{W}_{\theta} - \mathrm{Cov}(w_{\theta}(z))} \leq \psi_n(\delta).
\end{align}
Hence we get for any $\delta \in (0,1)$, w.p.a.l. $1-\delta$
$$
\norm{\widehat{W}_{\theta}^\top \widehat{W}_{\theta} - I_d} \leq \mathcal{E}_m + \psi_n(\delta).
$$

Hence we get on the same probability event:
\begin{equation}
\label{eq:eigenvalueWTW}
   \max_{j\in [2d]} |\widehat{\lambda}_j - 1 | \leq  \mathcal{E}_m +  \psi_n(\delta).
\end{equation}

Recall that the nonzero eigenvalues of $\widehat{W}_{\theta}\widehat{W}_{\theta}^{\top}$ are the same as the nonzero eigenvalues of $\widehat{W}_{\theta}^{\top}\widehat{W}_{\theta}$ (assuming the worst case that $\widehat{W}_{\theta}^{\top}$ is full rank). Hence the eigenvalues of $\widehat{W}_{\theta}\widehat{W}_{\theta}^{\top}$ are $0$ with multiplicity $n-2d$ and the above $\widehat{\lambda}_j$'s. Consequently $\widehat{P}_\theta$ admits spectral decomposition with eigenvalue $1$ of multiplicity $n-2d$ and the remaining $d$ eigenvalues $\mu_{j}$ satisfying:
\begin{equation}
\label{eq:eigenvaluesmujsmall}
    |\mu_{j}( \widehat{P}_{\theta})| \leq \psi_n(\delta) + \mathcal{E}_m, 
\quad \forall j\in [n-2d+1,n],
\end{equation}
and $\widehat{P}_{\theta}$ admits an orthonormal family $\{a_j\}_{j\in [n]}$ in $\mathbb{R}^n$ of eigenvectors such that 
\begin{equation}
\label{eq:Ptheta_def}
    \widehat{P}_{\theta} = \sum_{j=1}^{n-2d} a_j \otimes a_j + \sum_{j= n-2d+1}^{n} \mu_j \, a_j \otimes a_j.
\end{equation}

We discuss now the rate $\psi_n(\delta)$. We always have $\mathbf{r}(\mathrm{Cov}(w_{\theta}(z))) \leq 2d$. Furthermore, under the optimization gap condition, we have 
$
\norm{\mathrm{Cov}(w_{\theta}(z))} \leq 1+ \mathcal{E}_m$. Hence we have have the following upper bound for any $\delta\in (0,1)$ such that  $n \geq d \vee \log \delta^{-1}$:
\begin{align}
\label{eq:boundpsindelta}
    \psi_n(\delta) \leq c_K (1+ \mathcal{E}_m) \left( \sqrt{ \frac{2d}{n} }
\bigvee \sqrt{\frac{\log \delta^{-1}}{n}} 
\right).
\end{align}

\paragraph{Asymptotic covariance structure under the null hypothesis.}

Using the tower property of conditional expectation, the conditional independence of $x_i$ and $\ddy_i$ given $z_i$ under the null hypothesis, and the fact that $\{x_{i},\ddy_i,z_i\}_{i=1}^{n}$ is an i.i.d. sample set, we have for any $i \in [n]$:
\begin{align}
\label{eq:covariance_single_term}
&\mathbb{E}\left[ \mathrm{vec}( u_{\theta}(x_{i}) \otimes v_{\theta}(\ddy_{i})) \, \mathrm{vec}( u_{\theta}(x_{i}) \otimes v_{\theta}(\ddy_{i}))^\top \right] \notag\\
&\hspace{2cm} = \mathbb{E}\left[ (u_{\theta}(x_{i}) \otimes v_{\theta}(\ddy_{i})) (u_{\theta}(x_{i}) \otimes v_{\theta}(\ddy_{i}))^\top \right] \notag\\
&\hspace{2cm} = \mathbb{E}\left[ u_{\theta}(x_{i}) u_{\theta}(x_{i})^\top \right] \otimes \mathbb{E}\left[ v_{\theta}(\ddy_{i}) v_{\theta}(\ddy_{i})^\top \right] \notag\\
&\hspace{2cm} = \mathrm{Cov}(u_{\theta}(x)) \otimes \mathrm{Cov}(v_{\theta}(\ddy)) \in \mathbb{R}^{d^2 \times d^2},
\end{align}
where we used the identity $(a \otimes b)(a \otimes b)^\top = (aa^\top) \otimes (bb^\top)$ in the third line and the independence of $u_{\theta}(x_i)$ and $v_{\theta}(\ddy_i)$ (conditional on $z_i$).

We now study the spectral structure of $\mathrm{Cov}(u_{\theta}(x)) \otimes \mathrm{Cov}(v_{\theta}(\ddy))$. Denote by $\{\lambda_j(u_\theta)\}_{j\in [d]}$ and $\{\lambda_j(v_\theta)\}_{j\in [d]}$ the eigenvalues of $\mathrm{Cov}(u_{\theta}(x))$ and $\mathrm{Cov}(v_{\theta}(\ddy))$ respectively. By properties of Kronecker products, $\mathrm{Cov}(u_{\theta}(x)) \otimes \mathrm{Cov}(v_{\theta}(\ddy))$ admits eigenvalues $\{\lambda_j(u_\theta) \lambda_k(v_\theta)\}_{j,k \in [d]}$.

Now we prove that $\mathrm{Cov}(u_{\theta}(x)) \otimes \mathrm{Cov}(v_{\theta}(\ddy))$ is approximately equal to $I_{d^2}$. To this end, we introduce the true representation basis $u,v$:
$$
u(\cdot) = [u_1(\cdot),\ldots, u_d(\cdot)]^\top, \quad v(\cdot) = [v_1(\cdot),\ldots, v_d(\cdot)]^\top,
$$
and we recall that $u$ and $v$ are orthonormal families w.r.t. $L^2_{\mu_x}$ and $L^2_{\mu_{\ddy}}$, meaning that 
$$
\mathrm{Cov}(u(x)) \otimes \mathrm{Cov}(v(\ddy)) = I_d \otimes I_d = I_{d^2}.
$$

Next we have 
\begin{align*}
&\mathrm{Cov}(u_{\theta}(x)) \otimes \mathrm{Cov}(v_{\theta}(\ddy)) - \mathrm{Cov}(u(x)) \otimes \mathrm{Cov}(v(\ddy))\\
&\hspace{0.5cm} = \big(\mathrm{Cov}(u_{\theta}(x)) - \mathrm{Cov}(u(x))\big) \otimes \mathrm{Cov}(v_{\theta}(\ddy)) \\
&\hspace{1.5cm} + \mathrm{Cov}(u(x)) \otimes \big(\mathrm{Cov}(v_{\theta}(\ddy)) - \mathrm{Cov}(v(\ddy))\big).
\end{align*}
Taking the operator norm and using that $\norm{A\otimes B}_{op} = \norm{A}_{op}\norm{B}_{op}$, we get
\begin{align*}
&\norm{\mathrm{Cov}(u_{\theta}(x)) \otimes \mathrm{Cov}(v_{\theta}(\ddy)) - \mathrm{Cov}(u(x)) \otimes \mathrm{Cov}(v(\ddy))}_{op}\\
&\hspace{1cm}\leq \norm{\mathrm{Cov}(u_{\theta}(x)) - \mathrm{Cov}(u(x))}_{op}\norm{\mathrm{Cov}(v_{\theta}(\ddy))}_{op} \\
&\hspace{2cm}+ \norm{\mathrm{Cov}(u(x))}_{op}\norm{\mathrm{Cov}(v_{\theta}(\ddy)) - \mathrm{Cov}(v(\ddy))}_{op}\\
&\hspace{1cm}=\norm{\mathrm{Cov}(u_{\theta}(x)) - I_d}_{op}\norm{\mathrm{Cov}(v_{\theta}(\ddy))}_{op} + \norm{I_d}_{op}\norm{\mathrm{Cov}(v_{\theta}(\ddy)) - I_d}_{op}.
\end{align*}
We recall that by definition of the optimization gap 
$$
\mathcal{E}_m\geq \max \left\lbrace \norm{\mathrm{Cov}(u_{\theta}(x)) - I_d}_{op}, \norm{\mathrm{Cov}(v_{\theta}(\ddy)) - I_d}_{op} \right\rbrace.
$$
Combining the last two displays gives
\begin{align*}
\norm{\mathrm{Cov}(u_{\theta}(x)) \otimes \mathrm{Cov}(v_{\theta}(\ddy)) - I_{d^2}}_{op} \leq (2+ \mathcal{E}_m) \mathcal{E}_m. 
\end{align*}
By standard perturbation bounds for eigenvalues, we deduce:
\begin{align}
\label{eq:eigenvalue_bound}
\max_{j,k\in [d]} \left\lbrace |\lambda_j(u_{\theta}) \lambda_k(v_\theta) - 1| \right\rbrace \leq (2+ \mathcal{E}_m) \mathcal{E}_m.
\end{align}

Similarly, using that $\mathrm{tr}(A \otimes B) = \mathrm{tr}(A) \mathrm{tr}(B)$, we have:
\begin{align}
\label{eq:trace_bound}
\mathrm{tr}\left(\mathrm{Cov}(u_{\theta}(x)) \otimes \mathrm{Cov}(v_{\theta}(\ddy))\right) = \mathrm{tr}\left(\mathrm{Cov}(u_{\theta}(x))\right) \mathrm{tr}\left(\mathrm{Cov}(v_{\theta}(\ddy))\right) \leq (1+\mathcal{E}_m)^2d^2.
\end{align}

\paragraph{Non-asymptotic bound in probability.}

\begin{lemma}
\label{lem:bound_proba_convergence}
Let $\Gamma  = \sum_{j=1}^{r} \lambda_j a_j\otimes a_j\in \mathbb{R}^{n\times n}$ where $\{a_j\}_{j\in r}$ is an orthonormal family of $\mathbb{R}^{n}$. We recall that the $\{x_i,\ddy_i,z_i\}_{i\in [n]}$ are i.i.d. copies of $(X,\ddot{Y},Z)$ and that $x \independent \ddy\, \vert\, z$ under the null hypothesis
.
Then, for any $\epsilon> 0$, we have
   \begin{align}
\label{eq:final_prob_bound_cond_indep}
\mathbb{P}\left( n\,\norm{\widehat{U}_{\theta}^\top \Gamma \widehat{V}_{\theta}}_F^2 \geq \epsilon \right) &\lesssim \frac{1}{\epsilon}\left[ K^4 r\|\Gamma\|^2  \left( \frac{d^2}{n} + \frac{\log^2(n)}{n} \right)\right] + \frac{1}{n}.
\end{align}
\end{lemma}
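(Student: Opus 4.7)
The plan is to combine a truncation/good-event argument with Markov's inequality, exploiting both the rank-$r$ spectral structure of $\Gamma$ and the conditional independence $X \independent \ddot{Y} \mid Z$ that holds under the null. First, to absorb the sub-Gaussian tails (Assumption~\ref{ass:subgaussian}), I would introduce the good event
\begin{equation*}
\mathcal{G} = \Bigl\{\max_{i \leq n} \norm{u_\theta(x_i)} \vee \norm{v_\theta(\ddy_i)} \leq c_0 K\sqrt{\log n}\Bigr\},
\end{equation*}
for which the sub-Gaussian tail and a union bound over $2n$ variables yield $\mathbb{P}(\mathcal{G}^c)\lesssim 1/n$; this accounts for the additive $1/n$ in the claim. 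Markov's inequality then reduces the problem to bounding the truncated expectation $\mathbb{E}[n\norm{\widehat{U}_\theta^\top \Gamma \widehat{V}_\theta}_F^2 \,\indicator_{\mathcal{G}}]$.

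Second, I would exploit the spectral form $\Gamma = \sum_{m=1}^r \lambda_m a_m a_m^\top$ to write $\widehat{U}_\theta^\top \Gamma \widehat{V}_\theta = \sum_m \lambda_m \xi_m \eta_m^\top$, with $\xi_m = \widehat{U}_\theta^\top a_m$ and $\eta_m = \widehat{V}_\theta^\top a_m$, giving
\begin{equation*}
\norm{\widehat{U}_\theta^\top \Gamma \widehat{V}_\theta}_F^2 = \sum_{m,m'} \lambda_m \lambda_{m'} \inner{\xi_m, \xi_{m'}} \inner{\eta_m, \eta_{m'}}.
\end{equation*}
Conditioning on $\mathcal{F}_z = \sigma(z_1,\ldots,z_n)$ and invoking i.i.d.\ sampling together with $X\independent \ddot{Y}\mid Z$ under $\mathcal{H}_0$, the collections $\{u_\theta(x_i)\}_i$ and $\{v_\theta(\ddy_i)\}_i$ become conditionally independent, so the cross moments factorize. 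Setting $A_{ii'} = \mathbb{E}[\inner{u_\theta(x_i), u_\theta(x_{i'})}\mid\mathcal{F}_z]$, $B$ analogously, $\tilde{A}_{mm'} = a_m^\top A a_{m'}$, and $\Lambda = \mathrm{diag}(\lambda_m)$, the truncated expectation reduces to $\tfrac{1}{n}\mathbb{E}[\mathrm{tr}(\Lambda \tilde{A} \Lambda \tilde{B})\indicator_{\mathcal{G}}]$.

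Third, I would control this compressed trace via Cauchy--Schwarz together with the rank inequality $\norm{\tilde{A}}_F \leq \sqrt{r}\,\norm{A}$ (valid since $\tilde{A}$ is the compression of $A$ onto the $r$-dimensional subspace spanned by the orthonormal $\{a_m\}$), giving $|\mathrm{tr}(\Lambda \tilde{A}\Lambda \tilde{B})|\leq r\norm{\Gamma}^2\norm{A}\norm{B}$. Splitting $A = D + M_x M_x^\top$ into a diagonal piece of conditional residual variances and a rank-$d$ mean-shift $M_x\in \mathbb{R}^{n\times d}$ with rows $\mu_x(z_i)^\top = \mathbb{E}[u_\theta(x)\mid z_i]^\top$, and similarly for $B$, the diagonal piece on $\mathcal{G}$ is controlled via sub-Gaussian truncation at scale $K^2 \log n$ (yielding the $\log^2 n / n$ term), while the rank-$d$ mean-shift contributes the $d^2/n$ term once paired with the analogous decomposition of $B$ inside the $\{a_m\}$-compression.

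The main obstacle is this final step: a naive use of $\norm{A}$ alone picks up a spurious $n$ factor because $\norm{M_x M_x^\top}$ is of order $nK^2$ in typical realizations. The $d^2/n$ rate emerges only after the orthonormal compression by $\{a_m\}$ dimensionally collapses the rank-$d$ structure of $M_x$ and $M_y$; hence the proof must exploit conditional independence at the level of $\tilde{A},\tilde{B}$ inside the trace, not just through coarse spectral bounds on $\norm{A}\norm{B}$.
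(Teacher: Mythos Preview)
Your proposal has a genuine gap at exactly the point you flag as the main obstacle, and the missing ingredient is one you never invoke: the \emph{empirical centering} of $u_\theta,v_\theta$. The paper's proof uses this to show, via sub-Gaussian concentration, that the \emph{unconditional} means are small, $\|\mathbb{E}[u_\theta(x)]\|\vee\|\mathbb{E}[v_\theta(\ddy)]\|\lesssim K(\sqrt{d/n}+\sqrt{\log(\delta^{-1})/n})$, on an event $B_n$ with $\mathbb{P}(B_n^c)\le 1/n$ when $\delta=1/n$. It then expands $n\|\widehat U_\theta^\top\Gamma\widehat V_\theta\|_F^2$ as the four-index sum $\tfrac1n\sum_{i_1,i_2,i_3,i_4}[\Gamma]_{i_1i_2}[\Gamma]_{i_3i_4}\langle u_\theta(x_{i_1}),u_\theta(x_{i_3})\rangle\langle v_\theta(\ddy_{i_2}),v_\theta(\ddy_{i_4})\rangle$, takes expectations, and splits into the diagonal block $(i_1,i_2)=(i_3,i_4)$---bounded by $CK^4d^2\|\Gamma\|_F^2/n\le CK^4d^2r\|\Gamma\|^2/n$, which supplies the $d^2/n$ term---and the off-diagonal block, where each summand is at most $\|\mathbb{E}[u_\theta(x)]\|^2\|\mathbb{E}[v_\theta(\ddy)]\|^2\lesssim K^4(d/n+\log n/n)^2$ on $B_n$; paired with $(\sum_{i_1,i_2}|[\Gamma]_{i_1i_2}|)^2\le n^2\|\Gamma\|_F^2$ this yields the $\log^2 n/n$ contribution. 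Markov on $B_n$ plus $\mathbb{P}(B_n^c)\le 1/n$ then gives the bound. Note in particular that the attribution of rates is the reverse of what you propose: $d^2/n$ comes from the second moments on the diagonal, and $\log^2 n/n$ from the fourth power of the small mean on the off-diagonal.

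Your conditioning-on-$\mathcal{F}_z$ route replaces the small unconditional means by the \emph{conditional} means $\mu_x(z_i)=\mathbb{E}[u_\theta(x)\mid z_i]$, which are not small under the null---only their $z$-average is. Hence your rank-$d$ block $M_xM_x^\top$ has operator norm of order $n$, and the claimed ``compression by $\{a_m\}$ collapses the rank-$d$ structure'' does not recover the rate: for a fixed orthonormal family, $a_m^\top M_x$ need not be small (e.g.\ $a_m=e_1$ gives $\mu_x(z_1)^\top$), so $\mathrm{tr}(\Lambda\tilde M_A\Lambda\tilde M_B)$ can be of order $n^2\|\Gamma\|^2$, which after dividing by $n$ diverges. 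Your good event $\mathcal{G}$ (sup-norm truncation of the features) is also the wrong lever---both the $1/n$ probability deficit and the $\log^2 n/n$ rate in the lemma arise from the small-mean event $B_n$, not from truncating feature norms---and since $\indicator_{\mathcal{G}}$ is not $\mathcal{F}_z$-measurable, the reduction to $\tfrac1n\mathbb{E}[\mathrm{tr}(\Lambda\tilde A\Lambda\tilde B)\indicator_{\mathcal{G}}]$ is not clean either. To repair the argument you would need to drop the conditional-mean decomposition and exploit the empirical centering to control the off-diagonal of the four-index expansion directly, as the paper does.
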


\begin{proof}
We start with some preliminary observations exploiting sub-Gaussianity of the mappings $u_\theta(x)$ and $v_{\theta}(\ddy)$.

\noindent
\textbf{Properties of mean and empirical means of $u_\theta(x)$ and $v_{\theta}(\ddy)$.}~ In view of Assumption \ref{ass:subgaussian} we have, by sub-Gaussian concentration, there exists an absolute constant $C>0$ such that, w.p.a.l. $1-\delta$, 
\begin{align}
  \bigg\Vert  \mathbb{E}[u_{\theta}(x) ] - \frac{1}{n}\sum_{i=1}^n u_{\theta}(x_{i}) \bigg\Vert \bigvee \bigg\Vert  \mathbb{E}[v_{\theta}(\ddy) ] - \frac{1}{n}\sum_{i=1}^n v_{\theta}(\ddy_{i}) \bigg\Vert  \leq C K \left( \sqrt{\frac{d}{n}} +  \sqrt{\frac{\log(\delta^{-1})}{n}} \right).
\end{align}

Recall that the functions $u_{\theta}$ and $v_{\theta}$ are empirically centered. Hence we get for any $\delta \in (0,1)$, w.p.a.l. $1-\delta$
\begin{equation}
\label{eq:Expectation_utheta}
     \bigg\Vert  \mathbb{E}[u_{\theta}(x) ] \bigg\Vert \bigvee \bigg\Vert  \mathbb{E}[v_{\theta}(\ddy) ] \bigg\Vert  \leq C K \left( \sqrt{\frac{d}{n}} +  \sqrt{\frac{\log(\delta^{-1})}{n}} \right).
\end{equation}

Let $A := u_\theta(x) -  \mathbb{E}[u_{\theta}(x) ]\in \mathbb{R}^d$. Note that $A$ is $2K$-sub-Gaussian, i.e.,
\[
\sup_{v \in \mathbb{S}^{d-1}} \big\| \langle A, v \rangle \big\|_{\psi_2} \leq 2K.
\]
By the equivalence of moment property of sub-Gaussian distribution, there exists an absolute constant $C > 0$ such that, for any $m\geq 3$, we have 
\[
\mathbb{E} \|A\|^m \leq C^{m} (K \sqrt{d})^m  m^{m/2} \left(\mathbb{E} \|A\|^2 \right)^{m/2} .
\]

Now consider the uncentered moment $\mathbb{E} \|u_\theta(x)\|^m = \mathbb{E} \|\mathbb{E} [u_\theta(x)] + A\|^m$. By the triangle inequality and the convexity of $t \mapsto t^m$ on $[0,\infty)$,
\[
\|\mathbb{E} [u_\theta(x)]  + A\|^m \leq \big( \|\mathbb{E} [u_\theta(x)] \| + \|A\| \big)^m \leq 2^{m-1} \big( \|\mathbb{E} [u_\theta(x)] \|^m + \|A\|^m \big).
\]
Taking expectations yields
\begin{equation}
    \label{eq:equiv_moment_map}
\mathbb{E} \|u_\theta(x)\|^m \leq 2^{m-1} \Big( \|\mathbb{E} [u_\theta(x)] \|^m +  C^{m} (K \sqrt{d})^m  m^{m/2} \left(\mathbb{E} \|A\|^2 \right)^{m/2}  \Big).
\end{equation}

\noindent
\textbf{Representation of the rescaled Frobenius norm.}~ Note first that  $[\Gamma]_{i,k} = \sum_{j=1}^r \lambda_j a_{i,j} a_{k,j}$. We have
\begin{align*}
   \sqrt{n}  \widehat{U}_{\theta}^\top \Gamma \widehat{V}_{\theta} &= \frac{1}{\sqrt{n}}\sum_{j=1}^{r}\sum_{i_1,i_2=1}^{n} \lambda_j a_{i_1,j} a_{i_2,j}   u_{\theta}(x_{i_1}) \otimes v_{\theta}(\ddy_{i_2})
    = \frac{1}{\sqrt{n}}\sum_{i_1,i_2=1}^{n} [\Gamma]_{i_1,i_2}   u_{\theta}(x_{i_1}) \otimes v_{\theta}(\ddy_{i_2})
    .
\end{align*}    

The Frobenius norm of $\sqrt{n}  \widehat{U}_{\theta}^\top \Gamma \widehat{V}_{\theta} $ is
\begin{align}
\label{eq:Frobenius_decomposition}
    n\,\norm{\widehat{U}_{\theta}^\top \Gamma \widehat{V}_{\theta}}_F^2  &= \frac{1}{n} \sum_{i_1,i_2,i_3,i_4 = 1}^{n} [\Gamma]_{i_1,i_2}  [\Gamma]_{i_3,i_4}  \langle u_{\theta}(x_{i_1})  , u_{\theta}(x_{i_3}) \rangle \, \langle  v_{\theta}(\ddy_{i_2}), v_{\theta}(\ddy_{i_4}) \rangle
\end{align}

\noindent
\textbf{Analysis of the rescaled Frobenius norm under the null hypotheis.}~ 

We exploit now the conditional independence of $x$ and $\ddy$ given $z$. We take the expectation in \eqref{eq:Frobenius_decomposition}.
Exploiting the tower property of conditional expectation, we get
\begin{align*}
    &\mathbb{E}\left[\langle u_{\theta}(x_{i_1})  , u_{\theta}(x_{i_3}) \rangle \, \langle  v_{\theta}(\ddy_{i_2}), v_{\theta}(\ddy_{i_4}) \rangle \right] \\
    &\hspace{1cm}=\mathbb{E}\left[   \mathbb{E}\left[\langle u_{\theta}(x_{i_1})  , u_{\theta}(x_{i_3}) \rangle \, \langle  v_{\theta}(\ddy_{i_2}), v_{\theta}(\ddy_{i_4}) \rangle \mid z_{i_1},z_{i_2},z_{i_3},z_{i_4} \right] \right]\\
    &\hspace{1cm}=  \mathbb{E}\left[\langle u_{\theta}(x_{i_1})  , u_{\theta}(x_{i_3}) \rangle\right]  \, \mathbb{E}\left[  \langle  v_{\theta}(\ddy_{i_2}), v_{\theta}(\ddy_{i_4}) \rangle  \right]\\
    &\hspace{1cm}=
    \begin{cases}
        \langle \mathbb{E}\left[u_{\theta}(x_{i_1}) \right]  , \mathbb{E}\left[u_{\theta}(x_{i_3}) \right] \rangle  \,  \langle \mathbb{E}\left[ v_{\theta}(\ddy_{i_2}) \right], \mathbb{E}\left[v_{\theta}(\ddy_{i_4})  \right]\rangle  &\text{if $i_1 \neq i_3$ and $i_2 \neq i_4$}\\
        \mathbb{E}\left[\| u_{\theta}(x_{i_1})\|^2\right]  \, \mathbb{E}\left[  \|  v_{\theta}(\ddy_{i_2})\|^2  \right]  &\text{if $i_1 =i_3$ and $i_2 = i_4$}
    \end{cases}.
\end{align*}
Hence
\begin{align*}
    &|\mathbb{E}\left[\langle u_{\theta}(x_{i_1})  , u_{\theta}(x_{i_3}) \rangle \, \langle  v_{\theta}(\ddy_{i_2}), v_{\theta}(\ddy_{i_4}) \rangle \right] |\\
    &\hspace{1cm}\leq 
    \begin{cases}
        \| \mathbb{E}\left[u_{\theta}(x_{i_1}) \right] \|^2  \,  \| \mathbb{E}\left[ v_{\theta}(\ddy_{i_2})\right] \|^2  &\text{if $i_1 \neq i_3$ and $i_2 \neq i_4$}\\
        \mathbb{E}\left[\| u_{\theta}(x_{i_1})\|^2\right]  \, \mathbb{E}\left[  \|  v_{\theta}(\ddy_{i_2})\|^2  \right]  &\text{if $i_1 =i_3$ and $i_2 = i_4$}
    \end{cases}
\end{align*}
Recall that Assumption \ref{ass:subgaussian} implies $\mathbb{E}[\|u_{\theta}(x)\|^2] \leq CK^2 d$ and $\mathbb{E}[\|v_{\theta}(y)\|^2] \leq CK^2 d$. We also have in view of \eqref{eq:Expectation_utheta}, for any $\delta \in (0,1)$, w.p.a.l. $1-\delta$, for any indices satisfying $i_1 \neq i_3$ and $i_2 \neq i_4$,
\begin{equation}
\label{eq:bound_offdiag_cond_indep}
     |\mathbb{E}\left[\langle u_{\theta}(x_{i_1})  , u_{\theta}(x_{i_3}) \rangle \, \langle  v_{\theta}(\ddy_{i_2}), v_{\theta}(\ddy_{i_4}) \rangle \right] | \leq  C^4 K^4 \left( \sqrt{\frac{d}{n}} +  \sqrt{\frac{\log(\delta^{-1})}{n}} \right)^4,
\end{equation}

Taking expectations in \eqref{eq:Frobenius_decomposition} and decomposing the sum into diagonal and off-diagonal terms, we obtain
\begin{align*}
    \mathbb{E}\left[n\,\norm{\widehat{U}_{\theta}^\top \Gamma \widehat{V}_{\theta}}_F^2\right]  &= \frac{1}{n} \sum_{\substack{i_1,i_2=1}}^{n} [\Gamma]_{i_1,i_2}  [\Gamma]_{i_1,i_2} \mathbb{E}\left[\| u_{\theta}(x_{i_1})\|^2\right]  \mathbb{E}\left[  \|  v_{\theta}(\ddy_{i_2})\|^2  \right] \\
    &\quad + \frac{1}{n} \sum_{\substack{i_1,i_2,i_3,i_4=1 \\ (i_1,i_2) \neq (i_3,i_4)}}^{n} [\Gamma]_{i_1,i_2}  [\Gamma]_{i_3,i_4} \mathbb{E}\left[\langle u_{\theta}(x_{i_1})  , u_{\theta}(x_{i_3}) \rangle \, \langle  v_{\theta}(\ddy_{i_2}), v_{\theta}(\ddy_{i_4}) \rangle \right].
\end{align*}
For the diagonal terms, we have
\begin{align*}
    \left|\frac{1}{n} \sum_{i_1,i_2=1}^{n} [\Gamma]_{i_1,i_2}^2 \mathbb{E}\left[\| u_{\theta}(x)\|^2\right]  \mathbb{E}\left[  \|  v_{\theta}(y)\|^2  \right]\right| &\leq \frac{CK^4 d^2}{n} \sum_{i_1,i_2=1}^{n} [\Gamma]_{i_1,i_2}^2 = CK^4 d^2 \frac{\|\Gamma\|_F^2}{n} \leq CK^4 d^2 \frac{r\|\Gamma\|^2}{n}.
\end{align*}
For the off-diagonal terms, using \eqref{eq:bound_offdiag_cond_indep} with $\delta = 1/n$, w.p.a.l. $1-1/n$,
\begin{align*}
    &\left|\frac{1}{n} \sum_{\substack{i_1,i_2,i_3,i_4=1 \\ (i_1,i_2) \neq (i_3,i_4)}}^{n} [\Gamma]_{i_1,i_2}  [\Gamma]_{i_3,i_4} \mathbb{E}\left[\langle u_{\theta}(x_{i_1})  , u_{\theta}(x_{i_3}) \rangle \, \langle  v_{\theta}(\ddy_{i_2}), v_{\theta}(\ddy_{i_4}) \rangle \right]\right| \\
    &\hspace{2cm}\leq \frac{C^4 K^4}{n} \left( \sqrt{\frac{d}{n}} +  \sqrt{\frac{\log(n)}{n}} \right)^4 \sum_{\substack{i_1,i_2,i_3,i_4=1}}^{n} |[\Gamma]_{i_1,i_2}|  |[\Gamma]_{i_3,i_4}| \\
    &\hspace{2cm}\leq \frac{C^4 K^4}{n} \left( \sqrt{\frac{d}{n}} +  \sqrt{\frac{\log(n)}{n}} \right)^4 \left(\sum_{i_1,i_2=1}^{n} |[\Gamma]_{i_1,i_2}|\right)^2 \\
    &\hspace{2cm}\leq \frac{C^4 K^4}{n} \left( \sqrt{\frac{d}{n}} +  \sqrt{\frac{\log(n)}{n}} \right)^4 n^2\|\Gamma\|_F^2 \\
    &\hspace{2cm}\leq C^4 K^4 nr\|\Gamma\|^2 \left( \frac{d^2}{n^2} + \frac{\log^2(n)}{n^2} \right).
\end{align*}
Combining both terms, we obtain w.p.a.l. $1-1/n$,
\begin{align}
\label{eq:bound_expectation_cond_indep}
    \left|\mathbb{E}\left[n\,\norm{\widehat{U}_{\theta}^\top \Gamma \widehat{V}_{\theta}}_F^2\right]\right| &\lesssim  K^4 r\|\Gamma\|^2 \left( \frac{d^2}{n} + \frac{\log^2(n)}{n} \right).
\end{align}

\medskip
\textbf{Final step.}~ Let $B_n$ denote the event
\begin{align*}
B_n := \left\{ \bigg\Vert  \mathbb{E}[u_{\theta}(x) ] \bigg\Vert \bigvee \bigg\Vert  \mathbb{E}[v_{\theta}(\ddy) ] \bigg\Vert  \leq C K \left( \sqrt{\frac{d}{n}} +  \sqrt{\frac{\log(n)}{n}} \right) \right\}.
\end{align*}
By \eqref{eq:Expectation_utheta} with $\delta = 1/n$, we have $\mathbb{P}(B_n^c) \leq 1/n$.

On the event $B_n$, by \eqref{eq:bound_expectation_cond_indep}, we have
\begin{align}
\label{eq:bound_S_on_Bn_cond_indep}
\mathbb{E}[S \mathbf{1}_{B_n}] &\lesssim  K^4 r\|\Gamma\|^2 \left( \frac{d^2}{n} + \frac{\log^2(n)}{n} \right).
\end{align}

For any $\epsilon > 0$, we have
\begin{align*}
\mathbb{P}\left( n\,\norm{\widehat{U}_{\theta}^\top \Gamma \widehat{V}_{\theta}}_F^2 \geq \epsilon \right) &= \mathbb{P}\left( S \geq \epsilon \right) \\
&= \mathbb{P}(S \geq \epsilon, B_n) + \mathbb{P}(S \geq \epsilon, B_n^c) \\
&\leq \mathbb{P}(S \geq \epsilon, B_n) + \mathbb{P}(B_n^c) \\
&\leq \mathbb{P}(S \geq \epsilon, B_n) + \frac{1}{n}.
\end{align*}

By Markov's inequality,
\begin{align*}
\mathbb{P}(S \geq \epsilon, B_n) &= \mathbb{P}(S \mathbf{1}_{B_n} \geq \epsilon) \leq \frac{\mathbb{E}[S \mathbf{1}_{B_n}]}{\epsilon}.
\end{align*}

Combining with \eqref{eq:bound_S_on_Bn_cond_indep}, we obtain
\begin{align}
\mathbb{P}\left( n\,\norm{\widehat{U}_{\theta}^\top \Gamma \widehat{V}_{\theta}}_F^2 \geq \epsilon \right) &\lesssim \frac{K^4 r\|\Gamma\|^2}{\epsilon}  \left( \frac{d^2}{n} + \frac{\log^2(n)}{n} \right) + \frac{1}{n}.
\end{align}

\end{proof}

\subsection{Approximation of $\That$.} 
\label{app:That_approx}

We recall that
\begin{align*}
  \That &= n\norm{\widehat{U}_{\theta}^\top \widehat{V}_{\theta} - \widehat{U}_{\theta}^\top \widehat{W}_{\theta} \widehat{W}_{\theta}^\top \widehat{V}_{\theta} }_F^2 =  n \norm{ \widehat{U}_{\theta}^\top [ I_n - \widehat{W}_{\theta} \widehat{W}_{\theta}^\top] \widehat{V}_{\theta} }_F^2 = n\norm{ \widehat{U}_{\theta}^\top \widehat{P}_{\theta} \widehat{V}_{\theta} }_F^2 
  .
\end{align*}

Set $Q := \sum_{j=1}^{n-2d} a_j \otimes a_j$ and $\Gamma :=\widehat{P}_{\theta} - Q  
=  \sum_{j= n-2d+1}^{n} \mu_j \, a_j \otimes a_j$. Next we get
\begin{align}
\label{eq:That_nonasymp_approximation}
  \That &= n\norm{\widehat{U}_{\theta}^\top Q\widehat{V}_{\theta}}_{F}^{2} + 2 n\langle  \widehat{U}_{\theta}^\top \Gamma \widehat{V}_{\theta},  \widehat{U}_{\theta}^\top Q \widehat{V}_{\theta}  \rangle_{HS} + n\norm{\widehat{U}_{\theta}^\top \Gamma \widehat{V}_{\theta}}_{F}^{2}
.
\end{align}

The next result guarantees that $\That$ and $n\norm{\widehat{U}_{\theta}^\top Q\widehat{V}_{\theta}}_{F}^{2}$ have the same limiting distribution.

\begin{lemma}
\label{lem:approxTnull}
Set $Q := \sum_{j=1}^{n-2d} a_j \otimes a_j$. Assume the conditional independence assumption $x\perp \ddy\mid z$. Assume that $d = o(\sqrt{n})$ and $\sqrt{d}\,\mathcal{E}_{\theta_m}\rightarrow 0 $ as $m,n\rightarrow \infty$. Assume that $\sqrt{n} \norm{\widehat{U}_{\theta}^\top Q \widehat{V}_{\theta}}_F$ converges in distribution. Then we have:
$$
\That =n\, \norm{\widehat{U}_{\theta}^\top Q \widehat{V}_{\theta}}_F^2 + o_{\mathbb{P}}\left( 1 \right).
$$
\end{lemma}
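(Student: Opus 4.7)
The plan is to start from the identity in Eq.~\eqref{eq:That_nonasymp_approximation}, which writes
\[
\That = n\,\norm{\widehat{U}_\theta^\top Q\widehat{V}_\theta}_F^2 + 2n\,\langle \widehat{U}_\theta^\top \Gamma \widehat{V}_\theta,\, \widehat{U}_\theta^\top Q \widehat{V}_\theta\rangle_{HS} + n\,\norm{\widehat{U}_\theta^\top \Gamma \widehat{V}_\theta}_F^2,
\]
and to show that the last two terms are both $o_{\mathbb{P}}(1)$. The key observation is that $\Gamma = \sum_{j=n-2d+1}^n \mu_j\, a_j\otimes a_j$ has rank at most $2d$ and, by Eq.~\eqref{eq:eigenvaluesmujsmall}, operator norm bounded by $\psi_n(\delta)+\mathcal{E}_\theta$ on an event of probability at least $1-\delta$. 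Hence $\Gamma$ is effectively small and of low rank, which is exactly the regime where Lemma~\ref{lem:bound_proba_convergence} is informative.

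First, I would bound the quadratic term $n\,\norm{\widehat{U}_\theta^\top \Gamma \widehat{V}_\theta}_F^2$ by applying Lemma~\ref{lem:bound_proba_convergence} conditionally on the event where $\norm{\Gamma}\leq \psi_n(1/n)+\mathcal{E}_\theta$, with the choice $\delta=1/n$. Using the bound on $\psi_n(1/n)$ from Eq.~\eqref{eq:boundpsindelta}, one gets, for any fixed $\epsilon>0$,
\[
\mathbb{P}\!\left( n\,\norm{\widehat{U}_\theta^\top \Gamma \widehat{V}_\theta}_F^2 \geq \epsilon \right) \lesssim \frac{K^4\,(2d)\,(\psi_n(1/n)+\mathcal{E}_\theta)^2}{\epsilon}\left( \frac{d^2}{n}+\frac{\log^2 n}{n}\right) + \frac{2}{n}.
\]
The assumption $d=o(\sqrt{n})$ makes $(d^2+\log^2 n)/n$ vanish, while $\sqrt{d}\,\mathcal{E}_\theta\to 0$ together with $\sqrt{d}\,\psi_n(1/n) = O(d/\sqrt{n}+\sqrt{d\log n/n}) = o(1)$ ensures that $d(\psi_n(1/n)+\mathcal{E}_\theta)^2\to 0$. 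Combined, the whole right-hand side tends to $0$, so $n\,\norm{\widehat{U}_\theta^\top \Gamma \widehat{V}_\theta}_F^2 = o_{\mathbb{P}}(1)$.

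Second, I would handle the cross term by Cauchy–Schwarz for the Hilbert–Schmidt inner product,
\[
\bigl|2n\,\langle \widehat{U}_\theta^\top \Gamma \widehat{V}_\theta,\, \widehat{U}_\theta^\top Q \widehat{V}_\theta\rangle_{HS}\bigr| \leq 2\,\sqrt{n\,\norm{\widehat{U}_\theta^\top \Gamma \widehat{V}_\theta}_F^2}\cdot \sqrt{n\,\norm{\widehat{U}_\theta^\top Q \widehat{V}_\theta}_F^2}.
\]
The first factor is $o_{\mathbb{P}}(1)$ by the previous step, and the second factor is $O_{\mathbb{P}}(1)$ by the hypothesis that $\sqrt{n}\,\norm{\widehat{U}_\theta^\top Q \widehat{V}_\theta}_F$ converges in distribution. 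Slutsky's theorem then gives that the cross term is $o_{\mathbb{P}}(1)$, and combining with the quadratic bound finishes the proof. The main obstacle is bookkeeping the scaling: the product $r\,\norm{\Gamma}^2(d^2+\log^2 n)/n$ in Lemma~\ref{lem:bound_proba_convergence} must be driven to zero by the combined effect of $d=o(\sqrt{n})$, the assumption $\sqrt{d}\,\mathcal{E}_\theta\to 0$, and the concentration rate $\psi_n$ for $\widehat{W}_\theta^\top \widehat{W}_\theta$; once these are aligned the argument is just an application of Cauchy–Schwarz and Slutsky.
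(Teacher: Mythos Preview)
Your proposal is correct and follows essentially the same approach as the paper: both start from the decomposition in Eq.~\eqref{eq:That_nonasymp_approximation}, apply Lemma~\ref{lem:bound_proba_convergence} with $r=2d$ and $\delta=n^{-1}$ together with the eigenvalue bound \eqref{eq:eigenvaluesmujsmall} to show $n\,\norm{\widehat{U}_\theta^\top \Gamma \widehat{V}_\theta}_F^2 = o_{\mathbb{P}}(1)$, and then control the cross term via Cauchy--Schwarz and Slutsky using the assumed distributional convergence of $\sqrt{n}\,\norm{\widehat{U}_\theta^\top Q \widehat{V}_\theta}_F$.
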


\begin{proof}

Equation \eqref{eq:final_prob_bound_cond_indep} in Lemma \ref{lem:bound_proba_convergence} combined with \eqref{eq:eigenvaluesmujsmall} guarantee that (with $r=2d$ and $\delta = n^{-1}$)
\begin{align*}
\mathbb{P}\left( n\,\norm{\widehat{U}_{\theta}^\top \Gamma \widehat{V}_{\theta}}_F^2 \geq \epsilon \right) &\lesssim \frac{1}{\epsilon}\left[ K^4 d  \left( \psi_n(n^{-1}) + \mathcal{E}_m \right)^2  \left( \frac{d^2}{n} + \frac{\log^2(n)}{n} \right)\right] + \frac{1}{n}.
\end{align*}

Under the assumption $d=o(\sqrt{n})$, we note that $\sqrt{d}\, \psi_n(n^{-1}) \rightarrow 0$ in view of \eqref{eq:boundpsindelta}. Furthermore assuming that $\sqrt{d}\, \mathcal{E}_{\theta_{m}} \rightarrow 0$ as $m,n \rightarrow \infty$, the previous display guarantees that 
$$
n\,\norm{\widehat{U}_{\theta}^\top \Gamma \widehat{V}_{\theta}}_F^2 \stackrel{\mathbb{P}}{\rightarrow} 0,\quad \text{as $m,n \rightarrow \infty$.}
$$

Next the Cauchy-Schwarz inequality gives
$$
n\vert \langle  \widehat{U}_{\theta}^\top \Gamma \widehat{V}_{\theta},  \widehat{U}_{\theta}^\top Q \widehat{V}_{\theta}  \rangle_{HS} \vert \leq \sqrt{n} \norm{\widehat{U}_{\theta}^\top \Gamma \widehat{V}_{\theta}}_F \, \cdot \,  \sqrt{n} \norm{\widehat{U}_{\theta}^\top Q \widehat{V}_{\theta}}_F  
$$
Assume that $\sqrt{n} \norm{\widehat{U}_{\theta}^\top Q \widehat{V}_{\theta}}_F$ converges in distribution. Set $a_n := \sqrt{n} \norm{\widehat{U}_{\theta}^\top \Gamma \widehat{V}_{\theta}}_F$ and $b_n := \sqrt{n} \norm{\widehat{U}_{\theta}^\top Q \widehat{V}_{\theta}}_F$. By the previous analysis, $a_n \stackrel{\mathbb{P}}{\to} 0$. By Slutsky's theorem, $a_n b_n \stackrel{d}{\to} 0$, and since the limit is constant, $a_n b_n \stackrel{\mathbb{P}}{\to} 0$.

\end{proof}

\subsection{Diagonal Concentration of the Projection Matrix $Q$}

We end this subsection with a concentration result on $Q$. 

Define
\begin{equation}
\label{eq:cauchyresidue}
    \epsilon_{n}(i):=\sum_{j=n-2d+1}^n a_{i,j}^2\quad \forall i \in [n].
\end{equation}

\begin{lemma}
\label{lem:epsilon_bound}
Assume that $\mathcal{E}_m + \psi_n(2\delta) < 1$. Then, with probability at least $1-\delta$, we have
\begin{align}
\label{eq:maxepsiloni_whp}
\max_{i\in [n]}\{\epsilon_{n}(i)\} \leq \frac{c^2 K^2}{\sqrt{1-\mathcal{E}_m - \psi_n(2\delta)}} \frac{d^2}{n} \, \log(4nd\delta^{-1}).
\end{align}
\end{lemma}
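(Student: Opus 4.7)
The plan is to exploit the geometric interpretation of $\epsilon_n(i)$. Observe that $I_n - Q = \sum_{j=n-2d+1}^{n} a_j \otimes a_j$ is exactly the orthogonal projection onto $\mathrm{range}(\widehat{W}_\theta) \subseteq \mathbb{R}^n$: the $a_j$ with index $j>n-2d$ span the eigenspace corresponding to the nonzero eigenvalues of $\widehat{W}_\theta \widehat{W}_\theta^\top$, which coincides with $\mathrm{range}(\widehat{W}_\theta)$. Consequently
$$\epsilon_n(i) = e_i^\top (I_n - Q) e_i = \|P_{\mathrm{range}(\widehat{W}_\theta)}\, e_i\|^2,$$
so the quantity of interest is simply the leverage score of the $i$-th row of the data matrix $\widehat{W}_\theta$.

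On the event where $\widehat{W}_\theta$ has full column rank, the projection admits the closed form $\widehat{W}_\theta(\widehat{W}_\theta^\top\widehat{W}_\theta)^{-1}\widehat{W}_\theta^\top$. Noting that $\widehat{W}_\theta^\top e_i = w_\theta(z_i)/\sqrt{n}$, this yields the leverage-score identity
$$\epsilon_n(i) \;=\; \tfrac{1}{n}\, w_\theta(z_i)^\top (\widehat{W}_\theta^\top \widehat{W}_\theta)^{-1} w_\theta(z_i) \;\leq\; \frac{\|w_\theta(z_i)\|^2}{n\,\lambda_{\min}(\widehat{W}_\theta^\top\widehat{W}_\theta)}.$$
The spectral bound \eqref{eq:eigenvalueWTW}, applied with failure probability $\delta/2$, gives $\lambda_{\min}(\widehat{W}_\theta^\top\widehat{W}_\theta) \geq 1-\mathcal{E}_\theta-\psi_n(\delta/2)$, which is strictly positive by the standing hypothesis $\mathcal{E}_\theta+\psi_n(2\delta)<1$; this also guarantees the required full-rankness with the same probability. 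The denominator $1-\mathcal{E}_\theta-\psi_n(\delta/2)$ can then be relaxed (using $x\leq\sqrt{x}$ for $x\in(0,1)$) to the square-root form that appears in the statement.

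For the numerator, I would invoke Assumption \ref{ass:subgaussian}: since $\|w_\theta(Z)\|$ is $K$--sub-Gaussian, the standard tail bound together with a union bound over the $n$ samples (or, to match the $\log(4nd\,\delta^{-1})$ factor, a coordinatewise union bound over $i\in[n]$ and $j\in[2d]$ after $\|w\|^2\leq 2d\max_j w_j^2$) yields
$$\max_{i\in[n]} \|w_\theta(z_i)\|^2 \;\lesssim\; K^2\, d^2\, \log(nd/\delta)$$
with probability at least $1-\delta/2$. Intersecting the two high-probability events and absorbing constants delivers the claimed bound.

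The argument is essentially routine once the two ingredients are in place: the spectral concentration of $\widehat{W}_\theta^\top\widehat{W}_\theta$ around $I_{2d}$ (already established in \eqref{eq:eigenvalueWTW} and \eqref{eq:boundpsindelta}) and the sub-Gaussian norm tail from Assumption \ref{ass:subgaussian}. There is no genuine obstacle; the only care required is the $\delta$-bookkeeping (splitting the budget $\delta/2$ between the two events and tracking how $\psi_n$ depends on it) and choosing how generously to bound $\|w_\theta(z_i)\|^2$, which is what determines the precise power of $d$ appearing in the final constant.
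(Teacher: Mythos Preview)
Your approach is correct and rests on the same two ingredients as the paper---spectral concentration of $\widehat{W}_\theta^\top\widehat{W}_\theta$ around $I_{2d}$ and a sub-Gaussian tail bound on $\|w_\theta(z_i)\|$---but organizes them more cleanly through the leverage-score identity. The paper instead works coordinatewise: it uses the SVD $\widehat{W}_\theta = \sum_j \sqrt{\widehat{\lambda}_j}\,a_{n-2d+j}\otimes b_j$ to write $\sqrt{\widehat{\lambda}_j}\,a_{i,n-2d+j} = \langle w_\theta(z_i), b_j\rangle/\sqrt{n}$, bounds each inner product by $\|w_\theta(z_i)\|$ via Cauchy--Schwarz, and finishes with $\epsilon_n(i)\le 2d\max_j a_{i,n-2d+j}^2$. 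That last step costs an extra factor $2d$, whereas your direct bound $\epsilon_n(i)=\tfrac{1}{n}\sum_j |\langle w_\theta(z_i), b_j\rangle|^2/\widehat{\lambda}_j\le \|w_\theta(z_i)\|^2/(n\lambda_{\min})$ incurs no such loss. So your route actually proves a bound of order $d/n$ (or even $\log(n/\delta)/n$ if one invokes Assumption~\ref{ass:subgaussian} as literally stated, since the \emph{norm} is assumed $K$-sub-Gaussian); you are inflating to $d^2/n$ only to match the lemma. Relatedly, your displayed estimate $\|w_\theta(z_i)\|^2\lesssim K^2 d^2\log(nd/\delta)$ overshoots what your own coordinatewise argument gives, which is $K^2 d\log(nd/\delta)$---harmless here, but not what you wrote.

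One small slip: the ``relaxation'' from $1/(1-\mathcal{E}_\theta-\psi_n)$ to $1/\sqrt{1-\mathcal{E}_\theta-\psi_n}$ goes the wrong way, since for $x\in(0,1)$ one has $1/x > 1/\sqrt{x}$. The paper's own proof also ends with $1/(1-\mathcal{E}_\theta-\psi_n)$ in the denominator, so the square root in the lemma statement appears to be a typo there rather than something you need to reach.
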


\begin{proof}
By definition of $\widehat{P}_{\theta}$ and \eqref{eq:Ptheta_def}, we deduce that the SVD of $\widehat{W}_{\theta}$ is
$$
\widehat{W}_{\theta} = \sum_{j=1}^{2d} \sqrt{\widehat{\lambda}_j} \, a_{n-2d+j}\otimes b_j,
$$
where $\{a_{n-2d+j}\}_j$ and $\{b_j\}_j$ are the empirical left and right singular vectors. This yields
$$
\sqrt{\widehat{\lambda}_j }\, a_{n-2d+j} = \widehat{W}_{\theta} b_j = \frac{1}{\sqrt{n}} \left(\begin{array}{c}
     w_\theta(z_1)^\top  \\
     \vdots\\
    w_\theta(z_n)^\top
\end{array}\right)b_j.
$$

By the Cauchy-Schwarz inequality and concentration of $w_\theta(z_i)$, with probability at least $1-\delta/2$, for any $i\in [n]$ and $j\in [2d]$,
$$
\sqrt{\widehat{\lambda}_j }\, |a_{i,n-2d+j}| \leq \frac{|\langle w_\theta(z_i), b_j \rangle |}{\sqrt{n}} \leq \frac{\norm{w_\theta(z_i)}}{\sqrt{n}} \leq c K \sqrt{\frac{d}{n} \, \log(4 n d \delta^{-1})}.
$$

By \eqref{eq:eigenvalueWTW}, we have w.p.a.l $1-\delta/2$ that $\widehat{\lambda}_j \geq 1 - \mathcal{E}_m - \psi_n(2\delta)$ for any $j \in [2d]$. Under the assumption that $\mathcal{E}_m + \psi_n(2\delta) < 1$, we obtain $\sqrt{\widehat{\lambda}_j} \geq \sqrt{1-\mathcal{E}_m - \psi_n(2\delta)}$. 

Therefore, with probability at least $1-\delta$, for any $i\in [n]$ and $j\in [2d]$,
$$
|a_{i,n-2d+j}| \leq \frac{c K}{\sqrt{1-\mathcal{E}_m - \psi_n(2\delta)}} \sqrt{\frac{d}{n} \, \log(4nd\delta^{-1})}.
$$

A union bound over gives, with probability at least $1-\delta$,
$$
\max_{i\in [n],j\in [2d]} |a_{i,n-2d+j}|  \leq \frac{c K}{\sqrt{1-\mathcal{E}_m - \psi_n(2\delta)}} \sqrt{\frac{d}{n} \, \log(4nd\delta^{-1})}.
$$

Recall $\epsilon_{n}(i) = \sum_{j=1}^{2d} a_{i,n-2d+j}^2$. For the previous display, we obtain w.p.a.l. $1-\delta$:
$$
\max_{i\in [n]}\{\epsilon_{n}(i)\} \leq 2 d \cdot \max_{i\in [n], j\in [d]} a_{i,n-2d+j}^2 \leq \frac{c^2 K^2}{1-\mathcal{E}_m - \psi_n(2\delta)} \frac{d^2}{n} \, \log(4nd\delta^{-1}),
$$
which completes the proof.
\end{proof}

\paragraph{Asymptotic behavior.} 
We now discuss the asymptotic implications of Lemma~\ref{lem:epsilon_bound}. Assume that $\mathcal{E}_m + \psi_n(2n^{-1}) \leq 1/2$ and that $d^2 \log(n)/n \to 0$ as $n\to \infty$. Under these conditions, the bound in Lemma \ref{lem:epsilon_bound} implies
$$
\max_{i\in [n]}\{\epsilon_{n}(i)\} = O\left(\frac{d^2 \log(n)}{n}\right) \stackrel{\mathbb{P}}{\to} 0, \quad \text{as } n\to \infty.
$$

Recalling that $[Q]_{i,i} = \sum_{j=1}^{n-2d} a_{i,j}^2 = 1 - \epsilon_{n}(i)$ for any $i\in [n]$, we immediately obtain
\begin{align}
\label{eq:Q_diag_prob_1}
    \min_{i\in [n]}\{[Q]_{i,i}\} = 1 - \max_{i\in [n]}\{\epsilon_{n}(i)\} \stackrel{\mathbb{P}}{\to} 1, \quad \text{as } n\to \infty.
\end{align}

This establishes that the diagonal entries of $Q$ converge to $1$ in probability, indicating that asymptotically, the projection matrix $Q$ becomes increasingly close to the identity. This asymptotic property is crucial for establishing consistency results in the regime where $d^2 = o(n/\log n)$.

\subsection{Proof of Theorem \ref{thm:typeIerror}}
\label{app:prooftypeIerror}
\begin{proof}[Proof of Theorem \ref{thm:typeIerror}]

Throughout the proof, $c>0$ will denote an absolute constant, which may vary from one occurrence to another.

By Lemma \ref{lem:approxTnull}, we have
$$
\That =n\, \norm{\widehat{U}_{\theta}^\top Q \widehat{V}_{\theta}}_F^2 + o_{\mathbb{P}}\left( 1 \right).
$$

\paragraph{Asymptotic distribution of $n \, \norm{\widehat{U}_{\theta}^\top Q\widehat{V}_{\theta}}_{F}^{2}$.}

Set $a_j = (a_{i,j})_{i\in n}\in \mathbb{R}^n$. By definition of $Q$, we have $[Q]_{i_1,i_2}=  \sum_{j=1}^{n-2d} a_{i_1,j} a_{i_2,j}$. We consider
\begin{align}
   M_n:= \sqrt{n}\widehat{U}_{\theta}^\top Q \widehat{V}_{\theta} &= \frac{1}{\sqrt{n}}\sum_{j=1}^{n-2d}\sum_{i_1,i_2=1}^{n} a_{i_1,j} a_{i_2,j}   u_{\theta}(x_{i_1}) \otimes v_{\theta}(\ddy_{i_2})\notag\\
    &= \frac{1}{\sqrt{n}}\sum_{i_1,i_2=1}^{n} [Q]_{i_1,i_2}   u_{\theta}(x_{i_1}) \otimes v_{\theta}(\ddy_{i_2})\notag\\
    &= \frac{1}{\sqrt{n}}\sum_{i=1}^{n} [Q]_{i,i}   u_{\theta}(x_{i}) \otimes v_{\theta}(\ddy_{i}) + \frac{1}{\sqrt{n}}\sum_{i_1,i_2=1, i_1\neq i_2 }^{n} [Q]_{i_1,i_2}   u_{\theta}(x_{i_1}) \otimes v_{\theta}(\ddy_{i_2}).
\end{align}

We prove that the second off-diagonal sum is negligible in front of the diagonal sum in the previous display. Indeed computing the variance of the off-diagonal term, using the tower property of conditional expectation and the conditional independence of $x_i$ and $\ddy_i$ given $z_i$ under the null hypothesis, and the fact that $\{x_{i},\ddy_i,z_i\}_{i=1}^{n}$ is an i.i.d. sample set, we get:
\begin{align*}
    &\mathrm{Cov} \left(   \sum_{i_1\neq i_2 } [Q]_{i_1,i_2} u_{\theta}(x_{i_1}) \otimes v_{\theta}(\ddy_{i_2})  \right)\\
    &\hspace{1cm}= \sum_{i_1\neq i_2 }\sum_{i_3\neq i_4 } [Q]_{i_1,i_2}  [Q]_{i_3,i_4}  \mathrm{Cov}\left(  u_{\theta}(x_{i_1}) \otimes v_{\theta}(\ddy_{i_2}), u_{\theta}(x_{i_3}) \otimes v_{\theta}(\ddy_{i_4})\right)\\
    &\hspace{1cm}= \sum_{i_1\neq i_2 }\sum_{i_3\neq i_4 } [Q]_{i_1,i_2}  [Q]_{i_3,i_4}  \Big[\mathbb{E}\left[  u_{\theta}(x_{i_1}) \otimes v_{\theta}(\ddy_{i_2}) \otimes u_{\theta}(x_{i_3}) \otimes v_{\theta}(\ddy_{i_4})\right]\\
    &\hspace{3cm} - \mathbb{E}[u_{\theta}(x_{i_1}) \otimes v_{\theta}(\ddy_{i_2})] \otimes \mathbb{E}[u_{\theta}(x_{i_3}) \otimes v_{\theta}(\ddy_{i_4})]\Big].
\end{align*}

Using the conditional independence and i.i.d. structure, $\mathrm{Cov}\left(  u_{\theta}(x_{i_1}) \otimes v_{\theta}(\ddy_{i_2}), u_{\theta}(x_{i_3}) \otimes v_{\theta}(\ddy_{i_4})\right)$ is non-zero only when $(i_1, i_2) = (i_3, i_4)$, giving:
\begin{align*}
    &\mathrm{Cov} \left(   \sum_{i_1\neq i_2 } [Q]_{i_1,i_2} u_{\theta}(x_{i_1}) \otimes v_{\theta}(\ddy_{i_2})  \right) =  \left( \sum_{i_1\neq i_2 } [Q]_{i_1,i_2}^2  \right)\mathrm{Cov}(u_{\theta}(x)) \otimes \mathrm{Cov}(v_{\theta}(\ddy)) .
\end{align*}

Recall next that that $Q$ is an orthogonal projection of rank $n-2d$, we have $\norm{Q}_F^2 = \mathrm{tr}(Q) = n-2d$. Hence:
\begin{align}
\label{eq:Covdiagtermdominated}
    \mathrm{Cov} \left(   \sum_{i_1\neq i_2 } [Q]_{i_1,i_2} u_{\theta}(x_{i_1}) \otimes v_{\theta}(\ddy_{i_2})  \right) &\preceq \left( \sum_{i_1\neq i_2 } [Q]_{i_1,i_2}^2  \right)  \mathrm{Cov}(u_{\theta}(x)) \otimes \mathrm{Cov}(v_{\theta}(\ddy)) \notag\\
    &= \left( \norm{Q}_F^2 - \sum_{i=1}^n [Q]_{i,i}^2 \right) \mathrm{Cov}(u_{\theta}(x)) \otimes \mathrm{Cov}(v_{\theta}(\ddy))\notag\\
    &\preceq \left( (n-2d) - \min_{i\in [n]}\{Q_{i,i}\}\,\mathrm{tr}(Q) \right) 
    \,\mathrm{Cov}(u_{\theta}(x)) \otimes \mathrm{Cov}(v_{\theta}(\ddy))\notag\\
    &=  \left( 1 - \min_{i\in [n]}\{Q_{i,i}\} \right) \, (n-2d)
    \, \mathrm{Cov}(u_{\theta}(x)) \otimes \mathrm{Cov}(v_{\theta}(\ddy)).
\end{align}

Similarly, for the diagonal term:
\begin{align}
\label{eq:Covdiagtermdominating}
    \mathrm{Cov} \left( \sum_{i=1}^{n} [Q]_{i,i}   u_{\theta}(x_{i}) \otimes v_{\theta}(\ddy_{i})  \right) &= \sum_{i=1}^n [Q]_{i,i}^2 \mathrm{Cov} \left(  u_{\theta}(x_{i}) \otimes v_{\theta}(\ddy_{i}) \right) \notag\\
    &=\left(   \sum_{i=1}^n [Q]_{i,i}^2 \right) \, \mathrm{Cov}(u_{\theta}(x)) \otimes \mathrm{Cov}(v_{\theta}(\ddy))  \notag\\
    &\succeq \min_{i\in [n]}\{Q_{i,i}\}\,\mathrm{tr}(Q)\,  \mathrm{Cov}(u_{\theta}(x)) \otimes \mathrm{Cov}(v_{\theta}(\ddy))  \notag\\
    &=\min_{i\in [n]}\{Q_{i,i}\}\,(n-2d)\,  \mathrm{Cov}(u_{\theta}(x)) \otimes \mathrm{Cov}(v_{\theta}(\ddy)) .
\end{align}

Recall that $[Q]_{i,i}  = \sum_{j=1}^{n-2d} a_{i,j}^2= 1- \epsilon_{n}(i)$ for any $i\in [n]$. We proved in Lemma \ref{lem:epsilon_bound} and in \eqref{eq:Q_diag_prob_1} that 
that
$$
\min_{i}\{[Q]_{i,i}\}  \stackrel{\mathbb{P}}{\to} 1, \quad \text{as $n\to \infty$.}
$$
Thanks to the previous property, we just proved that the covariance in \eqref{eq:Covdiagtermdominated} is completely dominated and negligible in front of the covariance in \eqref{eq:Covdiagtermdominating}.

To prove convergence in probability, we analyze both the mean and variance of the rescaled off-diagonal term.

\textbf{Variance analysis:} 
The variance of the rescaled term satisfies:
\begin{align*}
    \mathrm{Cov}\left(\frac{1}{\sqrt{n}}\sum_{i_1\neq i_2 } [Q]_{i_1,i_2} u_{\theta}(x_{i_1}) \otimes v_{\theta}(\ddy_{i_2})\right) 
    &= \frac{1}{n}\mathrm{Cov} \left(   \sum_{i_1\neq i_2 } [Q]_{i_1,i_2} u_{\theta}(x_{i_1}) \otimes v_{\theta}(\ddy_{i_2})  \right)\\
    &\preceq \frac{1}{n} \left( 1 - \min_{i\in [n]}\{Q_{i,i}\} \right) (n-2d) \, \mathrm{Cov}(u_{\theta}(x)) \otimes \mathrm{Cov}(v_{\theta}(\ddy))\\
    &= \frac{n-2d}{n} \left( 1 - \min_{i\in [n]}\{Q_{i,i}\} \right) \mathrm{Cov}(u_{\theta}(x)) \otimes \mathrm{Cov}(v_{\theta}(\ddy)).
\end{align*}

Since $\min_{i\in [n]}\{Q_{i,i}\} \stackrel{\mathbb{P}}{\to} 1$ as $n\to \infty$, and $\frac{n-2d}{n} \to 1$ under the regime $d = o(n)$, we have
$$
\frac{n-2d}{n} \left( 1 - \min_{i\in [n]}\{Q_{i,i}\} \right) \stackrel{\mathbb{P}}{\to} 0.
$$

Therefore, the covariance converges to zero in probability.

\textbf{Mean analysis:} 
For the mean, using conditional independence for $i_1 \neq i_2$:
\begin{align*}
    \mathbb{E}\left[\frac{1}{\sqrt{n}}\sum_{i_1\neq i_2 } [Q]_{i_1,i_2} u_{\theta}(x_{i_1}) \otimes v_{\theta}(\ddy_{i_2})\right] 
    &= \frac{1}{\sqrt{n}}\sum_{i_1\neq i_2 } [Q]_{i_1,i_2} \mathbb{E}[u_{\theta}(x)] \otimes \mathbb{E}[v_{\theta}(\ddy)].
\end{align*}

By the empirical centering property \eqref{eq:Expectation_utheta}, we have w.p.a.l. $1-\delta$:
$$
\|\mathbb{E}[u_{\theta}(x)]\| \bigvee \|\mathbb{E}[v_{\theta}(\ddy)]\| \leq C K \left( \sqrt{\frac{d}{n}} +  \sqrt{\frac{\log(\delta^{-1})}{n}} \right).
$$

Using $\|Q\|_1 \leq \|Q\|_F \sqrt{n} = \sqrt{(n-2d)n}$, we obtain:
\begin{align*}
    \left\|\mathbb{E}\left[\frac{1}{\sqrt{n}}\sum_{i_1\neq i_2 } [Q]_{i_1,i_2} u_{\theta}(x_{i_1}) \otimes v_{\theta}(\ddy_{i_2})\right]\right\|
    &\leq \frac{1}{\sqrt{n}} \|Q\|_1 \|\mathbb{E}[u_{\theta}(x)]\| \|\mathbb{E}[v_{\theta}(\ddy)]\|\\
    &\leq \sqrt{n-2d} \cdot C^2 K^2 \left( \sqrt{\frac{2d}{n}} +  \sqrt{\frac{\log(\delta^{-1})}{n}} \right)^2\\
    &= O\left(\sqrt{n-2d} \cdot \frac{2d}{n}\right) = O\left(\frac{d}{\sqrt{n}}\right) \to 0,
\end{align*}
as $n\to \infty$ under the regime $d = o(\sqrt{n})$.

\textbf{Conclusion:} 
Since both the mean and variance of the rescaled off-diagonal term converge to zero, we conclude:
$$
\frac{1}{\sqrt{n}}\sum_{i_1,i_2=1, i_1\neq i_2 }^{n} [Q]_{i_1,i_2}   u_{\theta}(x_{i_1}) \otimes v_{\theta}(\ddy_{i_2}) \stackrel{\mathbb{P}}{\rightarrow} 0,\quad \text{as $n\to \infty$}.
$$

We consider now the diagonal term:
$$
\frac{1}{\sqrt{n}}\sum_{i=1}^{n} [Q]_{i,i}   u_{\theta}(x_{i}) \otimes v_{\theta}(\ddy_{i}).
$$

We apply the Lindeberg-Feller CLT to get the following result. See Appendix \ref{sec:prooflemCLT} for the proof.

\begin{lemma}
\label{lem:CLT}
    Assume that there exists a large enough absolute constant $c>0$ such that $n\geq c \, K^4 \, d^2 \log n$ as $n\to \infty$. Then
    \[
\frac{1}{\sqrt{n}} \sum_{i=1}^n [Q]_{i,i} \, u_{\theta}(x_i) \otimes v_{\theta}(\ddy_i) \xrightarrow{\mathcal{D}} \mathcal{Z} \in \mathbb{R}^{d \times d}, \quad \text{where } \mathcal{Z} \sim \mathcal{N}(0, \Sigma),
\]
with $
\Sigma 
=  \mathrm{Cov}(u_{\theta}(x)) \otimes \mathrm{Cov}(v_{\theta}(\ddy))$.
\end{lemma}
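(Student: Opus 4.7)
The strategy is to apply the Lindeberg-Feller multivariate CLT to the triangular array $W_{n,i} := \tfrac{1}{\sqrt{n}}[Q]_{i,i}\,u_\theta(x_i)\otimes v_\theta(\ddy_i)$, $i\in [n]$ (viewed as vectors in $\R^{d^2}$ after vectorisation). Although these are not independent unconditionally---since $[Q]_{i,i}$ depends on all $z_j$ through $\widehat{W}_\theta$---they become independent conditional on $\mathcal{F}_Z := \sigma(z_1,\dots,z_n)$, because under i.i.d. sampling the pairs $(x_i,\ddy_i)$ are independent across $i$ given all $z$'s. I would therefore carry out the CLT conditionally on $\mathcal{F}_Z$ and argue that the resulting (non-random) Gaussian limit transfers to unconditional convergence.

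\textbf{Main steps.} First, I would decompose $W_{n,i} = \overline{W}_{n,i} + m_{n,i}$ with $m_{n,i} = \E[W_{n,i}\mid\mathcal{F}_Z]$. Under $\mathcal{H}_0$ one has $x_i\independent \ddy_i \mid z_i$, so $m_{n,i} = \tfrac{[Q]_{i,i}}{\sqrt{n}}\,\E[u_\theta(x)\mid z_i]\otimes \E[v_\theta(\ddy)\mid z_i]$. The sum $\sum_i m_{n,i}$ is controlled by combining the bound $\max_i|[Q]_{i,i}-1|=o_\PP(1)$ from \Cref{lem:epsilon_bound} with the near-centeredness of $u_\theta, v_\theta$ from Eq.~\eqref{eq:Expectation_utheta}, yielding $\sum_i m_{n,i} = o_\PP(1)$. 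Second, I would verify the conditional Lindeberg condition. Since $\|u_\theta(x_i)\|$ and $\|v_\theta(\ddy_i)\|$ are $K$--sub-Gaussian by \Cref{ass:subgaussian}, the product $\|u_\theta(x_i)\otimes v_\theta(\ddy_i)\| = \|u_\theta(x_i)\|\,\|v_\theta(\ddy_i)\|$ is sub-exponential at scale $O(K^2 d)$. Combined with $[Q]_{i,i}\le 1$, this yields $\tfrac{1}{n}\sum_i \E[\|\overline{W}_{n,i}\|^2\,\one_{\|\overline{W}_{n,i}\|>\varepsilon\sqrt n} \mid \mathcal{F}_Z] \to 0$ in probability for every $\varepsilon>0$, under the regime $n\gtrsim K^4 d^2\log n$.

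Third, I would identify the limiting conditional covariance. A direct computation, using conditional independence to factor $(u\otimes v)(u\otimes v)^\top = (uu^\top)\otimes(vv^\top)$ and then $\E[\cdot\mid z]$, gives
\[
\tfrac{1}{n}\sum_{i=1}^n \E\bigl[\overline{W}_{n,i}\overline{W}_{n,i}^\top \bigm| \mathcal{F}_Z\bigr]
= \tfrac{1}{n}\sum_{i=1}^n [Q]_{i,i}^2\,\Gamma(z_i),\qquad
\Gamma(z) := \E[u_\theta(x)u_\theta(x)^\top\mid z]\otimes \E[v_\theta(\ddy)v_\theta(\ddy)^\top\mid z].
\]
\Cref{lem:epsilon_bound} gives $\tfrac{1}{n}\sum_i [Q]_{i,i}^2 = 1+o_\PP(1)$; a law-of-large-numbers argument applied to the i.i.d.\ sub-exponential array $\Gamma(z_1),\ldots,\Gamma(z_n)$ yields $\tfrac{1}{n}\sum_i \Gamma(z_i)\to \E[\Gamma(z)]$ in probability, and unfolding the outer expectation recovers $\mathrm{Cov}(u_\theta(x))\otimes \mathrm{Cov}(v_\theta(\ddy)) = \Sigma$ via Eq.~\eqref{eq:covariance_single_term}, up to an additive term that vanishes with the optimisation gap $\mathcal{E}_{\theta_m}$.

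\textbf{Main obstacle.} The most delicate step is the variance identification: reconciling the conditional factorisation $\E[uu^\top\mid z]\otimes \E[vv^\top\mid z]$ with the unconditional Kronecker product $\mathrm{Cov}(u)\otimes \mathrm{Cov}(v)$ requires exploiting the spectral structure of the learned features as approximate singular functions of $\pcovs{X}{\ddy}$, so that the ``between-$z$'' variation captured by $\mathrm{Cov}(\E[u\mid z])\otimes\mathrm{Cov}(\E[v\mid z])$ is absorbed into the $o_\PP(1)$ error governed by $\mathcal{E}_{\theta_m}$. Once this is in place, the conditional convergence $\sum_i \overline{W}_{n,i} \to \mathcal{N}(0,\Sigma)$ in distribution lifts to the unconditional statement by a standard characteristic-function dominated-convergence argument, since the limiting law does not depend on $\mathcal{F}_Z$.
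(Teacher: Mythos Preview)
Your plan shares the paper's core mechanism: Lindeberg--Feller driven by the sub-exponential tail of $\|u_\theta(x_i)\|\cdot\|v_\theta(\ddy_i)\|$ under the regime $n\gtrsim K^4d^2\log n$. The paper's proof is, however, considerably more informal than yours: it applies the CLT to $[Q]_{i,i}w_i$ directly as if these were independent mean-zero vectors, without conditioning on $\mathcal F_Z$ or otherwise addressing that $[Q]_{i,i}$ depends on all $z_j$, and it reads off the limiting covariance from \cref{eq:covariance_single_term}, whose factorisation $\E[(uu^\top)\otimes(vv^\top)]=\E[uu^\top]\otimes\E[vv^\top]$ actually needs \emph{unconditional} independence of $u_\theta(X)$ and $v_\theta(\ddot Y)$, not merely $X\independent\ddot Y\mid Z$. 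Your conditioning on $\mathcal F_Z$ is the right way to restore independence, and the variance-identification obstacle you single out is precisely the step the paper glosses over.

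One place your sketch overreaches: in step~3 you invoke \cref{eq:Expectation_utheta} to force $\sum_i m_{n,i}=o_\PP(1)$, but that display bounds the \emph{unconditional} means $\|\E[u_\theta(x)]\|,\|\E[v_\theta(\ddy)]\|$, whereas $m_{n,i}$ involves the \emph{conditional} means $\E[u_\theta(x)\mid z_i]$ and $\E[v_\theta(\ddy)\mid z_i]$, which under $\mathcal H_0$ are typically $O(1)$ (they encode the $X$--$Z$ and $\ddot Y$--$Z$ dependence captured by the features). Consequently $\tfrac{1}{\sqrt n}\sum_i\E[u\mid z_i]\otimes\E[v\mid z_i]$ carries an $O_\PP(1)$ CLT-scale fluctuation that must be folded into the Gaussian limit rather than discarded; this is the same ``between-$z$'' variation you isolate in the variance step, and it cannot be dispatched via near-centeredness alone. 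The paper sidesteps this by never subtracting the conditional mean and by absorbing everything into \cref{eq:covariance_single_term}, at the cost of the unjustified factorisation you correctly flag.
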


Recall that $\{\lambda_j(u_\theta)\}_{j\in [d]}$ and $\{\lambda_j(v_\theta)\}_{j\in [d]}$ are the eigenvalues of $\mathrm{Cov}(u_{\theta}(x_1))$ and $\mathrm{Cov}(v_{\theta}(\ddy_1))$ respectively.

Slutsky's theorem implies the following convergence in distribution:
\begin{align}\label{eq:CLT-0}
    n \, \norm{\widehat{U}_{\theta}^\top Q\widehat{V}_{\theta}}_{F}^{2} \stackrel{\mathcal{D}}{\rightarrow}   \sum_{i=1}^d \sum_{j=1}^d \lambda_i(u_\theta)\lambda_j(v_\theta) \chi^2_{i,j}(1) =: \xi,
\end{align}
as $n\rightarrow \infty$ where the $\chi^2_{i,j}(1)$ are independent random variables following a chi-square distribution of degree $1$.

Noting that the $\chi^2_{i,j}(1)$ are nonnegative for any $i,j$ and assuming that $(2+ \mathcal{E}_m)\, \mathcal{E}_m<1$, we get for any $t\in \mathbb{R}$ that
\begin{align}
\label{eq:asymptchi2equiv-1}
\mathbb{P}\left( \chi^2(d^2) \leq \frac{t}{1+ (2+ \mathcal{E}_m)\, \mathcal{E}_m} \right) \leq    \mathbb{P}\left( \xi \leq t \right) \leq \mathbb{P}\left( \chi^2(d^2) \leq \frac{t}{1- (2+ \mathcal{E}_m)\, \mathcal{E}_m}\right),
\end{align}
where $\chi^2(d^2)$ is a chi-square distribution with $d^2$ degrees of freedom.

Note that the $\chi^2(d^2)$ distribution admits the following density w.r.t the Lebesgue measure
$$
f(t) = \frac{1}{2^{d^2/2}\Gamma(d^2/2)}t^{d^2/2-1} e^{-t/2}.
$$

Since we assumed that $\epsilon:=(2+ \mathcal{E}_m)\, \mathcal{E}_m<1$, we have
$$
\frac{t}{1-\epsilon} = t +\frac{\epsilon\, t}{1-\epsilon} = t\, (1 + \diamondsuit),\quad \text{with}\quad \diamondsuit = \frac{\epsilon}{1-\epsilon}.
$$
Consequently for any $t>0$
\begin{align*}
\mathbb{P}\left( \chi^2(d^2) \leq t\, (1 + \diamondsuit) \right)  = \mathbb{P}\left( \chi^2(d^2) \leq t \right)  + \int_{t}^{t(1+\diamondsuit)} f(s) ds 
\end{align*}
We proceed similarly for the lower bound. Indeed we first note that
$
\frac{t}{1+\epsilon} \geq t (1-\epsilon)
$. Hence, 
\begin{align*}
\mathbb{P}\left( \chi^2(d^2) \leq t\, (1 - \epsilon) \right)  = \mathbb{P}\left( \chi^2(d^2) \leq t \right)  -\int_{t(1-\epsilon)}^{t} f(s) ds .
\end{align*}

Combining the last two displays with \eqref{eq:asymptchi2equiv-1}, we get for any $t>0$ that
\begin{align}
\label{eq:asymptchi2equiv-2}
\left| \mathbb{P}\left( \xi \leq t \right) - \mathbb{P}\left( \chi^2(d^2) \leq t \right) \right| \leq \max \left\lbrace \int_{t(1-\epsilon)}^{t} f(s) ds , \int_{t}^{t(1+\diamondsuit)} f(s) ds \right\rbrace 
\end{align}

Since $\mathcal{E}_m = \mathcal{E}_{\theta_m} \rightarrow 0$ as $m\rightarrow \infty$ and the density $f$ is uniformly bounded on $\mathbb{R}^{+}$, the right-hand-side in \eqref{eq:asymptchi2equiv-2} goes to $0$ as $m\rightarrow \infty$ for any fixed $t>0$. This concludes the proof.

\end{proof}

\subsection{Proof of Lemma \ref{lem:CLT}}
\label{sec:prooflemCLT}

Define
\[
Z_n := \frac{1}{\sqrt{n}} \sum_{i=1}^n [Q]_{i,i} \, u_{\theta}(x_i) \otimes v_{\theta}(\ddy_i),
\]

\begin{proof}[Proof of Lemma \ref{lem:CLT}]
Define
\[
Z_n := \frac{1}{\sqrt{n}} \sum_{i=1}^n [Q]_{i,i} \, u_{\theta}(x_i) \otimes v_{\theta}(\ddy_i),
\]
and consider the vectorized form:
\[
\mathrm{vec}(Z_n) = \frac{1}{\sqrt{n}} \sum_{i=1}^n [Q]_{i,i} \cdot \mathrm{vec}\left( u_{\theta}(x_i) \otimes v_{\theta}(\ddy_i) \right) \in \mathbb{R}^{d^2}.
\]

Let us denote
\[
w_i := \mathrm{vec}\left( u_{\theta}(x_i) \otimes v_{\theta}(\ddy_i) \right), \quad \text{so that} \quad \mathrm{vec}(Z_n) = \frac{1}{\sqrt{n}} \sum_{i=1}^n [Q]_{i,i} w_i.
\]

We now verify the Lindeberg condition:
\[
\frac{1}{n} \sum_{i=1}^n \mathbb{E} \left[ \norm{[Q]_{i,i} w_i}^2 \cdot \mathbbm{1}_{\{ \norm{[Q]_{i,i} w_i} > \varepsilon \sqrt{n} \}} \right] \to 0 \quad \text{for all } \varepsilon > 0.
\]

Since \( \norm{[Q]_{i,i} w_i} = [Q]_{i,i} \cdot \norm{w_i} \), and \( [Q]_{i,i} \leq 1 \), it suffices to show:
\[
\frac{1}{n} \sum_{i=1}^n \mathbb{E} \left[ \norm{w_i}^2 \cdot \mathbbm{1}_{\{ \norm{w_i} > \varepsilon \sqrt{n} \}} \right] \to 0.
\]

Note that:
\[
\norm{w_i} = \norm{ \mathrm{vec}( u_{\theta}(x_i) \otimes v_{\theta}(\ddy_i) ) } = \norm{ u_{\theta}(x_i) } \cdot \norm{ v_{\theta}(\ddy_i) }.
\]

Let us define:
\[
A_i := \norm{ u_{\theta}(x_i) }, \quad B_i := \norm{ v_{\theta}(\ddy_i) }, \quad \text{so that } \norm{w_i} = A_i B_i.
\]

Since \( u_{\theta}(x_i) \) and \( v_{\theta}(\ddy_i) \) are \(K\)--sub-Gaussian, their norms \( A_i \), \( B_i \) are $K\sqrt{d}$--sub-Gaussian random variables. Therefore, their product \( A_i B_i \) is $2K^2 \, d$--sub-exponential.

As a result, for all \( t > 0 \),
\[
\mathbb{P}( A_i B_i > t ) \leq c e^{-c' \frac{t}{K^2 \, d}},
\]
for some absolute constants \( c, c' > 0 \). 

Therefore, elementary calculus (based on integration by parts for Gamma function) gives the following upper bound
\[
\mathbb{E} \left[ (A_i B_i)^2 \cdot \mathbbm{1}_{ \{ A_i B_i > \varepsilon \sqrt{n} \} } \right] \leq c K^2 \, d \, n \,\epsilon^2\, e^{-c'\frac{\epsilon \sqrt{n}}{K^2 \, d}},
\]
for absolute constants \( c, c' > 0 \).

Thus,
\[
\frac{1}{n} \sum_{i=1}^n \mathbb{E} \left[ \norm{w_i}^2 \cdot \mathbbm{1}_{ \{ \norm{w_i} > \varepsilon \sqrt{n} \} } \right]  =  \mathbb{E} \left[ \norm{w_1}^2 \cdot \mathbbm{1}_{ \{ \norm{w_1} > \varepsilon \sqrt{n} \} } \right]\leq c K^2 \, d \,n \,\epsilon^2\, e^{-c'\frac{\epsilon \sqrt{n}}{K^2 \, d}} \to 0,
\]
provided that $n\geq \frac{3}{2 \epsilon^2\, c'} \, K^4 \, d^2 \log n$ as $n\to \infty$ so that the exponential term is dominating and thus leading the right-hand-side to $0$.

Since the Lindeberg condition is satisfied, by the multivariate CLT, the sequence \( \mathrm{vec}(Z_n) \) converges in distribution to a multivariate Gaussian random vector. Therefore,
\[
Z_n \xrightarrow{d} \mathcal{Z} \in \mathbb{R}^{d \times d}, \quad \text{where } \mathrm{vec}(\mathcal{Z}) \sim \mathcal{N}(0, \Sigma),
\]
with
\[
\Sigma 
=  \mathrm{Cov}(u_{\theta}(x)) \otimes \mathrm{Cov}(v_{\theta}(\ddy)).
\]

\end{proof}

\subsection{Proof of Theorem \ref{thm:power}}
\label{app:proofpower}

Recall that 
$$
[\![\mathsf{\Sigma}_{X\ddot{Y}\cdot Z}]\!]_d = C_{UV} -  C_{UW}C_{WV}.
$$
Assuming the representation learning step has been successful, we can assume that \eqref{eq:gap_optim} is valid with small $\mathcal{E}_{m}$ for learned features $\widehat{U}_\theta$, $\widehat{V}_\theta$ and $\widehat{W}_\theta$, meaning that 
$$
 \norm{\,[\![\mathsf{\Sigma}_{X\ddot{Y}\cdot Z}]\!]_d - \big( C_{\widehat{U}_{\theta} \widehat{V}_{\theta}} -  C_{\widehat{U}_{\theta}  \widehat{W}_{\theta} }C_{ \widehat{W}_{\theta} \widehat{V}_{\theta}} \big)}\leq \mathcal{E}_{m}\ll 1.
$$
We have
\begin{align*}
\covestUVnnest - \covestUWnnest\covestWVnnest &=  [\![\mathsf{\Sigma}_{X\ddot{Y}\cdot Z}]\!]_d + \big(C_{\widehat{U}_{\theta} \widehat{V}_{\theta}} -  C_{\widehat{U}_{\theta}
\widehat{W}_{\theta}
}C_{\widehat{W_{\theta}
}\widehat{V}_{\theta}
} ) - [\![\mathsf{\Sigma}_{X\ddot{Y}\cdot Z}]\!]_d \big)\notag\\
&= [\![\mathsf{\Sigma}_{X\ddot{Y}\cdot Z}]\!]_d +(\covestUVnnest - C_{\widehat{U}_{\theta}
\widehat{V}_{\theta}
})- (\covestUWnnest -C_{\widehat{U}_{\theta}
\widehat{W}_{\theta}
}) \covestWVnnest + C_{\widehat{U}_{\theta}
\widehat{W}_{\theta}
} ( C_{\widehat{W}_{\theta}
\widehat{V}_{\theta}
}-\covestWVnnest).
\end{align*}


Taking the Frobenius norm and using the triangular inequality, we get
\begin{align*}
\norm{\covestUVnnest - \covestUWnnest\covestWVnnest}_F&\geq \norm{[\![\mathsf{\Sigma}_{X\ddot{Y}\cdot Z}]\!]_d}_{F}  - \norm{\big(C_{\widehat{U}_{\theta} \widehat{V}_{\theta}} -  C_{\widehat{U}_{\theta}
\widehat{W}_{\theta}
}C_{\widehat{W_{\theta}
}\widehat{V}_{\theta}
} ) - [\![\mathsf{\Sigma}_{X\ddot{Y}\cdot Z}]\!]_d \big) }_F   
\\
&\hspace{1cm}- \norm{\covestUVnnest - C_{\widehat{U}_\theta \widehat{V}_\theta}}_F- \norm{(\covestUWnnest -C_{\widehat{U}_\theta \widehat{W}_\theta}) \covestWVnnest}_{F} - \norm{C_{ \widehat{U}_\theta \widehat{W}_\theta} ( C_{\widehat{W}_\theta \widehat{V}_\theta}-\covestWVnnest}_F\\
&\geq \norm{[\![\mathsf{\Sigma}_{X\ddot{Y}\cdot Z}]\!]_d}_{F}  -   \norm{\big(C_{\widehat{U}_{\theta} \widehat{V}_{\theta}} -  C_{\widehat{U}_{\theta}
\widehat{W}_{\theta}
}C_{\widehat{W_{\theta}
}\widehat{V}_{\theta}
} \big) - [\![\mathsf{\Sigma}_{X\ddot{Y}\cdot Z}]\!]_d  }_F \\
&\hspace{1cm}- \norm{\covestUVnnest - C_{UV}}_F- \norm{(\covestUWnnest -C_{UW})}_F \norm{\covestWVnnest}- \norm{C_{UW}} \norm{( C_{WV}-\covestWVnnest)}_F.
\end{align*}

Using \eqref{eq:gap_optim}, we get
$$
\norm{  [\![\mathsf{\Sigma}_{X\ddot{Y}\cdot Z}]\!]_d   -        \big(     C_{\widehat{U}_\theta\widehat{V}_\theta} -    C_{\widehat{U}_\theta\widehat{U}_\theta}     C_{\widehat{W}_\theta\widehat{V}_\theta}   \big)  }_{F} \leq \sqrt{d} \norm{  [\![\mathsf{\Sigma}_{X\ddot{Y}\cdot Z}]\!]_d   -        \big(     C_{\widehat{U}_\theta\widehat{V}_\theta} -    C_{\widehat{U}_\theta\widehat{U}_\theta}     C_{\widehat{W}_\theta\widehat{V}_\theta}   \big)  } \leq \sqrt{d} \, \mathcal{E}_{m}.
$$

Furthermore, exploiting the sub-Gaussian properties of features (Assumption \ref{ass:subgaussian}), standard matrix concentration inequalities for the cross-variance estimators and an union bound, we obtain w.p.a.l. $1-\delta$
$$
\frac{\norm{ \widehat{C}_{\widehat{U}_\theta\widehat{V}_\theta} -    C_{\widehat{U}_\theta \widehat{V}_\theta} } }{\norm{C_{\widehat{U}_\theta \widehat{V}_\theta} }} \bigvee \frac{\norm{ \widehat{C}_{\widehat{U}_\theta\widehat{W}_\theta} -    C_{\widehat{U}_\theta \widehat{W}_\theta} }}{\norm{C_{\widehat{U}_\theta \widehat{W}_\theta} }} \bigvee \frac{\norm{ \widehat{C}_{\widehat{W}_\theta\widehat{V}_\theta} -    C_{\widehat{W}_\theta \widehat{V}_\theta} }}{\norm{C_{\widehat{W}_\theta \widehat{V}_\theta} }}  \lesssim K^2 \left( \sqrt{\frac{d}{n}}  \bigvee \sqrt{\frac{\log(\delta^{-1})}{n} } \right)=: \psi_n(\delta),
$$
for some large enough absolute constant $C>0$.

Using \eqref{eq:gap_optim} again, we get $   \norm{C_{\widehat{U}_\theta \widehat{V}_\theta} }   \leq  \sqrt{\norm{C_{\widehat{U}_\theta \widehat{U}_\theta}} }\sqrt{\norm{C_{\widehat{V}_\theta \widehat{V}_\theta}}}   \leq 1 + \mathcal{E}_m$. We have the same upper bound for all 3 cross-covariances in operator norm.

Combining the last four displays, we get w.p.a.l. $1-\delta$ 
\begin{align*}
\norm{\covestUVnnest - \covestUWnnest\covestWVnnest}_F&\geq \norm{[\![\mathsf{\Sigma}_{X\ddot{Y}\cdot Z}]\!]_d}_{F}   - \sqrt{d}\left( \mathcal{E}_{m} +  (1+\mathcal{E}_m)\psi_n(\delta)+ 2(1+\mathcal{E}_m)^2\psi_n(\delta)  \right)\\
&\geq \norm{[\![\mathsf{\Sigma}_{X\ddot{Y}\cdot Z}]\!]_d}_{F}   - \sqrt{d}\left( \mathcal{E}_{m} +  3(1+\mathcal{E}_m)^2\psi_n(\delta)  \right).
\end{align*}

We recall that
$$
\norm{[\![\mathsf{\Sigma}_{X\ddot{Y}\cdot Z}]\!]_d}_{F}^2 = \sum_{j=1}^{d} \sigma_j^2,
$$
and $\mathsf{\Sigma}_{X\ddot{Y}\cdot Z}$ is assumed Hilbert-Schmidt. This implies that if we take $d$ large enough, then we capture enough signal, that is $\sum_{j=1}^d \sigma_j^2 \geq (\sum_{j=1}^{\infty} \sigma_j^2)/2 = \norm{ \pcovs{X}{Y}}_F^2/2 = \epsilon_n^2/2$. Hence the smoothness of the operator $\pcovs{X}{Y}$ measured through the decay rate of its eigenvalues provides a theoretical guidelines on the choice of $d$ and the necessary number of samples to obtain guarantees on the power of the test.


Recall that $c_\alpha$ is the quantile of the chi-square distribution $\chi^2_{d^2}$. Hence if
$$
\epsilon_n^2 > 2d \, \mathcal{E}_m^2 +  \frac{ 2C^2 K^2 ( d^2 + d \log(\delta^{-1}))  + 2c_\alpha }{n},
$$
then we get
$$
\mathbb{P}_{\mathcal{H}_1} \left( \That  > c_\alpha \right) \geq 1-\delta.
$$

Using the large-$d^2$ quantile approximation of chi-square distribution $\chi^2_{d^2}$: 
$
c_{\alpha} \approx d^2 + 2d z_{1-\alpha}, 
$
where $z_{1-\alpha}$ is the quantile of the standard normal. Hence the threshold on $\epsilon_n$ becomes :
$$
\epsilon_n^2 >2  d \, \mathcal{E}_m^2 +  \frac{ 2C^2 K^2 ( d^2 + d \log(\delta^{-1}))  + 2d^2 + 4 d z_{1-\alpha} }{n}.
$$


We use the normal approximation for the $\chi^2_{d^2}$ quantile:
\[
c_{\alpha} \approx d^2 + 2d\,z_{1-\alpha},
\]
where $z_{1-\alpha}$ is the $(1-\alpha)$ standard normal quantile. This gives the condition
\[
\epsilon_n^2 > d\,\mathcal{E}_m^2 + \frac{ C^2 K^2 \big( d^2 + d \log(\delta^{-1}) \big) + d^2 + 2d\,z_{1-\alpha} }{n}.
\]

This inequality balances three things:  
\begin{enumerate}
    \item the signal strength $\epsilon_n^2 = \|\Sigma_{X\ddot{Y}\cdot Z}\|_F^2$,  
\item the error from representation learning ($d\,\mathcal{E}_m^2$), and  
\item the statistical error from estimating covariances, which scales like $d^2/n$ (plus logarithmic terms).
\end{enumerate}

For the test to have non-vacuous power, we need the signal to dominate both the representation error and the estimation noise. In particular, if
\[
\epsilon_n^2 \gtrsim d\,\mathcal{E}_m^2 + \frac{d^2 + d\log(\delta^{-1})}{n},
\]
then the test achieves power at least $1-\delta$.


\section{Experimental Details}
\label{sec:experimental_details}

{\bf Implementation details.}
To parameterize $u_\theta$, $v_\theta$, and $w_\theta$, we used multilayer perceptrons (MLPs) with \texttt{n\_hidden\_\{u,v,w\}} hidden layers, each containing \texttt{layer\_size\_\{u,v,w\}} neurons. All networks used \texttt{Tanh} activation functions and Xavier initialization. Optimization was performed using Adam with default hyperparameters, except for the learning rate, which is reported in \Cref{tab:hparams_scit}. Training was carried out with varying \texttt{batch\_size} as reported in \Cref{tab:hparams_scit} for $400$ epochs. The inner model was warmed up for $100$ steps. During bi-level optimization, we alternated a single update of the inner model with a single update of the outer model. The dataset was split into 80\% training and 20\% test sets.

{\bf Dimension pruning.}
For added stability, all models were trained using a latent dimension of \texttt{output\_dim}. At test time, we computed a lower-rank test statistic by performing an SVD of the test-statistic matrix and retaining only the leading $\left\lfloor\texttt{perc\_dim\_prune} \times \texttt{output\_dim} \right\rfloor$ singular triplets.
The resulting statistic was evaluated using a corrected $\chi^2$ distribution with degrees of freedom equal to the retained (pruned) dimension.


{\bf Hyperparameter optimization.}
There is no standard protocol for performing hyperparameter optimization of conditional independence tests.
In this work, we either use established hyperparameter values or follow a shared tuning protocol with equal computational budgets for all compared methods. 
Only hyperparameters reported in the original papers and exposed in the code repositories were varied.
For example, \textbf{DGCIT} does not expose architectural hyperparameters it in its training algorithm, which limits its adaptivity and our evaluation.
We deliberately avoided altering or re-implementing standard code.

Given these considerations, hyperparameters were selected as follows.
For kernel-based methods, we employed a Gaussian kernel, with the length scale selected using the median heuristic, following prior work \citep{bellot2019gcit, shi2021doublegenerative, scetbon2022anasymptotic,Strobl2018}.
For \textbf{NNLSCIT}, we adopted the hyperparameters reported in the original paper for the same data-generating model \citep{li2023k}.
The methods \textbf{SpectralCIT}, \textbf{RCIT}, \textbf{GCIT}, and \textbf{DGCIT} were tuned using Weights \& Biases \citep{wandb} with \texttt{method} set to \texttt{bayes} and a budget of 30 trials over the hyperparameter grids reported in \Cref{tab:hparams_scit,tab:hparams_rcit,tab:hparams_gcit,tab:hparams_dgcit}. 
For \textbf{GCIT} and \textbf{DGCIT}, the listed hyperparameters correspond to those identified as most influential by the authors, while all other hyperparameters were fixed to the values used in the original papers \citep{bellot2019gcit, shi2021doublegenerative}.
\begin{equation*}
S_{\mathrm{val}}(m) = \widehat{\alpha}(m) - \widehat{\pi}(m),
\end{equation*}
where $\widehat{\alpha}$ and $\widehat{\pi}$ denote the empirical type I error and power of a model $m$, respectively.
The sample size was set to $N = 1000$.
For each trial, $\widehat{\alpha}$ and $\widehat{\pi}$ were estimated from 20 repetitions under both $\mathcal{H}_0$ and $\mathcal{H}_1$ for each conditioning dimension $d_Z \in \{50, 100, 150, 200, 250, 300\}$, and the resulting values were averaged across $d_Z$.

\begin{table}[h]
\centering
\caption{Hyperparameter grid for \textbf{SpectralCIT}.}
\begin{tabular}{@{}l l l@{}}
\toprule
\textbf{Hyperparameter} & \textbf{Values} & \textbf{Description} \\
\midrule
\texttt{output\_dim} & $2, 4, 6, 8, 10$ & SVD truncation rank $d$ \\
\texttt{perc\_dim\_prune}  & $\mathrm{Uniform}(0.85, 1)$ & Percentage of dimensions to prune \\
\texttt{n\_hidden\_\{u,v,w\}} & $1, 2, 3, 4$  & Number of hidden layers of $u_\theta, v_\theta, w_\theta$ \\
\texttt{layer\_size\_\{u,v,w\}} & $128, 256, 512$ & Layer size of $u_\theta, v_\theta, w_\theta$\\
\texttt{lr\_\{inner,outer\}} & $\mathrm{LogUniform}\!\left(3\!\times\! 10^{-5}, 1\!\times\! 10^{-2}\right)$ & Learning rate for inner and outer models \\
\texttt{reg\_str\_\{inner,outer\}} & $\mathrm{LogUniform}(1, 100)$ & Regularization strength for inner and outer models \\
\texttt{batch\_size} & $128, 256, 512, 1024$ & Batch size \\
\bottomrule
\end{tabular}
\label{tab:hparams_scit}
\end{table}

\begin{table}[h]
\centering
\caption{Hyperparameter grid for \textbf{RCIT}.}
\begin{tabular}{@{}l l l@{}}
\toprule
\textbf{Hyperparameter} & \textbf{Values} & \textbf{Description}\\
\midrule
\texttt{approx} & Lindsay-Pilla-Basak, Bootstrap & Null approximation scheme \\
\texttt{num\_f} & $50, 100, 150, 200$ & Number of Fourier features for $Z$\\
\texttt{num\_f2} & $5, 10, 15, 20$ & Number of Fourier features for $\Ddot{X}, X, Y$ \\
\bottomrule
\end{tabular}
\label{tab:hparams_rcit}
\end{table}

\begin{table}[h]
\centering
\caption{Hyperparameter grid for \textbf{GCIT}.}
\begin{tabular}{@{}l l l@{}}
\toprule
\textbf{Hyperparameter} & \textbf{Values} \\
\midrule
\texttt{lamda} & $0, 5, 10, 15, 20$ &  Regularization parameter $\lambda$\\
\texttt{statistic\_type} & MMD, PCC, DC, KS, RDC & Statistic \\
\bottomrule
\end{tabular}
\label{tab:hparams_gcit}
\end{table}

\begin{table}[h]
\centering
\caption{Hyperparameter grid for \textbf{DGCIT}.}
\begin{tabular}{@{}l l l@{}}
\toprule
\textbf{Hyperparameter} & \textbf{Values} \\
\midrule
\texttt{batch\_size} & 16, 32, 64 & Batch size \\
\texttt{b}  & 30, 35, 40, 45, 50 & Number of transformation functions $B$ \\
\texttt{j} & 500, 750, 1000, 1250 & Number of bootstrap samples $J$ \\
\bottomrule
\end{tabular}
\label{tab:hparams_dgcit}
\end{table}

{\bf Computational resources.}
All experiments were conducted on a machine running Ubuntu 22.04.5 LTS with Linux kernel 5.15.0-134-generic. The system was equipped with two Intel Xeon E5-2695 v4 CPUs (36 physical cores total) and 251 GB of system memory. GPU acceleration was provided by seven NVIDIA GeForce GTX 1080 Ti GPUs, each with 11 GB of memory. The software stack consisted of Python 3.12.9, PyTorch 2.6.0, and CUDA 12.4.

{\bf Hyperparameter sensitivity.} %
We evaluate the robustness of \textbf{SpectralCIT} with respect to its main hyperparameters: the truncation dimension $d$, the learning rate $\eta$, and the regularization strength $\gamma$.
For the truncation dimension, we vary $d$ around the nominal choice $d_\ast = 10$ used in the post-nonlinear benchmark (cf. Fig.~\ref{fig:yang_benchmark}). As shown in Fig.~\ref{fig:ablation_d}, performance stabilizes for $d \geq 8$, with well-controlled Type I error and consistently high power across conditioning dimensions.
For optimization, we scale the learning rate as $\eta = \alpha \eta_\ast$, where $\eta_\ast = (\eta_{\ast,\mathrm{inner}}, \eta_{\ast,\mathrm{outer}}) = (3\times 10^{-5}, 2.1\times 10^{-3})$ is the reference value from Fig.~\ref{fig:yang_benchmark}, and $\alpha \in \{0.1, 0.3, 1, 3, 10\}$. Figure~\ref{fig:ablation_lr} shows that Type I error remains controlled for $\alpha \geq 0.3$, with only minor degradation in power at extreme values, indicating limited sensitivity once $\eta$ is sufficiently large.
Similarly, we vary the regularization strength as $\gamma = \alpha \gamma_\ast$, with $\alpha \in \{0.1, 0.3, 1, 10\}$ and $\gamma_\ast = (\gamma_{\ast,\mathrm{inner}}, \gamma_{\ast,\mathrm{outer}}) = (3.3, 1.9)$. As shown in Fig.~\ref{fig:ablation_reg_str}, both Type I error and power remain stable across this range.
Overall, these results indicate that \textbf{SpectralCIT} is robust to moderate perturbations of $d$, $\eta$, and $\gamma$, with a broad range of values yielding comparable performance.%

\begin{figure}[h]
    \centering
    \begin{minipage}[b]{0.48\textwidth}
        \centering
        \includegraphics[width=\textwidth]{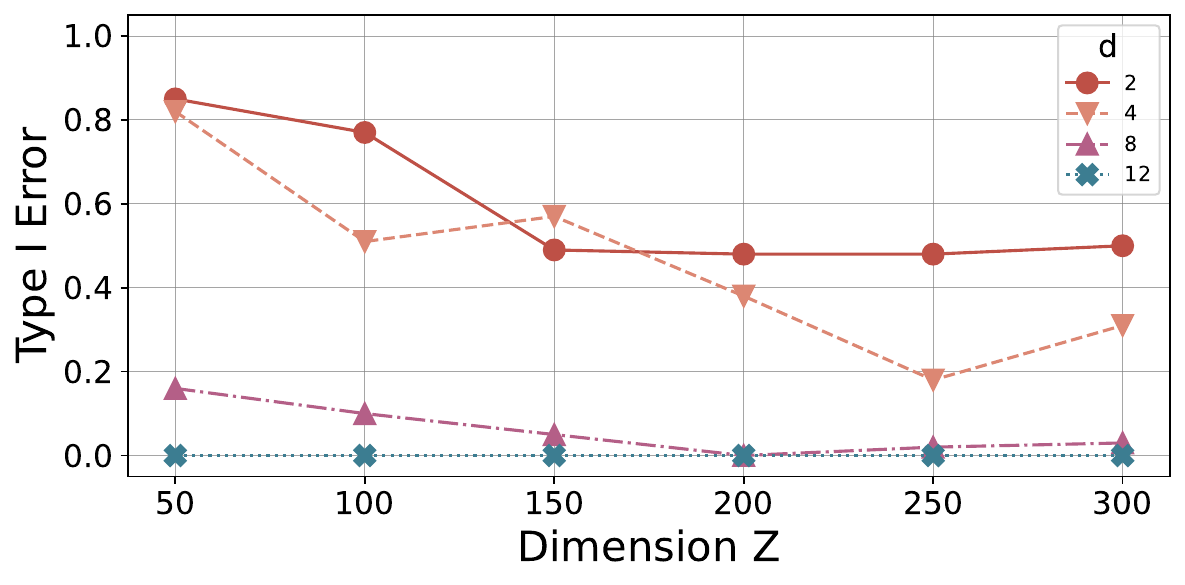}
    \end{minipage}
    \hfill
    \begin{minipage}[b]{0.48\textwidth}
        \centering
        \includegraphics[width=\textwidth]{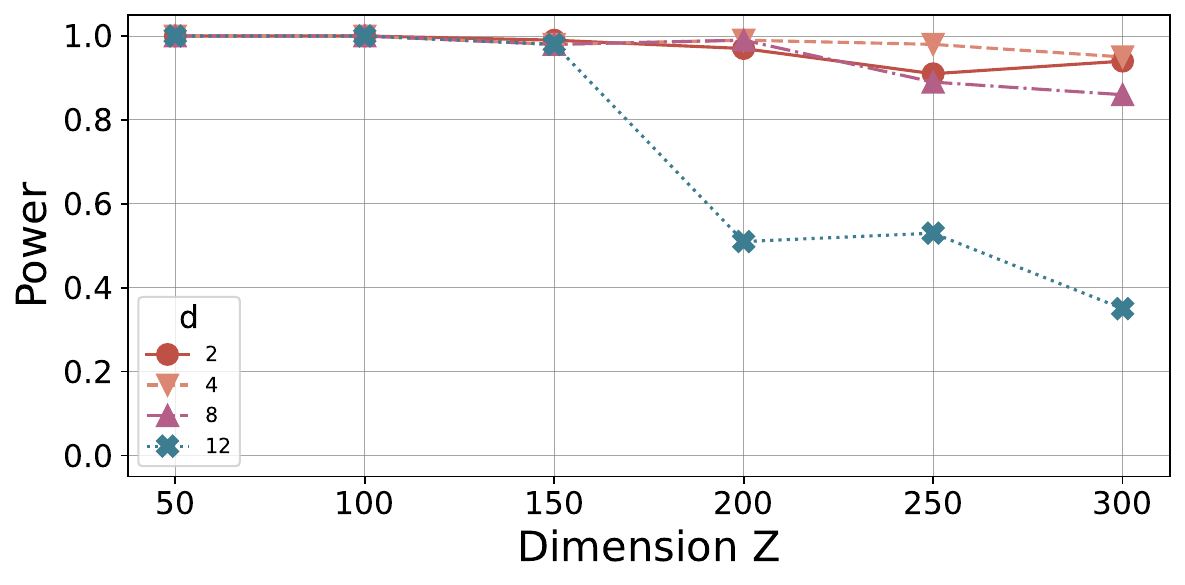}
    \end{minipage}
    \caption{Type I error and power of our method (\textbf{SpectralCIT}) varying truncation dimension $d$ around $d_*=10$.}
    \label{fig:ablation_d}
\end{figure}

\begin{figure}[h]
 \centering
    \begin{minipage}[b]{0.48\textwidth}
        \centering
        \includegraphics[width=\textwidth]{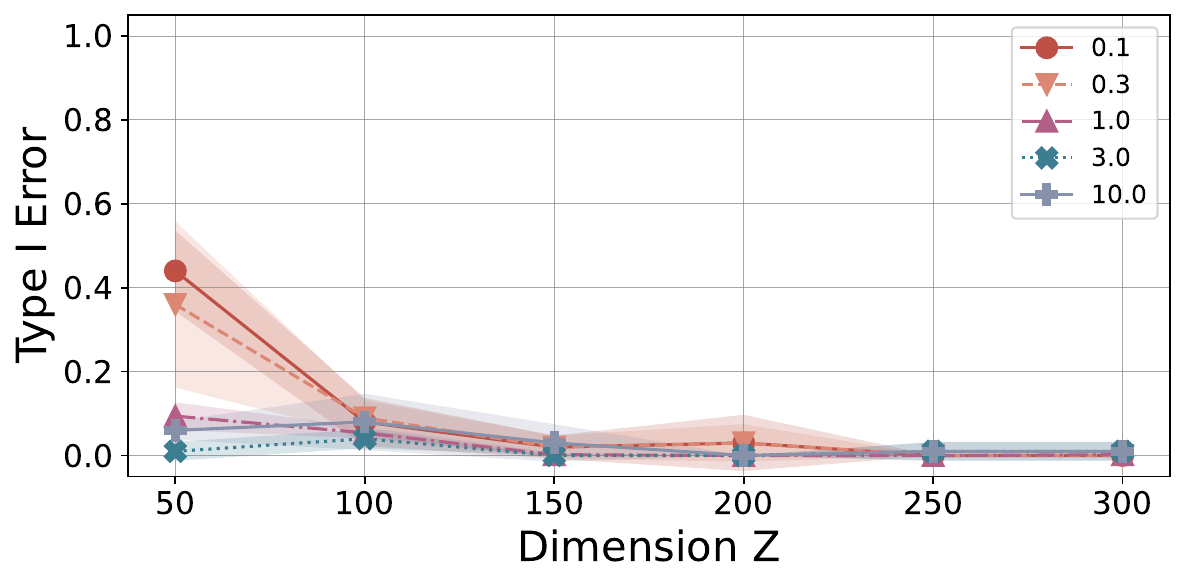}
    \end{minipage}
    \hfill
    \begin{minipage}[b]{0.48\textwidth}
        \centering
        \includegraphics[width=\textwidth]{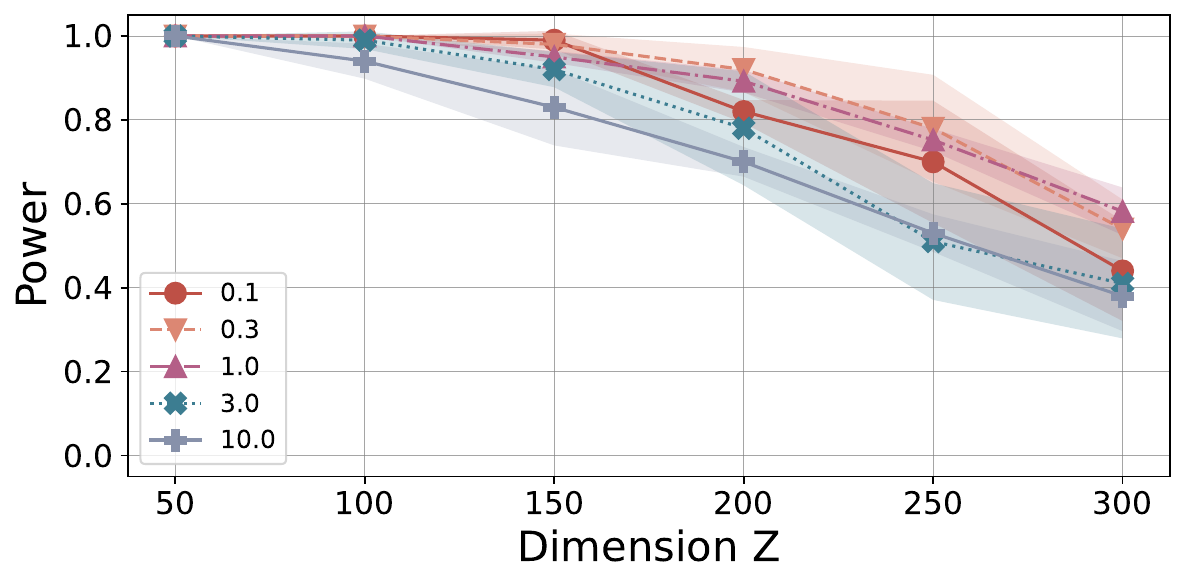}
    \end{minipage}
    \caption{Type I error and power of our method (\textbf{SpectralCIT}) varying the learning rate $\eta$ as $\alpha \eta_*$, where $\alpha \in [0.1, 0.3, 1, 3, 10]$.}
    \label{fig:ablation_lr}
\end{figure}

\begin{figure}[h]
 \centering
    \begin{minipage}[b]{0.48\textwidth}
        \centering
        \includegraphics[width=\textwidth]{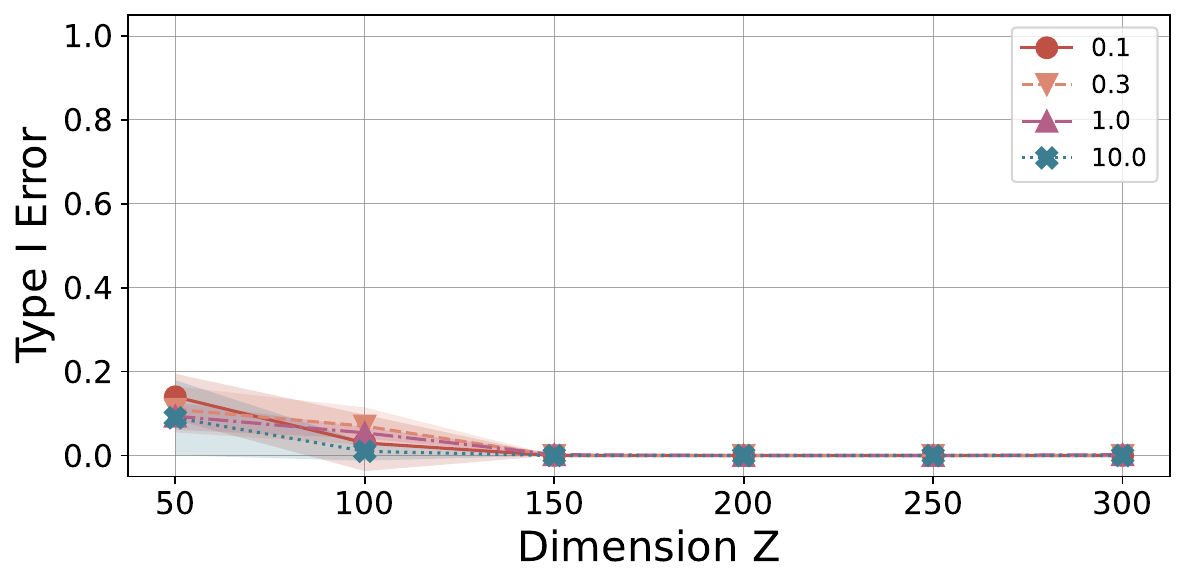}
    \end{minipage}
    \hfill
    \begin{minipage}[b]{0.48\textwidth}
        \centering
        \includegraphics[width=\textwidth]{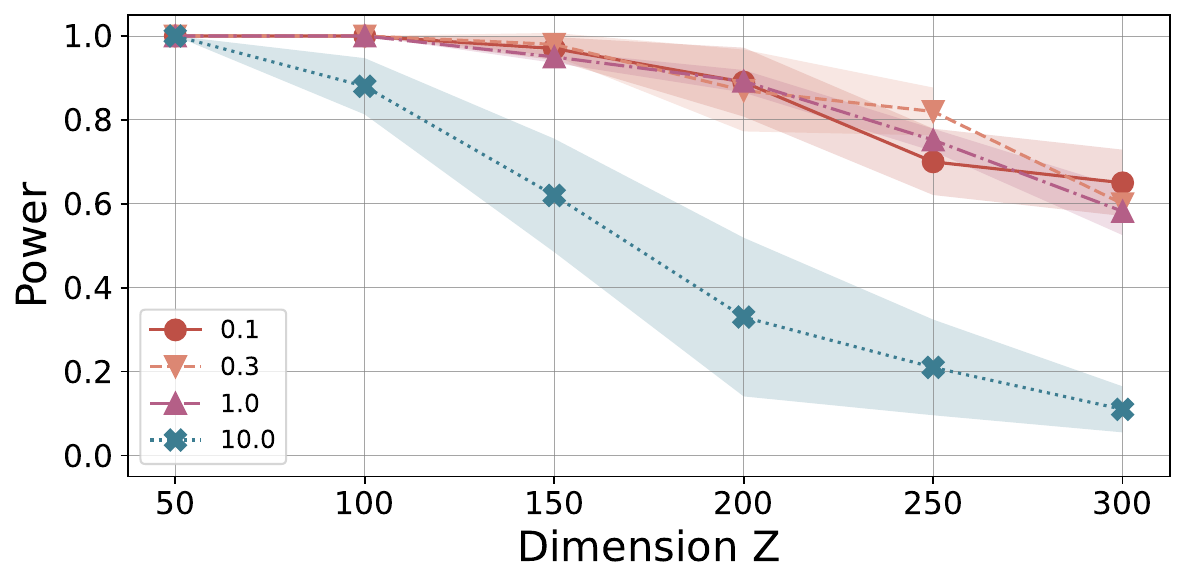}
    \end{minipage}
    \caption{Type I error and power of our method (\textbf{SpectralCIT}) varying the regularization strength $\gamma$ as $\alpha \gamma_*$, where $\alpha \in [0.1, 0.3, 1, 10]$.}
    \label{fig:ablation_reg_str}
\end{figure}

{\bf Computational complexity.}
We further examine the computational cost of \textbf{SpectralCIT} relative to existing kernel-based CI tests.
Experiments are conducted on the post-nonlinear model of \citet{shi2021doublegenerative}, using the same setup as in \Cref{fig:dgcit_kernels}. Each configuration is repeated 10 times, and we report mean and standard deviation results in log seconds.
We fix the sample size to $N = 1000$ and vary the conditioning dimension $d_Z \in \{50, 100, 200, 300\}$.
As shown in \Cref{fig:computational_cost}, \textbf{SpectralCIT} is consistently about $2\times$ faster than \textbf{KCIT} across all values of $d_Z$.
This behavior is expected: although \textbf{SpectralCIT} incurs an additional cost for representation learning, this cost is amortized, and the overall complexity is dominated by low-rank operations in the truncation dimension $d$, rather than by kernel matrix computations scaling with $N$.
While \textbf{RCIT} achieves lower runtime, it fails to maintain Type I error control in high dimensions (cf. \Cref{fig:benchmark_rebuttal_ci}).
Overall, \textbf{SpectralCIT} provides the best trade-off between computational efficiency and validity.

\begin{figure}[h]
 \centering
\includegraphics[width=0.6\textwidth]{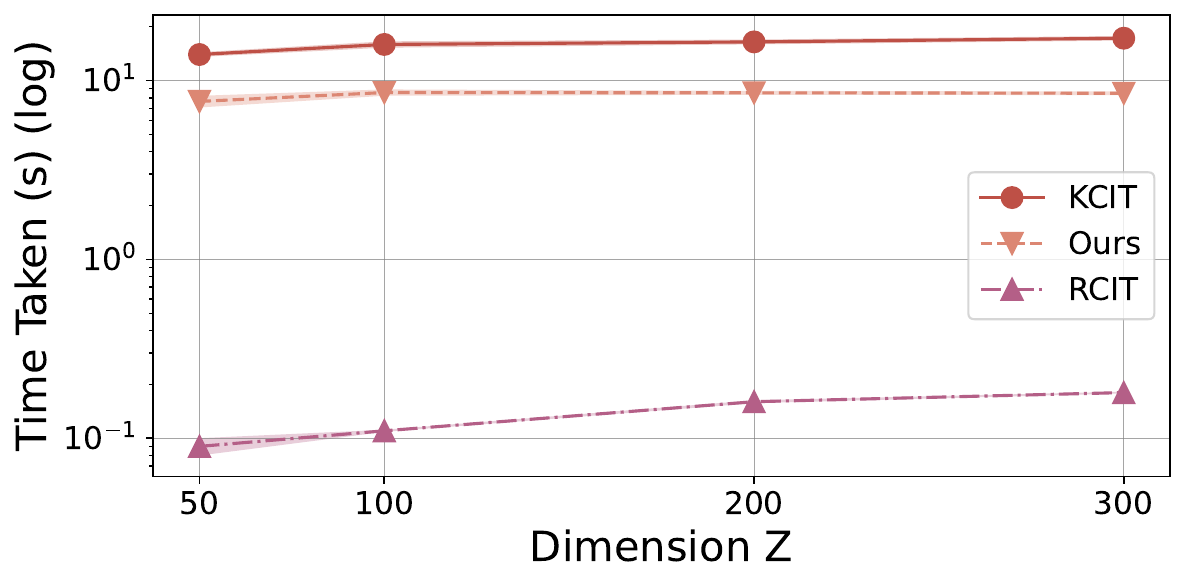}
\caption{Computational time of our method in log-seconds against other kernel methods.}
\label{fig:computational_cost}
\end{figure}

{\bf New high-dimensional benchmark.} %
To strengthen the empirical evaluation, we introduce a high-dimensional benchmark in which the dimensions of $X$ and $Y$ scale with that of $Z$, and the shared noise has low variance (standard deviation $0.15$). We consider the data-generating process
\begin{align*}
    \mathcal{H}_0:  X &= \sin(Z + \varepsilon_X), \quad Y = \cos(Z + \varepsilon_Y), \\
    \mathcal{H}_1: X &= \sin(Z + \varepsilon_X) + \eta, \quad
    Y = \cos(Z + \varepsilon_Y) + \eta,
\end{align*}
where $Z\sim \mathcal{N}(0, \texttt{str\_Z}^2 I_{d_z})$, $\varepsilon_X \sim \mathcal{N}(0, \texttt{noise\_str}^2 I_{d_z})$, $\varepsilon_Y \sim \mathcal{N}(0, \texttt{noise\_str}^2 I_{d_z})$, and $\eta \sim \mathcal{N}(0, \texttt{str\_con\_dep}^2I_{d_z})$ are independent.
We set $\texttt{str\_Z} =0.1$, $\texttt{noise\_str} = 0.25$, and $\texttt{str\_cond\_dep} = 0.15$.
For each dimension, we run each method over 500 repetitions and report the mean and standard error of the Type I error and power (see \Cref{fig:benchmark_rebuttal_ci}).
Hyperparameters are identical to those in the original benchmark.In this more challenging regime, competing methods exhibit loss of validity, whereas \textbf{SpectralCIT} remains valid, albeit slightly conservative.

\begin{figure}[h]
    \centering
    \begin{minipage}[b]{0.48\textwidth}
        \centering
        \includegraphics[width=\textwidth]{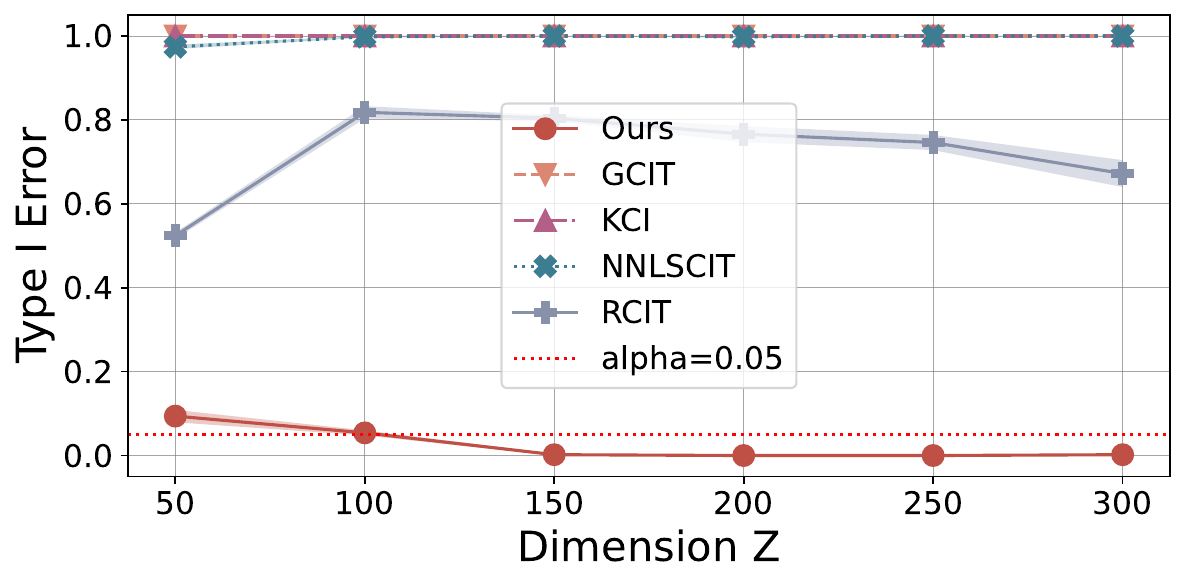}
    \end{minipage}
    \hfill
    \begin{minipage}[b]{0.48\textwidth}
        \centering
        \includegraphics[width=\textwidth]{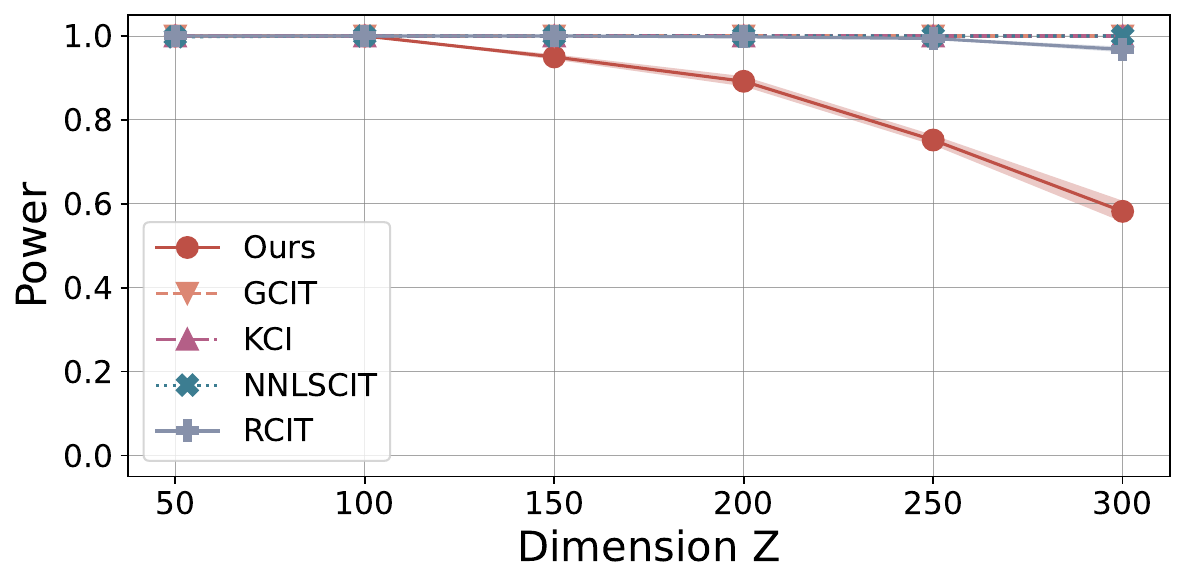}
    \end{minipage}
    \caption{Type I error and power of our method (\textbf{SpectralCIT}) compared to state-of-the-art CI tests across varying dimensionality of the conditioning set $Z$ on new high-dimensional benchmark. We repeated each experiment 5 times, reporting mean and standard deviation.}
    \label{fig:benchmark_rebuttal_ci}
\end{figure}

{\bf On the role of signal strength.}
We present a final ablation showcasing that the dominant factor governing the power of our method is the signal strength under the alternative, rather than the dimensionality of $Z$.%

Concretely, using the same nonlinear equations as in the high-dimensional benchmark, but now with $d_Z = 3$, we isolate the effect of signal strength in a controlled low-dimensional setting.
We vary the signal strength parameter $\texttt{str\_cond\_dep} \in \{0.05, 0.15, 0.5\}$ while keeping $\texttt{str\_Z}$ and $\texttt{noise\_str}$ fixed as before.
\Cref{fig:ablation_signal_str_benchmark_rebuttal_ci} shows that, as the signal increases, the power of the test improves substantially, while Type I error remains controlled throughout. In particular, weak-signal regimes yield low power, whereas moderate-to-strong signal regimes recover high power.

These observations are consistent with our theoretical guarantees (cf. \Cref{thm:power}), which predict that power is governed by the signal strength relative to representation error. This experiment therefore provides empirical support that signal strength, rather than dimensionality, is the primary driver of performance in this setting.

\begin{figure}[H]
    \centering
    \begin{minipage}[b]{0.48\textwidth}
        \centering
        \includegraphics[width=\textwidth]{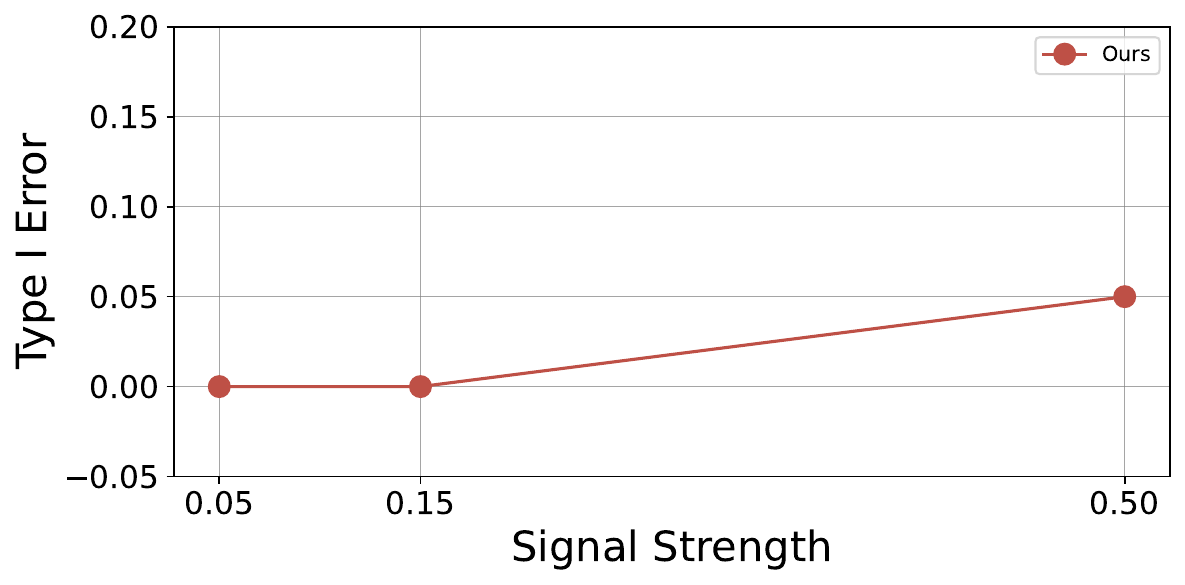}
    \end{minipage}
    \hfill
    \begin{minipage}[b]{0.48\textwidth}
        \centering
        \includegraphics[width=\textwidth]{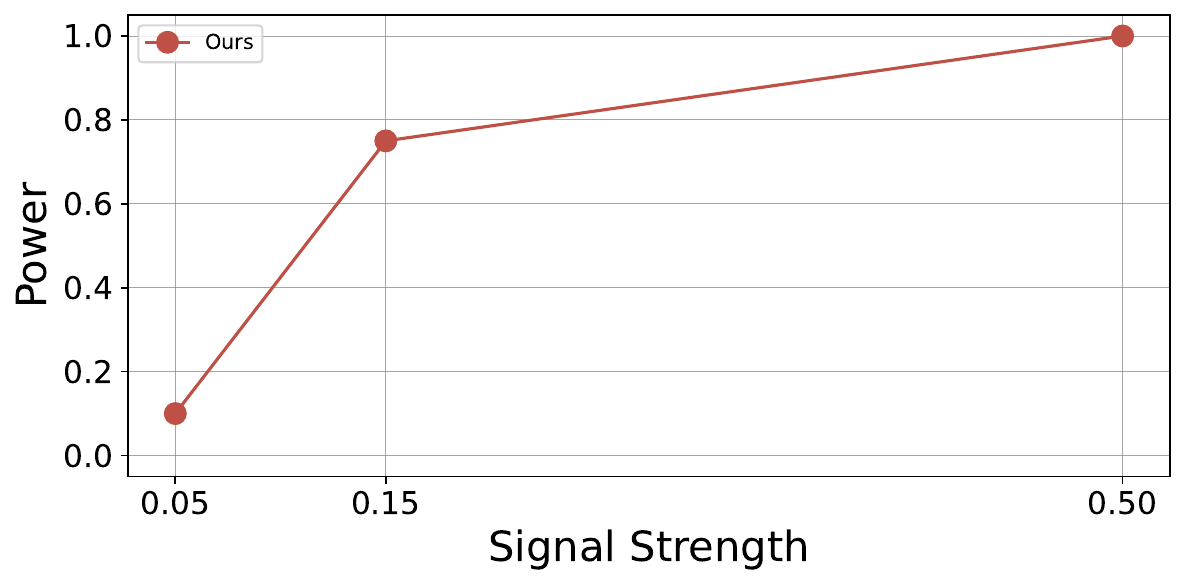}
    \end{minipage}
    \caption{Type I error and power of our method (\textbf{SpectralCIT}) varying signal strength $\texttt{str\_cond\_dep} \in \{0.05, 0.15, 0.5\}$ on data model with $d_Z =3$.}
    \label{fig:ablation_signal_str_benchmark_rebuttal_ci}
\end{figure}

{\bf CI queries on a Sachs-inspired model.} %
We conduct a small-scale experiment to evaluate \textbf{SpectralCIT} as a conditional independence (CI) oracle within the PC algorithm. The goal is not a full PC benchmark, but to assess reliability in the regime of large conditioning sets, where kernel-based methods typically degrade. The setup is based on the 7-node Sachs-inspired DAG shown in \Cref{fig:sachs_dag} (cf. \citep{Sachs2005}), with post-nonlinear structural equations as follows.
Under $\mathcal{H}_0$, there are no additional edges beyond those in $\mathcal{G}$:%
\begin{align}%
Z_1 &\sim \mathcal{N}(0, I_{d_z}), \nonumber \\
X_i &= \tanh(\Bar{Z_1} + \varepsilon_i),
     \quad i = 1,2, \nonumber \\
X_j &= A_j Z_1 + \varepsilon_j, 
     \quad j = 3,4, \nonumber \\
Y_2 &= \cos(\Bar{X_3} + \Bar{X_4} + \varepsilon_{Y_2}),
     \nonumber
\end{align}%
where all noise terms $\varepsilon_\cdot \sim \mathcal{N}(0, 0.25)$  are independent.
Under $\mathcal{H}_1$, we plant one additional edge $X_1 - Y_2$ to create  a spurious dependency between the two downstream  branches by adding a shared noise $\eta \sim \mathcal{N}(0, 0.25)$ between $X_1$ and $Y_2$.%

We evaluate three CI queries representative of those a constraint-based algorithm would encounter, under both $\mathcal{H}_0$ and $\mathcal{H}_1$. Under $\mathcal{H}_1$, we plant the direct edge $X_1 \to Y_2$. Results are in \Cref{tab:sachs-small-ci-dz100}. Type I error is $0.00$ on all three queries. For $X_1 \perp Y_2 \mid X_3, X_4$ and $X_1 \perp Y_2 \mid X_3, X_4, Z_1$, the planted edge renders $\mathcal{H}_1$ non-trivial: power is $1.00$ and $0.75$ respectively, the latter at $d_z = 100$ where adding the hub to the conditioning set makes detection harder. For $Y_1 \perp Y_2 \mid X_1, X_2, X_3, X_4$, the planted edge does not alter the null, so $\mathcal{H}_1$ coincides with $\mathcal{H}_0$ and only Type I error is relevant.%

While limited in scope, these results support the relevance of SpectralCIT to constraint-based causal discovery, particularly in the large-conditioning-set regime.%

\begin{figure}[h]
\centering
\begin{tikzpicture}[
    node distance=1.8cm,
    every node/.style={
        circle, 
        draw, 
        thick,
        minimum size=0.9cm,
        font=\small
    },
    every edge/.style={
        draw, 
        thick, 
        -stealth
    }
]

\node (Z1)              {$Z_1$};
\node (X1) [below left=1.5cm and 2.2cm of Z1]  {$X_1$};
\node (X2) [below left=1.5cm and 0.7cm of Z1]  {$X_2$};
\node (X3) [below right=1.5cm and 0.7cm of Z1] {$X_3$};
\node (X4) [below right=1.5cm and 2.2cm of Z1] {$X_4$};
\node (Y1) [below=1.5cm of X2]                 {$Y_1$};
\node (Y2) [below=1.5cm of X3]                 {$Y_2$};

\draw[->] (Z1) -- (X1);
\draw[->] (Z1) -- (X2);
\draw[->] (Z1) -- (X3);
\draw[->] (Z1) -- (X4);

\draw[->] (X1) -- (Y1);
\draw[->] (X2) -- (Y1);

\draw[->] (X3) -- (Y2);
\draw[->] (X4) -- (Y2);

\draw[->, dashed, red, thick] (X1) 
    to[out=-60, in=150] 
    node[midway, above, 
         draw=none, 
         fill=none,
         text=red, 
         font=\footnotesize] {$\mathcal{H}_1$ only} 
    (Y2);

\end{tikzpicture}
\caption{
Sachs-inspired DAG on 7 nodes. $Z_1$ acts as a central hub (analogous to PKA), with four children $X_1, \ldots, X_4$. 
The nodes $Y_1$ and $Y_2$ are downstream outcomes with two parents each, creating v-structures. 
The dashed red edge $X_1 \to Y_2$ is present only under the alternative hypothesis $\mathcal{H}_1$.
}
\label{fig:sachs_dag}
\end{figure}

\begin{table}[h]
\centering
\caption{Results for three CI queries over the Sachs-inspired causal graph with $d_z = 100$.}
\label{tab:sachs-small-ci-dz100}
\begin{tabular}{@{}lll@{}}
\toprule
CI query & Type I error ($\mathcal{H}_0$) & Power ($\mathcal{H}_1$) \\
\midrule
$X_1 \perp\!\!\!\perp Y_2 \mid \{X_3, X_4\}$ & $0.00$ & $1.00$ \\
$X_1 \perp\!\!\!\perp Y_2 \mid \{X_3, X_4, Z_1\}$ & $0.00$ & $0.75$ \\
$Y_1 \perp\!\!\!\perp Y_2 \mid \{X_1, X_2, X_3, X_4\}$ & $0.00$ & -- \\
\bottomrule
\end{tabular}
\end{table}


\end{document}